\newif\ifjmiv    \jmivfalse 
    \icmltitlerunning{Convex Histogram-Based Joint Image Segmentation with Regularized Optimal Transport Cost}
\newcommand{\U}{\mathbf{1}}
\renewcommand{\O}{\mathbf{0}}
\newcommand \Idc {\mathbbm{1}}
\newcommand{\idc}[1]{\mathbbm{1}_{\{#1\}}}
\newcommand{\R}{\RR}
\newcommand{\MK}{\textbf{MK}}
\newcommand{\cmt}[1]{\emph{\red #1}}
\newcommand{\A}{{\cal A}}
\newcommand{\B}{{\cal B}}
\newcommand{\C}{{\cal C}}
\newcommand{\F}{{\cal F}}
\renewcommand{\H}{{\cal H}}
\renewcommand{\L}{{\cal L}}
\renewcommand{\S}{{\Delta}} 
\newcommand{\PS}{{\cal S}} 
\newcommand{\X}{{\cal X}}
\newcommand{\Trp}{{\mathsf{T}}} 
\newcommand{\Diff}{{D}} 
\newcommand{\Dist}{{S}} 
\newcommand{\lbd}{\lambda}
\newcommand{\sig}{\sigma}
\newcommand{\TV}{\mathop{{TV}}} 
\newcommand{\Proj}{{\text{Proj}}} 
\newtheorem{proposition}{Proposition}
\newtheorem{corollary}{Corollary}
\newcommand{\smartqed}{\qed}
\begin{document}

\ifjmiv

    \title{Convex Histogram-Based Joint Image Segmentation\\ with Regularized Optimal Transport Cost%
    }
    \subtitle{}
    
    
    \author{
    	Nicolas Papadakis 
    	\and
            Julien Rabin 
    }
    
    
    \institute{N. Papadakis \at
                  IMB, UMR 5251, Universit\'e de Bordeaux, F-33400 Talence, France\\
                  Tel.: +33 (0) 5 40 00 21 16\\
                  Fax: +33 (0) 5 40 00 21 23\\
                  \email{nicolas.papadakis@math.u-bordeaux.fr}           
               \and
               J. Rabin \at
               Normandie Univ, UNICAEN, ENSICAEN, CNRS, GREYC\\
               14000 Caen, France\\
                    Tel.: +33 (0)  2 31 45 29 23\\
                  Fax: +33 (0) 2 31 45 26 98\\
                   \email{julien.rabin@unicaen.fr}     
    }
    
    \date{Received: date / Accepted: date}
    
    \maketitle

\else
    
    \twocolumn[
    \icmltitle{Convex Histogram-Based Joint Image Segmentation\\ with Regularized Optimal Transport Cost}
    
    \icmlauthor{Nicolas Papadakis}{nicolas.papadakis@math.u-bordeaux.fr}
    \icmladdress{IMB, UMR 5251, Universit\'e de Bordeaux, F-33400 Talence, France}
    \icmlauthor{Julien Rabin}{julien.rabin@unicaen.fr}
    \icmladdress{Normandie Univ, UNICAEN, ENSICAEN, CNRS, GREYC,
               14000 Caen, France\\}
    
    \icmlkeywords{Optimal transport, Wasserstein distance, Sinkhorn distance, image segmentation, convex optimization}
    
    \vskip 0.3in
    ]
\fi

\begin{abstract}
We investigate in this work a versatile convex framework for multiple image segmentation, relying on the regularized optimal mass transport theory.
In this setting, several transport cost functions are considered and used to match statistical distributions of features.
In practice, global multidimensional histograms are estimated from the segmented image regions, 
and are compared to referring models that are either fixed histograms given \emph{a priori}, or directly inferred in the non-supervised case. 
The different convex problems studied are solved efficiently using primal-dual algorithms.
The proposed approach is generic and enables multi-phase segmentation as well as co-segmentation of multiple images.

\ifjmiv
\keywords{Optimal transport \and Wasserstein distance \and Sinkhorn distance \and image segmentation \and  convex optimization }
\fi
\end{abstract}
\section{Introduction}

\paragraph{Optimal transport in imaging}

Optimal transport theory has received a lot of attention during the last decade as it provides a powerful framework to address problems which embed statistical constraints. In contrast to most distances from information theory (e.g. the Kullback-Leibler divergence), optimal transport takes into account the spatial location of the density mode and define robust distances between empirical distributions.
 The geometric nature of optimal transport, as well as the ability to compute optimal displacements between densities through the corresponding transport map, make this theory progressively mainstream in several applicative fields. 
In image processing, the warping provided by the optimal transport has been used for video restoration~\cite{Delon-midway}, color transfer~\cite{Pitie07}, texture synthesis~\cite{Jungpeyre_wassestein11}, optical nanoscopy~\cite{brune20104d} and medical imaging registration~\cite{AHT04}. It has also been applied to interpolation in computer graphics~\cite{Bonneel-displacement,2015-solomon-siggraph} and surface reconstruction in computational geometry~\cite{digne-reconstruction}. 

The  optimal transport distance has also been successfully used in various image processing and machine learning tasks, image retrieval~\cite{Rubner2000,Pele-ICCV}, image segmentation \cite{bresson_wasserstein09}, image decomposition \cite{TV_Kantorovich_SIAM14} or texture synthesis~\cite{Rabin_mixing_SSVM11}.

Some limitations have been also shown and partially addressed, such as time complexity~\cite{Cuturi13,Bonneel_radon15,Schmitzer_jmiv16}, regularization and relaxation of the transport map~\cite{Ferradans_siims14} for imaging purposes.

\paragraph{Image segmentation}

Image segmentation has been the subject of active research for more than 20 years (see \emph{e.g.}  \cite{Aubert_Kornprobst02,Cremers07} and references therein). 
For instance, we can refer to the seminal work of Mumford and Shah \cite{Mumford_Shah89}, or to its very popular approximation with level sets developed by Chan and Vese in \cite{Chan_Vese01}. This last work provides a very flexible algorithm to segment an image into two homogeneous regions, each one being characterized by its mean gray level value.

In the case of textured images, a lot of extensions of \cite{Chan_Vese01} have been proposed to enhance the mean value image segmentation model by considering other kind of local information. For instance, local histograms are used in \cite{Zhu95,bresson_wasserstein09}, Gabor filters in \cite{Chan_Vese02},  wavelet packets in \cite{Aujol03} and textures are characterized thanks to the structure tensor in \cite{Brox03,Rousson03}. 

Advanced statistical based image segmentation models using first parametric models (such as the mean and variance), and then empirical distributions combined with adapted statistical distances such as the Kullback-Leibler divergence, have been thoroughly studied in the literature. 
One can for instance refer to the works in~\cite{Aubert02,Kim05,Brox06,Herbulot06} that consider the global histograms of the regions to segment and are also based on the Chan and Vese model \cite{Chan_Vese01}. 
It is important to notice that this class of approaches involves complex shape gradient computations  for the level set evolution equation.
Moreover, as these methods all rely on the evolution of a level set function \cite{Osher88}, it leads to non-convex methods that are sensitive to the initialization choice and only a local minimizer of the associated energy is computed. 

Recently, convexification methods have been proposed to tackle this problem, as in \cite{Chambolle042,Nikolova06,CD08,Berkels09,Pock10,Chambolle11,Brown11,Yildi12}. The original  Chan and Vese model  \cite{Chan_Vese01} can indeed be convexified, and a global solution can be efficiently computed, for instance with a primal-dual algorithm. By means of the coarea formula, a simple thresholding of this global solution provides a global minimizer of the original non-convex problem. 
The multiphase segmentation model based on level sets \cite{Chan_Vese02} can also be treated with convexification methods.
However, the thresholding of the estimated global minima does not anymore ensure to recover a global optimal multiphase segmentation~\cite{Pock09}.
Notice that such approaches have not been developed yet for global histogram segmentation with length boundary regularization.

Other models as in \cite{rother2006,vicente2009joint,ayed2010graph,gorelick2012segmentation,Yuan12}  use graph-based methods and max-flow formulations \cite{punithakumar2012convex} in order to obtain good minima without level-set representation. 
Nevertheless, these approaches are restricted to bin-to-bin distances (for instance $\ell_1$~\cite{rother2006}, Bhattacharyya~\cite{Yuan12}, {$\ell_2$~\cite{TVL2_segmentation} or $\chi^2$~\cite{object_cosegmentation}}) between features' histograms
that are not robust enough to deal with non-uniform quantification or data outliers.

\paragraph{Optimal Transport and image segmentation}
The use of Optimal Transport for image segmentation has been first investigated in~\cite{bresson_wasserstein09} for comparing local $1D$ histograms. In \cite{PeyreWassersteinSeg12,mendoza}, active contours approaches using the Wasserstein distance for comparing global multi-dimensional histograms of the region of interest have  been proposed. Again, these non-convex active contours methods are sensitive to the initial contour. Moreover, their computational cost is very large, even if they include some approximations of the Wasserstein distance as in~\cite{PeyreWassersteinSeg12}. 

In order to deal with global distance between histograms while being independent of any initialization choice,
convex formulations have been designed~\cite{papa_aujol,Swoboda13}. In  \cite{papa_aujol}, a $\ell_1$ norm between cumulative histograms is considered and gives rise to a fast algorithm. This is related to optimal transport only for 1D histograms of grayscale images. The authors of \cite{Swoboda13}  proposed to rely on the Wasserstein distance. In order to be able to optimize the corresponding functional, it requires to make use of sub-iterations to compute the proximity operator of the Wasserstein distance, which use is restricted to low dimensional histograms.
Hence, we considered in \cite{Rabin_ssvm15} a fast and convex approach involving regularization of the optimal transport distance through the entropic regularization of \cite{Cuturi13}.
In this paper we investigate in detail this regularized model and look at its extension to multi-phase segmentation.

\paragraph{Co-segmentation}
As already proposed in \cite{Swoboda13}, the studied convex framework can be extended to deal with the unsupervised co-segmentation of two images. The problem of co-segmentation \cite{Vicente_co-seg_eccv10} consists in segmenting simultaneously multiple images that contain the same object of interest without any prior information.
When the proportion between the size of the object and the size of the image is the same in all images, the model of \cite{Swoboda13} can be applied. 
It aims at finding regions in different images having similar color distributions.
However, this model is not suited for cases where the scale of the object vary. 
In the literature, state-of-the art approaches rely on graph representation. They are able to deal with small scale changes \cite{Rubio} or large ones by considering global information on image subregions \cite{Hochbaum} pre-computed with dedicated algorithms. 
Notice that convex optimization algorithms involving partial duality of the objective functional have been used for co-segmentation based on local features \cite{Joulin}. Such approach is able to deal with scale change of objects but it relies on high dimensional problems scale with $O(N^2)$, where $N$ is the total number of pixels. 

The use of robust optimal transport distances within a low dimensional formulation for the global co-segmen\-tation of objects of different scales is thus an open problem that is addressed in this paper. 

\paragraph{Contributions}
The global segmentation models presen\-ted in this paper are  based on the convex formulation for two-phase image segmentation of~\cite{papa_aujol} involving $\ell_1$ distances between histograms.
Following \cite{Swoboda13,Rabin_ssvm15}, we consider the use of Wasserstein distance for global segmentation purposes.
As in \cite{Rabin_ssvm15}, we rely on  the entropic regularization~\cite{Cuturi13,CuturiDoucet14} of optimal transport distances in order to deal with accurate discretizations of histograms.
Hence, this paper shares some common features with the recent work of \cite{CuturiPeyre_smoothdualwasserstein} in which the authors investigate the use of the Legendre-Fenchel transform of regularized transport cost for imaging problems.

With respect to the preliminary version of this work presented in a conference \cite{Rabin_ssvm15}, the contributions of this paper are the following:
\begin{itemize}
\item we give detailed proofs of the computation of the  functions and operators involved by the entropic regularization of optimal transport between non-normalized histograms.
\item we generalize the framework to the case of multi-phase segmentation in order to find a partition of the images with respect to several priors;
\item we provide numerous experiments exhibiting the properties of our framework;
\item we extend our model to the co-segmentation of multiple images. Two convex models are proposed. The first one is   able to co-segment an object with constant size in two images for general ground costs. The second one  can deal with different scales of a common object contained in different images for a specific ground cost.
\end{itemize}
This paper is also closely related to the framework proposed in \cite{Swoboda13}. With respect to this method, our contributions are:
\begin{itemize}
\item  the use of regularized optimal transport distances for dealing with high dimensional histograms;
\item  the generalization of the framework to  multi-phase segmentation;
\item  the definition of co-segmentation model for more than $2$ images  dealing with scale changes of objects.
\end{itemize}

\section{Convex histogram-based image segmentation}

\subsection{Notation and definitions}

We consider here vector spaces equipped with the Euclidean inner product $\dotp{.\,}{.}$ and the $\ell_2$ norm
$\lVert . \rVert = \sqrt{\dotp{.\,}{.}}$.
The conjugate linear operator of $A$ is denoted by $A^*$ and satisfies $\langle A.,. \rangle = \langle .,A^* . \rangle$.
We denote as $\U_n \text{ and } \O_n \in \R^n$ the $n$-dimensional vectors filled with ones and zeros respectively, $x^\Trp $ the transpose of $x$, 
while $\text{Id}$ stands for the identity operator.
The concatenation of the vectors $x$ and $y$ into a vector is denoted $(x;y)$.
Operations and functions on vectors and matrices are meant component-wise, such as inequalities:  
\eq{
	X \le Y \quad \Leftrightarrow \quad X_{ij} \le Y_{ij} \quad \forall \,i,j
}
or exponential and logarithm functions:
\eq{
	\left(\exp X \right)({i,j}) = \exp X_{i,j}
	\qquad
	\log X= \left(\log X_{i,j} \right)_{i,j}
	.
}

We refer to $\ell_p$ norm as ${\lVert x \rVert }_{p} = \left( \sum_i {\lvert x_i\rvert}^p \right)^{\frac1p}$.
The  norm of a linear operator $A$ is 
${\lVert A \rVert } 
= \sup_{\lVert x \rVert= 1} {\lVert Ax \rVert } $.
The operator $\diag(x)$ defines a square matrix whose diagonal is the vector $x$. The identity matrix is $\Id_n = \diag(\U_n)$.
The functions $\Idc_S$ and $\chi_S$ are respectively the indicator and characteristic functions of a set $S$
\begin{equation*}
\begin{split}
\Idc_S(x) = \begin{cases}
	1 & \text{ if } x \in S\\
	0 & \text{ otherwise }
\end{cases}
\!\!,\;
\chi_S(x) = \begin{cases}
	0 & \text{ if } x \in S\\
	\infty & \text{ otherwise }
\end{cases}
.
\end{split}
\end{equation*}
The Kronecker $\delta$ symbol is $\delta_{i,j} = 1$ if $i=j$, and $\delta_{i,j} =0$ otherwise.

A histogram with $n$ bins is a vector $h \in \R^n_+$ with non-negative entries.
The set 
\eql{\label{eq:simplex}
	\PS_{m,n} := \{x \in \R^{n}_+, \dotp{x}{\U_n}=m\}
}
is the simplex of histogram vectors 
of total mass $m$ ($\PS_{1,n}$ being the $n$-dimensional probability simplex).

The operators $\Prox$ and $\Proj$ stand respectively for the Euclidean proximity and projection operators:%
\begin{equation*}
\begin{split}
\Prox_{f}(x) &= \argmin_{y} \tfrac12\norm{y-x}^2 + f(x)
\\
\Proj_{S}(x) &= \argmin_{y \in S} \norm{y-x} = \Prox_{\;\chi_S}(x)
\;.
\end{split}
\end{equation*}
Functions $f$ for which the proximity operator is known in closed form, or at least that can be evaluated at a given point explicitly, are usually referred to as \emph{simple}.

The Legendre-Fenchel conjugate $f^*$ of a lower semicontinuous convex function $f$ writes $f^*(y) = \sup_{x} \dotp{x}{y} - f(x)$, and satisfies the equality: $f^{**} = f$. 


\subsection{General formulation of distribution-based image segmentation}
For sake of simplicity, we first describe the binary segmentation problem. The following framework can be extended to multi-phase segmentation, as lately shown in Section~\ref{sec:multiphase}.

Let $I : x \in \Om \mapsto I(x) \in \RR^{d}$ be a multi-dimensional image, defined over the $N$-pixel domain $\Om$ ($N = \lvert \Om\rvert$), and $\F$ a feature-transform into $n$-dimensional descriptors: $\F I(x) \in \RR^n$.
{The border of the domain is denoted $\partial \Om$.}
We would like to define a binary segmentation $u:\Om \mapsto \{0,1\}$ of the whole image domain, using two fixed probability distributions of features $a$ and $b$.
Following the variational model introduced in \cite{papa_aujol}, we consider the energy
\eql{\label{eq:segmentation_energy}
E(u)  = \rho \TV(u) + \Dist( a , h(u) ) + \Dist( b , h(\U-u) )
}
where $\rho\ge 0$ is the regularization parameter, and
\begin{itemize}\setlength\itemsep{0.3em}
\item[\textbullet] the fidelity terms are defined using  ${\Dist} ( .,. )$, a dissimilarity measure between distributions of features;
\item[\textbullet] $h(u)$ is the empirical discrete probability distribution of features $\F I$ using the binary map~$u$, which is written as a sum of Dirac masses 
\eql{\label{eq:dist}
	\!\!
	h(u) : y \in \R^n \mapsto \frac{1}{\sum_{x \in \Om} u(x)} \sum_{x \in \Om} u(x) \delta_{\F I(x)}(y) \,;
}
\item[\textbullet] $\TV(u)$ is the total variation norm of the binary image $u$, which is related to the perimeter of the region $R_1(u) := \{x \in \Om \,\vert\, u(x)=1\}$ by the co-area formula.
\end{itemize}
Observe that this energy is highly non-convex, $h$ being a non linear operator, 
and that we would like to find a minimum $u^\star \in \{0,1\}^N$ over a non-convex set.

\subsection{Convex relaxation of histogram-based segmentation energy}

The authors of \cite{papa_aujol} propose some relaxations and a reformulation in order to handle the minimization of energy \eqref{eq:segmentation_energy} using convex optimization tools.

\subsubsection{Probability map} 
First, it consists in considering solutions from the convex enveloppe of the binary set,
\text{i.e.} using a segmentation variable $u:\Om \mapsto [0,1]$ which can be interpreted as a weight function (probability map). 
A threshold is therefore required to obtain a binary segmentation of the image into the region corresponding to the prior distribution $a$
\eql{\label{def:region}
	R_t(u) := \{x \in \Om \,\vert\, u(x) \ge t\},
}
its complement $R_{t}(u)^c$ corresponding to prior distribution $b$.
Other post-processing partition techniques may be considered and are discussed later.


It is worth mentioning that for the specific $\TV$-$\ell_1$ approach of \cite{Nikolova06}, where the dissimilarity measure $\Dist(u,u_0) = \norm{u - u_0}_1$ is the $\ell_1$ distance between the segmentation variable $u$ and a given \emph{prior} {\em binary} segmentation variable $u_0$, 
such a relaxation still guaranties to find a global solution for the non-convex problem.
However, there is no such a property in our general setting.

\subsubsection{Feature histogram}\label{sec:def_feature_histogram}

Considering the continuous domain of the feature space, as done for instance in \cite{PeyreWassersteinSeg12}, may become numerically intractable for high di\-men\-sional descriptors.
We consider instead histograms, as already proposed in~\cite{bresson_wasserstein09,papa_aujol}.

The feature histogram of the probability map is denoted $H_\X(u)$ and defined as the \textbf{quantized, non-normalized, and weighted histogram} of the feature image $\F I$ using the relaxed variable $u:\Om \mapsto [0,1]$ and a feature set $\X=\{X_i \in \R^n\}_{1\le i\le M_X}$ composed of $M_X$ bins indexed by $i \in \{1,\ldots M_X\}  $
\eql{\label{op:H}
\left(H_\X(u)\right)_i =  \sum_{x \in \Om} u(x) \Idc_{\C_\X(i)} (\F I(x)), 
}
where $X_i$ is the centroid of the corresponding bin $i$, and $\C_\X(i) \subset \RR^n$ 
is the corresponding set of features (\emph{e.g.} the Voronoï cell obtained from 
nearest-neighbor assignment). 
We can write $H_\X$ as a linear operator
\eql{\label{eq:histogram_matrix_def}
H_\X: u \in \R^{N} \mapsto H_\X \cdot u \in \R^{M_X}, 
}
with matrix notation $H(i,x) := 1$ if $\F I(x) \in \C_\X(i)$ and  $0$ otherwise.
Note that $H_\X \in \RR^{M_X \times N}$ is a fixed \emph{hard assignment} matrix that indicates which pixels of $\F I$ contribute to each bin of the histogram.
As a consequence, we have the property
\begin{equation}\label{eq:histogram_sum}
	\dotp{H_\X \, u}{\U_{N}} 
	  = \sum_{x \in \Om} u(x) = \dotp{u}{\U_N},
\end{equation}
so that
$H_\X (u) \in \PS_{\dotp{u}{\U}, M_X}$.
This linear operator computing the histogram of a particular region of the image is illustrated in Figure \ref{illust_histo} for RGB color feature.

\begin{figure}[ht!]
\centering
\begin{tabular}{ccc}
	\includegraphics[height=1.3cm]{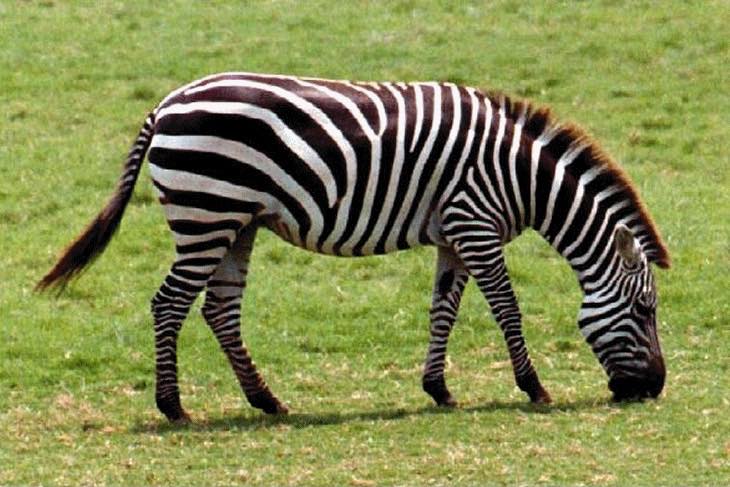}&%
	\includegraphics[height=1.3cm]{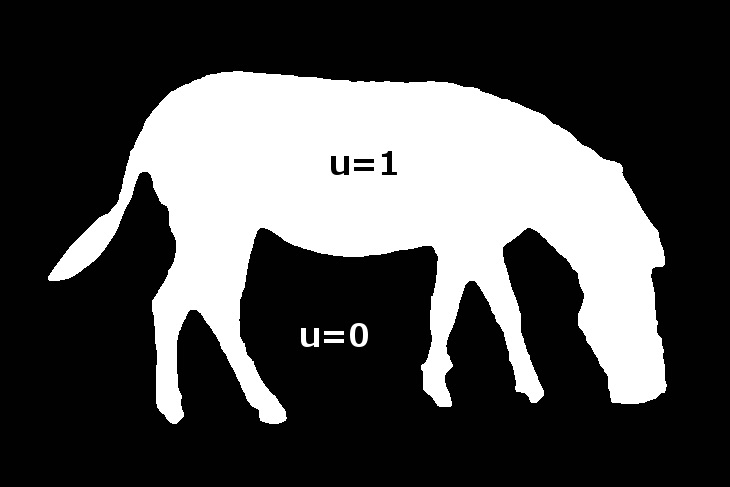}&%
	\hspace{-0.1cm}%
	\includegraphics[height=1.3cm]{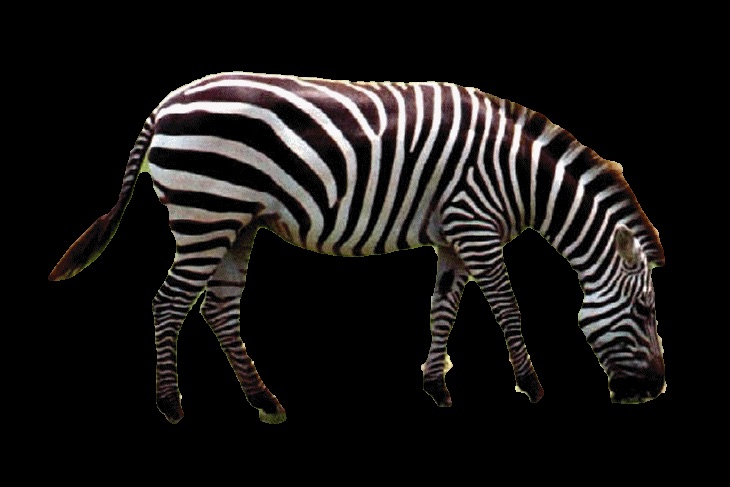}
	\\
	(a) Image $I$&%
	(b) Segmentation  $u$&%
	(c) Region $I\odot u$ 
\end{tabular}

\vspace*{4mm}

\centering
{\tiny 
\noindent
\hspace{-0.4cm}
	 \begin{tabularx}{0.485\textwidth}{ll}
	   \multicolumn{1}{m{5.8cm}}{\scriptsize
	      $ \underbrace{\begin{bmatrix}1&0 &0 & 1 &\cdots & 0\\0&1&0 & 0 &\cdots & 1\\
	       &&&\vdots\\
	       0&0 &0 & 0 &\cdots & 0\\
		0&0 &1 & 0 &\cdots & 0 \end{bmatrix}}_{H_\X}\times\underbrace{\begin{bmatrix}{0}\\{1}\\
		\vdots\\
		{1}\\
		{0} \end{bmatrix} }_{{u}}\hspace{-0.05cm}
		=
		\hspace{-0.05cm}\underbrace{\begin{bmatrix}n_1\\n_2\\
	\vdots\\
		n_{M-1}\\n_M \end{bmatrix} }_{H_\X \, u}=$}
	\hspace{-0.57cm}
	&
	\multicolumn{1}{m{2.cm}}{
		\includegraphics[height=2.4cm]{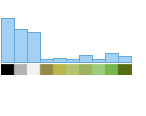}
	}
	\end{tabularx}}

\caption{\textbf{Illustration of the color histogram computed from a binary region.} 
(a) Image $I$. 
(b) Binary segmentation map $u$. 
(c) Corresponding region $I \odot u (x) = I(x) . u(x)$. 
The hard assignment linear operator $H_\X$ encodes the position of each pixel in the clustered color space. 
The histogram value $n_i$ represents here the number of pixels $I(x)$ of the region characterized by $u(x)=1$ that belongs to the feature cluster $C_\X(i)$.
\vspace*{3mm}
}
\label{illust_histo} 
\end{figure}

\subsubsection{Exemplar histograms}\label{sec:convexification_histogram}
The segmentation is driven by two fixed histograms $a \in \PS_{1,M_a}$ and $b \in \PS_{1,M_b}$, which are normalized (\emph{i.e.} sum to $1$), have respective dimension $M_a$ and $M_b$, and are obtained using the respective sets of features $\A$ and $\B$.
In order to measure the similarity between the non-normalized histogram $H_\A (u)$ and the normalized histogram $a$, while obtaining a convex formulation, we follow \cite{papa_aujol} and consider ${\Dist} \left( {a \dotp{u}{\U_N} , H_\A (u)} \right)$ as  fidelity term,
where the constant vector $a$ has been scaled to $H_\A (u) \in \PS_{\dotp{u}{\U}, M_a}$.

{Note that this approach, based on the comparison of unnormalized histogram pairs as a data fidelity term, 
is also used in~\cite{coseg_hist_matching,coseg_hist_matching_L2} for co-segmentation.
We will further discuss the consequence of such a choice for this problem in the dedicated Section~\ref{sec:coseg}.
}

\subsubsection{Segmentation energy}
Using the previous modifications to formulation~\eqref{eq:segmentation_energy}, the convex segmentation problem can now be written as finding the minimum of the following energy
\begin{equation*}
\begin{split}
J(u)  &= \rho \TV(u) 
+ \tfrac{1}{\gamma} \,S \left( {a \dotp{u}{\U_N} , H_\A u} \right)\\ 
&\hspace*{12mm} + \tfrac{1}{N-\gamma} \, S\left({b \dotp{\U_N-u}{\U_N}, H_\B (\U_N-u) } \right). 
\end{split}
\end{equation*}
The constant $\gamma \in (0,N)$ is meant to compensate for the fact that 
the binary regions $R_t(u)$ and $R_t(u)^c$ may have different size. 
This model now compares the histograms of the regions with the rescaled reference histograms, 
instead of normalized distributions defined in Eq.~\eqref{eq:dist}.

As we are interested in a discrete probability segmentation map, we consider the following constrained problem:
\eql{\label{modele_seg}
\min_{u \in [0,1]^{N}}  J(u) = 
\min_{u \in \R^{N}}  J(u)  + \chi_{[0,1]^{N}}(u).
}

\subsubsection{Simplification of the setting}

From now on, and without loss of generality, we will assume that all histograms are computed using the same set of features, \emph{namely} $\A=\B$. We will also omit unnecessary subscripts and consider $M_a=M_b=M$ in order to simplify the notation. 
Moreover, we also omit the parameter $\gamma$ since its value seems not to be critical in practice, as demonstrated in \cite{papa_aujol}.

We introduce the linear operators 
\begin{equation}\label{eq:prior_histogram_operator}
	A := a\, \U_{N}^\Trp  \in \R^{M \cdot N}
	\quad \text{ and }  \quad
	B := b\, \U_{N}^\Trp  \in \R^{M \cdot N}
\end{equation}
such that
$Au = (a  \U_{}^\Trp )u  = a \langle u, \U_{} \rangle $, 
and $\Diff: \R^N \mapsto \R^{2N}$, the finite difference operator on the bi-dimensional cartesian grid $\Om$. 
The gradient at a pixel coordinate $x=(i,j)\in \Om$ is  ${\Diff} u(x) = v(x) = (v_1(x); v_2(x))$ 
where one has $\forall\,x \in \Om \backslash \partial \Om$ (\emph{i.e.} excluding of the domain's border):
\eq{
	v_1(i,j)  = u(i,j) - u(i-1,j)
	, \;
	v_2(i,j) = u(i,j) - u(i,j-1)
	.
}
{On $\partial \Om$, we use homogeneous Dirichlet conditions:
\eq{
	v_1(0,j)  = u(0,j)
	, \;\;
	v_2(i,0) = u(i,0)
	, \quad \forall i,j
}
meaning that a pixel $x$ outside $\Om$ is considered as background ($u(x) = 0$). 
}
%
%

The usual discrete definition of the isotropic total variation used in the problems~\eqref{eq:segmentation_energy} and~\eqref{modele_seg} is
\eql{\label{eq:TV}
\TV(u) :=  {\lVert {\Diff u }\rVert}_{1,2}  
 =  \sum_{x \in \Om} {\lVert \Diff u(x) \rVert}_2
 \;,
}
where the $\ell_{1,2}$ norm for a gradient field $v=(v_1;v_2)$ corresponds to
\eq{
{\norm{v}}_{1,2} 
:= \sum_{x \in \Om} \sqrt{v_1^k(x)^2 + v_2^k(x)^2}.
\vspace*{-2mm}
}

We finally have the following minimization problem:
\eql{\label{eq:convex_segmentation_problem}
	\begin{split}
	\min_{u \in [0,1]^N} & 
		\hspace*{2mm} \rho \; {\lVert \Diff u\rVert}_{1,2} 
		+  \Dist(Au , H u)
		\\
		& \hspace*{17mm} +  \Dist(B(\U-u) , H(\U-u) ) 
	\end{split}
	.
}

\noindent
Notice that the matrix $H \in\R^{M\cdot N}$ is sparse (with only $N$ non-zero values) and $A$ and $B$ are of rank $1$, so that storing or manipulating these matrices is not an issue.

In \cite{papa_aujol}, the distance function ${\Dist}$ was defined as the $\ell_1$ norm.
In the sections~\ref{sec:MK_distance} and \ref{sec:sinkorn_distance}, we investigate the use of similarity measures based on optimal transport cost, which is known to be more robust and relevant for histogram comparison~\cite{rabin_JMIV}. 
In the next paragraphs, we first investigate some extensions of the previous framework and then we describe the optimization method used to solve the proposed variational problems.

\subsection{Convex multi-phase formulation}\label{sec:multiphase}

Let $a_1,\ldots a_K$ be $K\ge  2$ input histograms. The previous framework can be extended to estimate a partition of the domain $\Omega$ of an input image with respect to these histograms.
%
%

\paragraph{Multiple probability map}

A simple way to extend the binary model defined in Formula~\eqref{eq:segmentation_energy} 
is to describe the partition of the image into $K$ regions for each pixel $x \in \Omega$ by a binary variable $u(x) \in \{0,1\}^K$:
$$	u(x)=(u_1(x); \ldots u_K(x)) 
\in  \{0,1\}^{K}, \text{ s.t. } \dotp{u(x)}{\U_K} = 1
$$ 
where $u_k(x)$ states whether the pixel at $x$ belongs to the region indexed by $k$ or not. 

The extension of the convex optimization problem \eqref{eq:convex_segmentation_problem} is then obtained by the 
relaxation of $u$ into a probability vector map, as done for instance in \cite{Zach_multilabel_08,PYAC13}:
$u(x) \in \PS_{1,K}$ so that $u_k(x)$ defines the probability the pixel at $x$ belongs to the region indexed by $k$
\begin{equation}
	\label{eq:convex_multi-segmentation_problem_probability}
	\min_{\substack{u = (u_k \in \R^N)_{1 \le k \le K}\\\text{s.t. } u(x) \in \PS_{1,K}}} \; 
	\sum_{k=1}^K  {\lVert \Diff u_k\rVert}_{1,2} 
	+  \Dist(A_k u_k , H_k u_k) 	
	\;,
\end{equation}
where 
$H_k = H$ for all $k$ in the simplified setting, and where
$A_k$ indicates the linear operator that multiplies histogram $a_k$ by the total sum of the entries of $u_k$, as previously defined in Eq.~\eqref{eq:prior_histogram_operator}.

Notice that other convex relaxations for multi-phase segmentation with non-ordered labels and total variation regularization have been proposed in the literature~\cite{Lellmann2009,Pock09}. 
The one proposed in \cite{Lellmann2009} is nevertheless less tight than \cite{Zach_multilabel_08}.
On the other hand, the convexification of~\cite{Pock09} is even tighter but harder to optimize, while giving very close results, even on pathological cases after thresholding (see \cite{Pock09} for a detailed comparison).

\subsection{Other model variations}\label{sec:variant}

In addition to the multiple labelling extension, some other variations of the previous framework are discussed in this section.

\paragraph{Soft assignment histogram}
For simplicity, we have assumed previously that each operator $H_k$ is an \emph{hard assignment} operators (see definition \eqref{eq:histogram_matrix_def}).
In the proposed framework, these histogram operators could be instead defined from soft assignment, which might reduce quantization noise.
However, the property \eqref{eq:histogram_sum} would not hold any longer for non binary variables $u$, so that the definition of operators $A_k$ should also change accordingly: $A_k \,u_k = \dotp{H u_k}{\U}\, a$.
Observe also that special care should also be taken regarding the conditioning of the matrix $H_{}$, as some rows of $H_{}$ could be arbitrarily close to zero.

\paragraph{Supervised soft labelling} 

In our framework, prior histograms $\{a_k\}_K$ may be given \emph{a priori} but can also be defined from
\emph{scribbles} drawn on the input image by the user. In the experiments, we will consider binary scribbles 
$s_k : \Om \mapsto \{0,1\}$ so that prior histograms are defined as (assuming that condition \eqref{eq:histogram_sum} is fullfilled)
\eq{
a_k = \frac{H_k\,  s_k}{\dotp{s_k}{\U}} \;.
}
This approach makes it possible for the model \eqref{eq:convex_multi-segmentation_problem_probability} to correct potentially user mislabelling, as the segmentation variables $u_k$ are not subject to verify the user labelling.
Considering such hard labelling constraints would not increase the model complexity.

\paragraph{Multi-image segmentation}

The framework enables to segment multiple images with the same prior histograms that can be defined by scribbles from different images.
Without adding interaction terms to measure the similarity between the segmentation variables of each image, the corresponding optimization problems can be solved separately for each image.



\subsection{Optimization}\label{sec:optim}


Every convex segmentation problems studied in this work are addressed using primal-dual forward-backward optimization schemes.
Depending on the properties of the convex function ${\Dist}$ chosen to measure similarity between histograms, several algorithms can be considered.

In particular, when ${\Dist}$ is a Lipschitz-differentiable function (using for instance quadratic $\ell_2$, Huber loss or $\chi^2$ distance), even simpler forward-backward algorithm can be used.
However, such a choice of function is known to be not very well suited for histogram comparison (see for instance~\cite{rabin_JMIV}) and more robust distances are therefore preferred, such as the $\ell_1$ norm in~\cite{Yildi12}.

As a consequence, and without loss of generality, we do not address this specific case in the following
and consider the most general setting, without any assumptions on ${\Dist}$ (or ${\Dist}^*$, its Legendre-Fenchel transform)
{aside from being convex and lower semi-continuous}.


\paragraph{Two-phase segmentation model} 

In order to reformulate~\eqref{eq:convex_segmentation_problem} as a primal-dual optimization problem, 
we resort to variable splitting, 
using the Legendre-Fenchel transforms of the discrete $\TV$ norm and the function ${\Dist}$ to obtain
\eql{\label{pb:dual_type}
\begin{split}
	\min_{u}
	\max_{\substack{v\\p_A^{} ,\, p_B^{}\\q_B^{} ,\, q_A^{}}} 
	\hspace{0.25cm}
	 & \dotp{ \Diff u}{v}+ \dotp{A^{}u}{q_A}+\dotp{B^{}(\U-u)}{q_B^{}}  
	 \\[-3mm]
	 &+ \dotp{Hu}{p_A} +\dotp{H(\U -u)}{p_B^{}}  
	 \\
 	& 
 	- {\Dist}^*(p_A^{},q_A^{}) - {\Dist}^*(p_B^{},q_B^{})
	\\
	& + \chi_{[0,1]^{N}}(u) - \chi_{{\norm{.}}_{\infty, 2} \le \rho}(v)
 	 \\
\end{split}
}
where the primal variable is $u = {\left(u(x)\right)}_{x \in \Om}\in\R^N$ (corresponding to the segmentation map), and dual variables are $v = {\left(v(x)\right)}_{x \in \Om}\in\R^{2N}$ (related to the gradient field) and $p_A^{},p_B^{},q_A^{},q_B^{}\in\R^M$ (related to the histograms).  Notice that  ${\Dist}^*$ is the convex conjugate  of the function ${\Dist}$. 
In this new problem formulation, $\chi_{{\lVert . \rVert}_{\infty,2} \le \rho}$ is the characteristic function of the convex $\ell_{\infty,2}$ ball of radius $\rho$, as we have for the discrete isotropic $\TV$ norm
\eq{
\begin{split}
    \TV(u) 
    &= \sum_{x \in \Om} \sup_{\norm{v(x)}\le1} \dotp{v(x)}{\Diff u(x)} 
    \\
    &= \sup_{v}  \;  \dotp{v}{\Diff u} - \sum_{x \in \Om} \chi_{\norm{.}\le 1}(v(x))
    \\
    &= \sup_{v}  \;  \dotp{v}{\Diff u} - \chi_{{\norm{.}}_{\infty,2}\le 1}(v).
\end{split}
}

In order to accommodate the different models studied in this paper, we assume here that ${\Dist}^*$ is a sum of two convex functions ${\Dist}^*={\Dist}_1^*+{\Dist}_2^*$, where ${\Dist}_1^*$ is non-smooth and ${\Dist}_2^*$ is differentiable with Lipschitz continuous gradient. 

We recover a general primal-dual problem of the form 
\begin{equation}\label{pb:primaldual}
\begin{split}
\min_{u} \; \max_{p} \; &
	\dotp{Ku}{p} + R(u) + T(u) 
	\\
	& \quad - F^*(p)-G^*(p)
\end{split}
\end{equation}%
with primal variable $u\in \R^N$ and dual variable 
$p=\left(p_A; q_A; p_B; q_B; v\right) \in \R^{4M+2N}$%
, where
\begin{itemize}\setlength\itemsep{0.3em}
\item[\textbullet] $K=[H^\Trp,A^\Trp,-H^\Trp,-B^\Trp,\Diff^\Trp]^\Trp \in \R^{(4M+2N)\times N}$ is a sparse linear matrix; 

\item[\textbullet] {$T$} is convex and smooth, with Lipschitz continuous gradient $\nabla T$ with constant $L_T$.
For now, we have $T(u) = 0$ and $L_T=0$ in the setting of problem~\eqref{pb:dual_type}.

\item[\textbullet] 
{$R$} is convex and non-smooth. 
In problem~\eqref{pb:dual_type}, we have $R = \chi_{\cal C}$ the indicator function of the convex domain ${\cal C} = [0,1]^{N}$ ;

\item[\textbullet] $F^*(p)={\Dist}_1^*(p_A^{},q_A^{})+{\Dist}_1^*(p_B^{},q_B^{}) + \chi_{\lVert . \rVert_{\infty,2} \le \rho}(v)$ is convex and non-smooth;

\item[\textbullet] 
$G^*(p)={\Dist}_2^*(p_A^{},q_A^{})+{\Dist}_2^*(p_B^{},q_B^{})- \dotp{H\U_{N}}{p_B^{}}- \dotp{B\U_{N}}{q_B^{}}$ is convex and differentiable, with Lipschitz continuous gradient with constant $L_{G^*}$. From definition of $H$ and $B$, one have
{$B\U_N = N b$ and $H \U_N = N h_I$ where $h_I$ is the normalized histogram of feature of the image $I$.}
\end{itemize}

To solve this problem, we consider the primal dual algorithm of~\cite{VuPrimalDual,CP14}
%
%
\begin{equation}\label{algo:primaldual}
\! \left\{
\begin{array}{ll}
	u^{(t+1)} 
	& =  \Prox_{\, \tau R} \! 
	\left(u^{(t)} - \tau (K^\Trp p^{(t)} + \nabla T(u^{(t)})) \right)
	\\[3mm] 
	p^{(t+1)} &
	= \Prox_{\, \sig F^*}  \left(p^{(t)} + \sig K (2u^{(t+1)} - u^{(t)}) \right.
	\\[2mm]
	& \hspace*{25mm}   \left. - \si \nabla G^*(p^{(t)})\right)
\end{array}\right.
\end{equation}
where the notation $u^{(t)}$ indicates the variable at discrete time indexed by $t$.
For problem~\eqref{pb:dual_type}, one have $\Prox_{\, \tau R} =  \Proj_{\,[0,1]^N}$.
The application $\Prox_{\, \sig F^*}$ depends on the non-smooth part of similarity function $S$ and writes due to separability
\eq{\begin{split}
	& \Prox_{\, \sig F^*}(p) = \\
	& \; \left(
	\Prox_{\, \sig S_1^*}(p_A,q_A) ;\,
	\Prox_{\, \sig S_1^*}(p_B,q_B) ;\,
	\Proj_{{\norm{.}}_{\infty,2}\le \rho}(v)  \right),
\end{split}}
%
%
where 
\eql{\label{eq:proj_Linf2}
	\Proj_{{\norm{.}}_{\infty,2} \le \rho}(v)(x) = \frac{v(x)}{\max\left\{{\norm{v(x)}}_{}/{\rho},1\right\}}
	.
}
%
%

The algorithm~\eqref{algo:primaldual} is guaranteed to converge from any initialization of $(u^{(0)},p^{(0)})$ to a saddle point of \eqref{pb:primaldual} as soon as the step parameters $\sig$ and $\tau$ satisfy (see for instance~\cite{CP14}[Eq. 20])
\begin{equation}\label{time_step}
	\left(\tfrac1\tau  - {L_T}\right)\left(\tfrac1\sig - L_{G^*}\right)\ge  \lVert K \rVert^2.
\end{equation}
The worst case estimate for this norm is
\eq{
	\norm{K} = 4 \sqrt{N}+ \sqrt{8}.
}
\begin{proof}
See appendix \ref{proof:norm_K}.
\end{proof}

\paragraph{Preconditioning} 
As a consequence of the large value of $\norm{K}^2$ scaling with the primal variable dimension,
the gradient step parameters $(\tau,\sigma)$ may be very small to satisfy Eq.~\eqref{time_step}, 
which results in a slow convergence.

Fortunately, this algorithm can benefit from the recent framework proposed in \cite{Condat_icip14,Lorenz_Pock_inertial_FB_JMIV2015}, using preconditioning.
The idea is to change the metric by using --fixed or variable-- matrices $\mathbf T$ and $\mathbf \Sigma$ in lieu of scalar parameters $\tau$ and $\sig$
in \eqref{algo:primaldual}.
 
Following the guideline proposed in \cite{Lorenz_Pock_inertial_FB_JMIV2015} to design diagonal and constant conditioning matrices, we define
\eq{
\begin{split}
	\mathbf T:= &\diag \left( \boldsymbol \tau \right) 
	\quad \text{ and } \quad 
	\\
	\mathbf \Sigma :=  &\diag \left( \boldsymbol \sig \right)  
	=  \diag \left( \boldsymbol \sig_{H} ,  \boldsymbol \sig_{a} ,  \boldsymbol \sig_{H} ,  \boldsymbol \sig_{b} , \boldsymbol \sig_D \right)  
	\\
\end{split}
}
where
\begin{equation}\label{eq:diag_precond}
\begin{split}
	\frac{1}{\boldsymbol \tau(x)} &= \frac{L_T}{\ga} + r \sum_{i=1}^{4M+2N} \abs{K_{i,x}} \;,
	\\
	\frac{1}{\boldsymbol \sig(i)} &= \frac{L_{G^*}}{\delta} + \frac{1}{r} \sum_{x=1}^N \abs{K_{i,x}} \;. 
	\\
\end{split}
\end{equation}
For the setting of problem~\eqref{pb:dual_type}, considering an hard assignment matrix $H$ and writing the operator $D$ in matrix form, we have
\begin{equation*}
\begin{split}
	\frac{1}{\boldsymbol \tau(x)}  & = 4r + r \sum_{y=1}^{2N} \abs{D_{y,x}} \le 8 r \;\text{} 
	\\
	%
	\frac{1}{\boldsymbol \sig_{H}} &=  \frac{L_{G^*}}{\delta}\U_M  + \frac{N}{r} {h_{I}} \quad \text{with }  h_I = \tfrac1N H \U_N 
		\\
	\frac{1}{\boldsymbol \sig_{h}} &=  \frac{L_{G^*}}{\delta}\U_M  + \frac{N}{r} h \quad \text{for histogram } h=a \text{ and } b
	\\
	\frac{1}{\boldsymbol \sig_D(y)} &=  \frac{1}{r}  \sum_{x=1}^{N} \abs{D_{y,x}}  \le \frac{2}{r} .
\end{split}
\end{equation*}
The scaling parameters $r>0$ and $\delta \in (0,2)$ enable to balance the update between the primal and the dual variables.
We observed that the preconditioning allows for the use of very unbalanced histograms (that is far from being uniform) that otherwise could make the convergence arbitrarily slow.

\bigskip

Other acceleration methods, such as variable metric~\cite{Condat_icip14} and
inertial update~\cite{Lorenz_Pock_inertial_FB_JMIV2015}, may be considered.

\paragraph{Multiphase optimisation} 

The algorithm used to minimize problem \eqref{eq:convex_multi-segmentation_problem_probability} is the same as in \eqref{algo:primaldual}.
The only two differences are the size of the variables and the convex constraint set ${\cal C}$. 
First, we consider now multi-dimensional primal and dual variables, 
\emph{i.e.} respectively $u : x \in \Om \mapsto (u_k(x))_{k=1}^K$ and 
$
	p = \left(p_k\right)_{k=1}^K
$
with $
	p_k = (p_A^k; q_A^k; p_B^k; q_B^k; v^k)
	.
$ 
Furthermore, the constraint set ${\cal C}$ for the primal variable $u$ is defined for each pixel $u(x)$ as the simplex $\PS_{1,K}$ (defined in Eq.~\eqref{eq:simplex}), so that:
\eq{
	R(u) = 
	\sum_{x \in \Om} \chi_{\PS_{1,K}}(u(x)) \;.
}
In this setting, the definition of the diagonal preconditionners \emph{for each phase} $k$ is the same as in~\eqref{eq:diag_precond}.

\bigskip
Eventually, the primal variable $u^\star = u^{(\infty)}$ provided by the algorithm~\eqref{algo:primaldual} only solves the relaxed segmentation problem and has to be post-processed to obtain a partition of the image, as discussed in the next paragraph.

\subsection{Binarization of the relaxed solution}\label{sec:threshold}


The solution $u^\star$ of the relaxed segmentation problems studied before is a probability map, \emph{i.e.} $u^\star(x) \in [0,1]$.
Although in practice we have observed (see the experimental section~\ref{sec:seg_exp}), as already reported in \cite{Nikolova06,Zach_multilabel_08} for other models, that the solution is often close to be binary, \emph{i.e.} $u^\star(x) \approx 0$ or $1$, some thresholding is still required to obtain a proper labelling of the image.

Following for instance \cite{Zach_multilabel_08}, 
we simply select for every pixel $x$ the most likely label based on probability maps solutions $\{u_k^\star\}_{1 \le k \le K}$, 
that is
\eql{\label{eq:multilabel_threshold}
	x \mapsto \argmin_{k} \; \{u^\star_k(x) \}_{1 \le k \le K} \;. 
}
Recall that in general, there is no correspondence between this thresholded solution and the global minimizer of the non-relaxed problem over binary variables. 

In the specific case of the $K=2$ phase segmentation problem, the previous processing boils down to using a threshold $t=\tfrac{1}{2}$ to define $u_t(x) = \idc{u^\star(x)>t}$. 
A better strategy would be to optimize the global threshold $t$ such that the objective functional $J(u_t)$ is minimized.
However, due to the complexity of the measures ${\Dist}$ considered in this work, this method is not considered here.


\section{Monge-Kantorovitch distance for image segmentation}\label{sec:MK_distance}

We investigate in this section the use of optimal transport costs as a distance function $\Dist$ in the previous framework.

\subsection{Optimal Mass Transportation problem and the Wasserstein Distance}

\paragraph{Optimal Transport problem}

Following \cite{Rabin_ssvm15}, we consider in this work the discrete formulation of the Monge-Kantorovitch optimal mass transportation problem (see \emph{e.g.}~\cite{Villani03}) between a pair of normalized histograms $a$ and $b$. 
Given a fixed assignment cost matrix $C_{\A,\B} \in \R^{M \times M}$ between the corresponding histogram centroids $\A=\{A_i\}_{1\le i \le M}$ and $\B=\{B_j\}_{1\le j \le M}$, an optimal transport plan 
minimizes the global transport cost, defined as a weighted sum of assignments $\forall \, (a,b) \in \S$:
\eql{\label{eq:OT}
	\MK(a,b) 
		:= \hspace*{-1mm}\min_{P \in {\Pp}(a,b)} \left\{\dotp{P}{C}  =  \textstyle \sum_{i,j=1}^{M} P_{i,j}C_{i,j}\right\}. 
}
The set of admissible histograms is
\eql{\label{eq:admissible_histograms}
	\S: = \{a,b \in \R^{M}_+ \,,\;\dotp{a}{\U_{M}} = \dotp{b}{\U_{M}}\},
}
and the polytope of admissible transport matrices reads
\eql{\label{eq:admissible_matrices}
	\hspace*{-1mm} {\Pp}(a,b) := \{P\in \RR_+^{M \times M}\!,  P\U_{M} \!= \!a \text{ and } P^\Trp \U_{M} \! = \!b\}.
}
Observe that the norm of histograms is not prescribed in $\S$, and that we only consider histograms with positive entries since null entries do not play any role.

\paragraph{Wasserstein distance}
When using $C_{i,j} = {\lVert A_i - B_j \rVert}^{p}$, then we recover the Wasserstein distance 
\eql{\label{eq:wasserstein}\textbf{W}_p(a,b) = \MK(a,b)^{1/p},}
which  is a metric between normalized histograms.
In the general case where $C$ does not verify such a condition, by a slight abuse of terminology we refer to the $\MK$ transport cost function as the Monge-Kantorovich \emph{distance}.

\paragraph{$\ell_1$ distance}\label{sec:wasserstein_vs_l1}

As previously mentioned, the $\ell_1$ norm is a popular metric in statistics and signal processing, in particular for image segmentation. When penalized by a factor $\tfrac12$, it is also known as the \emph{total variational distance} or the \emph{statistical distance} between discrete probability distributions.
As a matter of fact, such a distance can also be seen as a special instance of optimal transport when considering the cost function $C_{i,j} = 2 (1-\delta_{ij})$ and the same set of features $\A=\B$. See Appendix~\ref{sec:proof_l1} for more details.

This relation illustrates the versatility and the advantages of optimal transport for histogram comparison as it allows to adapt the distance between histogram features and to use different features for each histogram, contrarily to usual metric.

\paragraph{Monge-Kantorovich distance}
In the following, due to the use of duality, it is more convenient to introduce the following reformulation for general cost matrix $C$
and $\forall\, a,b \in \R^M$
\eql{\label{eq:OT_indicators}
	\MK(a,b) = \min_{P \in {\Pp}(a,b)} \dotp{P}{C}  
	+\chi_{\S}(a,b).
}
Notice that the optimal transport matrix $P$ is not necessarily unique.

\paragraph{Linear Programming formulation}
We can rewrite the optimal transport problem as a linear program with vector variables.
The associated primal and dual problems write respectively
\eql{\label{eq:primal_dual_MK}
\begin{split}
	\MK(\alpha) 
	&= \min_{\substack{p\in \RR^{M^2} \text{ s.t. } p \ge \O, \; L^\Trp p = \alpha}} 
	\dotp{c}{p} + \chi_{\S}(\alpha)
	\;
	\\
	&= 
	\max_{\substack{\beta \in \R^{2M} \text{ s.t. } L\beta \le c}}  \;
	\dotp{\alpha}{\beta},
\end{split}
}
where 
$\alpha=(a;b) \in \R^{2M}$
is the concatenation of the two histograms and
the unknown vector $p \in \RR^{M^2}$ corres\-ponds to the bi-stochastic matrix $P$ being read column-wise (\emph{i.e.} $P_{i,j} = p_{k}$ with 1D index $k(i,j)=i+ M(j-1)$).
The  $2M$ linear marginal constraints on $p$ are defined by the matrix
$L^\Trp \in \RR^{2M \times M^2}$ through equation $L^\Trp p = \alpha$, where
\eq{\footnotesize 
L^\Trp= 
\begin{bmatrix}
 e_1\U_{M}^\Trp & e_2 \U_{M}^\Trp & \cdots & e_{M}\U_{M}^\Trp \\
\text{Id}_{M}  & \text{Id}_{M}   & \cdots & \text{Id}_{M}  \\
\end{bmatrix}
=
\begin{bmatrix}
\U_M^\Trp &\otimes& \Id_M
\\
\Id_M &\otimes& \U_M^\Trp  
\end{bmatrix}
}
with $e_i(j) = \delta_{ij} \;\; \forall\; j\le M$.
As a consequence,  
\eq{
(L\alpha)_{k(i,j)} = 
\left(L
\small
{ 
	\begin{bmatrix}
		a\\b
	\end{bmatrix}
}
\right)_{k(i,j)}
={ \left(
	a \U^\Trp 
	+ \U b^\Trp
\right)}_{i,j}
=a_i + b_j.
}

From the dual formulation \eqref{eq:primal_dual_MK} that contains a linear objective with inequality constraints, 
one can observe that the function $\MK(\alpha)$ is not strictly convex in $\alpha$
{and not differentiable everywhere}.
We also draw the reader's attention to the fact that the indicator of set $\S$ is not required anymore with the dual formulation, which will later come in handy.

\paragraph{Conjugate Monge-Kantorovich distance} 


From Eq.~\eqref{eq:primal_dual_MK}, we have that the Legendre–Fenchel conjugate of $\MK$ writes simply as the characteristic function of the set $\{\beta \in \R^{2M} \,,\, L\beta - c \le \O\}$
\eql{\label{eq:MK_dual}
	\MK^*{\left( \beta \right)} 
	= \chi_{L\beta\le  c}(\beta)
	\qquad
	\forall\, \beta \in \R^{2M},
}
where $c$ denotes the vector representation of the cost matrix $C$ (\emph{i.e.} $C_{i,j} = c_{i+ M(j-1)}$).

%

\subsection{Integration in the segmentation framework}
\label{sec:biconjugaison}

We propose to substitute in problem \eqref{eq:convex_segmentation_problem}
the similarity function $\Dist$ by the convex Monge-Kantorovich optimal transport cost \eqref{eq:OT_indicators}.

\subsubsection{Proximity operator}\label{sec:prox_wasserstein}

In order to apply the minimization scheme described in \eqref{algo:primaldual}, 
{as $\MK^*$ is not differentiable}, 
we should be able to compute the proximity operator of $\MK^*$.
Following~\eqref{eq:MK_dual} it boils down to the projection onto the convex set $\{\beta \,,\, L \beta  \le c\}$.
However, because the linear operator $L$ is not invertible, this projector cannot be computed in a closed form and the corresponding optimization problem should be solved at each iteration of the process~\eqref{algo:primaldual}.

A similar strategy is employed in \cite{Swoboda13} with the qua\-dratic Wasserstein distance (defined in~\eqref{eq:wasserstein}, using $p=2$), where the proximity operator of $\Prox_{{\mathbf{W}_2^2}(.,a)}(H u)$ with respect to the primal variable $u$ is computed using qua\-dratic programming algorithm.
To reduce the resulting time complexity, a reformulation is proposed which does not depends on the size $N$ of the pixel grid, but rather on the number of bins $M$, as in our framework with the computation of $\Prox_{\MK^*}$.

\subsubsection{Biconjugaison}\label{sec:opt:biconj}
To circumvent this problem, we resort to biconjugaison 
to rewrite the $\MK$ transport cost as a primal-dual problem itself. 
First, we can write
$\MK^*( \beta ) = f^*(L\beta)$ with 
$f^* = \chi_{. \le c}$, 
so that $f(r) = \dotp{r}{c} + \chi_{.\ge \O}(r)$.
Then, using variable splitting
\eql{\label{eq:primal_dual_MK_splitting}
\begin{split}
	\MK^*(\beta) &
	= f^*( L\beta ) = \max_r\; \dotp{r}{L\beta} - f(r)\\& = \max_r\; \dotp{r}{L\beta-c} - \chi_{\cdot\ge \O}(r)
	\\
\end{split}
}
\text{and}
\eq{
\begin{split}
	\MK(\alpha) &
	= \max_{\beta}\; \dotp{\alpha}{\beta}  - f^*( L\beta )\\
	& = \min_{r} \; \max_{\beta} \; 
	\dotp{r}{c} + \chi_{\cdot \ge \O}(r) + \dotp{\alpha - L^\Trp  r}{\beta}
	\\
\end{split}
}
where $\min$ and $\max$ are swapped in virtue of the minimax theorem (the characteristic function being lower semi-continuous for variable $r$).
%
With this {\emph{augmented} representation of the transportation problem, 
it is no longer necessary to compute the proximity operator of $\MK^*$.

\subsubsection{Segmentation problem} 
Plugging the previous expression into Eq.~\eqref{pb:dual_type} enables us to solve it using algorithm~\eqref{algo:primaldual}.
Indeed, introducing new primal variables $r_A,r_B \in \R^{M^2}$ related to transport mappings for the binary segmentation problem, we recover the following primal dual formulation (extension for multi-phase segmentation is straightforward using Section~\ref{sec:optim})
\eql{\label{pb:dual_type_MK}
\begin{split}
\hspace*{-1mm}
	\min_{\substack{u\\r_A, r_B}}
	\; \max_{\substack{v\\p_A^{}, q_A^{}\\p_B^{}, q_B^{}}}
	\;
	 & \quad \dotp{ \Diff u}{v} - \chi_{\lVert.\rVert_{\infty,2} \le \rho}(v) 
	 \\[-5mm]
	 & + \dotp{ A^{}u}{q_A^{}} + \dotp{ B^{}(\U-u)}{q_B^{}}
	 \\
	 & + \dotp{ Hu}{p_A^{}} + \dotp{ H(\U -u)}{p_B^{}} 
	 \\
 	 & + \dotp{r_A}{c-L{\small \begin{bmatrix}p_A\\q_A\end{bmatrix}}} 
 	    + \dotp{r_B}{c-L{\small \begin{bmatrix}p_B\\q_B\end{bmatrix}}}   
 	 \\
 	 & + \chi_{[0,1]^{N}}(u) +  \chi_{\cdot \ge \O}(r_A) + \chi_{\cdot \ge \O}(r_B).
\end{split}
}
Using the canonic formulation~\eqref{pb:primaldual}, we consider now%
\eql{\label{eq:biconjugate_K}\small
	K = 
	%
	\left[\begin{array}{r ccc cc}
		 H && \multicolumn{2}{c}{\multirow{2}{*}{$-L^\Trp$}} & \multicolumn{2}{c}{\multirow{2}{*}{$\O$}}\\
		 A &&  & \\
		-H &&  \multicolumn{2}{c}{\multirow{2}{*}{$\O$}} & \multicolumn{2}{c}{\multirow{2}{*}{$-L^\Trp$}} \\
		-B &&  \\
		 D && \multicolumn{2}{c}{\O} & \multicolumn{2}{c}{\O} \\
	\end{array}\right]
	.
}
In addition, observe that there is now an additional linear term $T(u,r_A,r_B) = \dotp{r_A+r_B}{c}$ whose gradient 
{$\nabla T = \left(\O_N;\, c;\, c\right)$} 
has a Lipschitz constant $L_T = 0$. 
As in problem~\eqref{pb:dual_type}, we still have $R = \chi_{\cal C}$
which writes here 
$$
\chi_{\cal C}(u,r_A,r_B) =  \chi_{[0,1]^{N}}(u) +  \chi_{\cdot \ge \O}(r_A) + \chi_{\cdot \ge \O}(r_B)  \;.
$$
The proximity operator of the characteristic function $\chi_{\cdot \ge \O}$ boils down to 
the projection onto the nonnegative orthant $\R^{M^2}_+$: 
\eql{\label{eq:proj_positive_orthant}
	\Proj_{{\cdot \ge \O}}(r) = \max\{\O,r\}.
}


The preconditioners for the problem~\eqref{pb:dual_type_MK} are computed using the definition~\eqref{eq:diag_precond} for the operator $K$ defined in Formula~\eqref{eq:biconjugate_K}.


\subsubsection{Advantages and drawback} The main advantage of this segmentation framework is that it makes use of optimal transport to compare histograms of features, 
without sub-iterative routines such as solving optimal transport problems to compute sub-gradients or proximity operators (see for instance~\cite{Cuturi13,Swoboda13}), 
or without making use of approximation (such as the Sliced-Wasserstein distance~\cite{PeyreWassersteinSeg12}, generalized cumulative histograms~\cite{Papadakis_ip11} or entropy-based regularization~\cite{CuturiDoucet14}).
Last, the proposed framework is not restricted to Wasserstein distances, since it enables the use of any cost matrix, and does not depend on features dimensionality.

However, a major drawback of this method is that it requires two additional primal variables $r_A$ and $r_B$ whose dimension is $M^2$ in our simplified setting, $M$ being the dimension of histograms involved in the model. As soon as $M^2 \gg N$, the number of pixels, the proposed method could be significantly slower than when using $\ell_1$ as in \cite{papa_aujol} due to time complexity and memory limitation. This is more likely to happen when considering high dimensional features, such as patches or computer vision descriptors, as $M$ increases with feature dimension~$n$.

\section{Regularized $\MK$ distance for image segmentation}\label{sec:sinkorn_distance}

As mentioned in the last section, the previous approach based on optimal transport may be very slow for large histograms.
In such a case, we propose to use instead the entropy smoothing of optimal transport recently proposed and investigated in~\cite{Cuturi13,CuturiDoucet14,CuturiPeyre_smoothdualwasserstein}.
This strategy is also used by the \emph{Soft-Assign} Algorithm~\cite{SoftAssign} to solve linear and quadratic assignment problems.

While offering good properties for optimization, it is also reported~\cite{Cuturi13} to give a good approximation of optimal transportation and increased robustness to outliers.
While it has been initially studied for a pair of vectors on the probability simplex $\PS_{1,M}$, we follow our preliminary work \cite{Rabin_ssvm15} and investigate in details its use for our framework with unnormalized histograms on $\S$.

\subsection{Sinkhorn distances $\MK_{\lbd}$}

The entropy-regularized optimal transport problem~\eqref{eq:OT_indicators} on the set $\S$ (see Eq.~\eqref{eq:admissible_histograms})~is
\eql{\label{eq:sinkhorn_distance_nonnormalized}
\hspace*{-1mm}
\begin{split}
	\MK_\lbd (a,b) 
	& 
	= \hspace*{-2mm} \min_{P \in {\Pp}(a,b)} \dotp{P}{C}  - \tfrac{1}{\lbd} h(P) 
	+ {\chi_{\S}(a,b)}
\end{split}
} where the entropy of the matrix $P$ is defined as $h(P) := -\dotp{P}{\log P}$ 
({with the convention that $h(\O) = 0$}).
%
Thanks to the  strictly convex negative entropy term, the regularized optimal transport problem has a unique minimizer, denoted $P_\lbd^{\star}$. It can be recovered using a fixed point algorithm as demonstrated by Sinkhorn (see \emph{e.g.} \cite{Sinkhorn67,SoftAssign}). 
The regularized transport cost $\MK_\lbd (a,b)$ is thus referred to as the \emph{Sinkhorn distance}.

\subsubsection{Interpretation}
Another way to express the negative entropic term is:
\eq{
 - h(p) : p \in \R^{M^2}_+ \mapsto \textbf{KL}(p \Vert \U_{M^2}) \in \R, 
}
that is the Kullback-Leibler divergence between transport map $p$ and the uniform mapping. 
This shows that, as $\lbd$ decreases, the model encourages smooth, uniform transport so that the mass is spread everywhere.
This also explains why this distance shows better robustness to outliers, as reported in \cite{Cuturi13}.

Hence, one  would like to consider large values of $\lbd$ to be close to the original Monge-Kantorovich distance, but low enough to deal with feature {intrinsic variability and noise}. 
As detailed after, the estimation of this regularized distance involves terms of the form $exp(-\lbd C)$. For numerical reasons, the process is limited to low values of $\lbd$ in practice, so that the Sinkhorn distances are rough approximations of the Monge-Kantorovich distances.


\subsubsection{Structure of the solution} 
First, {using the same vectorial notation as in Eq.~\eqref{eq:primal_dual_MK}}, 
the Sinkhorn distance \eqref{eq:sinkhorn_distance_nonnormalized} reads as
\eql{\label{eq:sinkhorn_distance_nonnormalized_bis}
\begin{split}
	\MK_\lbd (\alpha) 
	& 
	:= \hspace*{-3mm} \min_{\substack{p\in \RR^{M^2}\\\text{s.t. } p \ge \O, \; L^\Trp  p = \alpha}} 
	\hspace*{-3mm}  \dotp{p}{c + \tfrac{1}{\lbd} \log p} + {\chi_{\S}(\alpha)}.
\end{split}
}
As demonstrated in~\cite{Cuturi13}, when writing the Lagrangian of this problem 
with a multiplier $\beta$ to take into account the constraint $L^\Trp p=\alpha$, we can show that the respective solutions $P_\lbd^{\star}$ and $p_\lbd^{\star}$ of problem \eqref{eq:sinkhorn_distance_nonnormalized} and \eqref{eq:sinkhorn_distance_nonnormalized_bis} write
\eq{
\begin{split}
&\log p_\lbd^{\star} = \lbd (L\beta - c) - \U \text{ with }
\beta = 
\begin{bmatrix}
x\\y
\end{bmatrix}\\
\Leftrightarrow&
(\log P_\lbd^{\star})_{i,j} = \lbd (x_i + y_j - C_{i,j}) - 1.
\end{split}
}

The constant $-1$ is due to the fact that we use the unnormalized KL divergence $\textbf{KL}(p \Vert \U_k)$, instead of $\textbf{KL}(p \Vert \frac{1}{k}\U_k)$ for instance.

\subsubsection{Sinkhorn algorithm} 
Sinkhorn showed~\cite{Sinkhorn67} that the alternate normalization of rows and columns of any positive matrix $M$ converges to a unique bistochastic matrix \eq{P= \diag(x) M \diag(y)} with the desired marginals.  
The corresponding fixed-point iteration algorithm can be used to find the solution $P_\lbd^{\star}$: setting $M_\lbd = e^{-\lbd C}$, one has
\eq{
P_\lbd^{\star} = \diag(x^{(\infty)}) M_\lbd \diag(y^{(\infty)})
}
\eq{\text{with }
x^{(t+1)} = \frac {a} {M_\lbd\, y^{(t)}} 
\; \text{ and } \;
y^{(t+1)} = \frac {b} {M_\lbd^\Trp \, x^{(t)}} 
,
}
where $a$ and $b$ are the desired marginals of the matrix.
With this result, one can   design a fast algorithm to compute  the regularized optimal transport plan,  the Sinkhorn distance or its derivative, as shown in~\cite{Cuturi13,CuturiDoucet14}.

\subsection{Conjugate Sinkhorn distance $\MK_\lbd^*$}


Now, in order to use the Sinkhorn distance in algorithm \eqref{algo:primaldual}, we need to compute its Legendre-Fenchel transform, which expression has been studied in~\cite{CuturiDoucet14}.
\begin{proposition}[Cuturi-Doucet]
The convex conjugate of $\MK_\lbd (\alpha)$ defined in \eqref{eq:sinkhorn_distance_nonnormalized_bis} reads
\eql{\label{eq:dual_sinkhorn_nonnormalized}
\MK^*_\lbd (\beta) 
=\tfrac{1}\lbd \left \langle  q_\lbd(\beta) ,  \U  \right\rangle
}
\eq{\text{with } \; 
q_\lbd(\beta) := e^{\lbd (L\beta - c)-\U}.
}
\end{proposition}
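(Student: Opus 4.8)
The goal is to compute $\MK^*_\lbd(\beta) = \sup_\alpha \dotp{\alpha}{\beta} - \MK_\lbd(\alpha)$. My plan is to substitute the primal definition \eqref{eq:sinkhorn_distance_nonnormalized_bis} and exchange the order of the supremum over $\alpha$ and the infimum over the transport vector $p$, reducing everything to an unconstrained optimization over $p$ that separates across coordinates. The key observation is that the constraint $L^\Trp p = \alpha$ couples $\alpha$ and $p$ linearly, so that once $\alpha$ is free to vary over $\R^{2M}$, maximizing $\dotp{\alpha}{\beta}$ against this constraint eliminates $\alpha$ entirely.

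Concretely, I would write
\eq{
\MK^*_\lbd(\beta) = \sup_{\alpha} \; \dotp{\alpha}{\beta} - \min_{\substack{p \ge \O,\, L^\Trp p = \alpha}} \dotp{p}{c + \tfrac1\lbd \log p} - \chi_\S(\alpha).
}
Then I would argue that the indicator $\chi_\S(\alpha)$ is harmless here: by the remark following \eqref{eq:primal_dual_MK}, the dual (hence conjugate) formulation does not require the set $\S$, because the mass-balance constraint built into $\S$ is automatically enforced through the marginal structure of $L^\Trp p$. Combining the supremum over $\alpha$ with the minimization constraint $L^\Trp p = \alpha$ lets me replace $\dotp{\alpha}{\beta}$ by $\dotp{L^\Trp p}{\beta} = \dotp{p}{L\beta}$, yielding the fully decoupled problem
\eq{
\MK^*_\lbd(\beta) = \max_{p \ge \O} \; \dotp{p}{L\beta - c} - \tfrac1\lbd \dotp{p}{\log p}.
}
This objective is strictly concave and separable in the coordinates $p_k$, so the first-order condition $\partial_{p_k} = 0$ gives $(L\beta - c)_k - \tfrac1\lbd(\log p_k + 1) = 0$, i.e. the optimizer $p^\star_k = e^{\lbd(L\beta - c)_k - 1}$, which is exactly $q_\lbd(\beta)$ and consistent with the structure of the solution recorded just before the statement.

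Substituting $p^\star = q_\lbd(\beta)$ back into the objective is the final computational step. Using $\tfrac1\lbd \log p^\star_k = (L\beta - c)_k - \tfrac1\lbd$, the bracket $\dotp{p^\star}{L\beta - c} - \tfrac1\lbd\dotp{p^\star}{\log p^\star}$ collapses: the two $\dotp{p^\star}{L\beta - c}$ contributions cancel and only the residual $\tfrac1\lbd\dotp{p^\star}{\U}$ survives, giving $\MK^*_\lbd(\beta) = \tfrac1\lbd \dotp{q_\lbd(\beta)}{\U}$ as claimed. The main obstacle I anticipate is not the calculus but the justification of swapping $\sup_\alpha$ and $\min_p$ together with discarding $\chi_\S$; I would need the minimax exchange to be valid (guaranteed by convexity/concavity and lower semicontinuity of the entropic term, as invoked around \eqref{eq:primal_dual_MK_splitting}) and to verify that freeing $\alpha$ over all of $\R^{2M}$ rather than over $\S$ does not change the conjugate, which follows because any $p \ge \O$ produces a feasible $\alpha = L^\Trp p$ automatically lying in $\S$.
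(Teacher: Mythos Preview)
Your argument is correct, and in fact simpler than you seem to realize: once you write $\sup_\alpha\bigl(\dotp{\alpha}{\beta}-\min_p(\cdots)\bigr)=\sup_\alpha\sup_p\bigl(\dotp{\alpha}{\beta}-\cdots\bigr)$, both outer operations are suprema, so there is no minimax exchange to justify at all. The constraint $L^\Trp p=\alpha$ with $p\ge\O$ already forces $\alpha\in\S$, which is why $\chi_\S$ drops out, and then the joint supremum over $(\alpha,p)$ collapses to a supremum over $p\ge\O$ alone. Your separable first-order computation and back-substitution are clean.

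The paper does not give its own proof of this proposition; it is attributed to Cuturi--Doucet. The closest the paper comes is the proof of Corollary~\ref{corollary:dual_simplexNmax} in Appendix~\ref{proof_coro}, which handles the normalized variant $\MK_{\lbd,\le N}$. That proof takes a different route: it introduces Lagrange multipliers $(u,v,w)$ for the marginal and mass constraints, solves $\partial_P{\cal L}=0$ to get $P^\star$, and then reads the conjugate off the resulting dual expression $\max_{u,v}\dotp{u}{a}+\dotp{v}{b}-\tfrac{N}{\lbd}\dotp{Q_\lbd}{\U}$. Your approach bypasses the Lagrangian machinery entirely by working straight from the Fenchel definition and exploiting that the marginal constraint is a linear parametrization of $\alpha$ by $p$; this is shorter and conceptually cleaner for the unnormalized case, while the paper's Lagrangian route is what naturally accommodates the extra inequality constraint $\dotp{P}{\U}\le N$ needed for the corollary.
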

With matrix notations, writing 
$\beta = (\beta_1;\beta_2)$, we have equivalently
$
	\MK^*_\lbd (\beta) 
	=\tfrac{1}\lbd \dotp{Q_\lbd(\beta)}{\U}
$
\eql{\label{eq:dual_sinkhorn_nonnormalized_matrix}
	\text{with } \;  
	Q_\lbd(\beta_1,\beta_2) := e^{\lbd (\beta_1 \U^\top +  \U \beta_2^\top - C)-\U}.
}

This simple expression of the Legendre-Fenchel transform is $C^\infty$, but unfortunately, its gradient is not Lipschitz continuous. 
We propose two solutions in order to recover the setting of the general primal dual problem~\eqref{pb:primaldual} and be able to minimize the segmentation energy involving Sinkhorn distances.
We either define a new normalized Sinkhorn distance $\MK_{\lbd,\le N}$ (§~\ref{sec:normalized_sinkhorn_distance}), whose gradient is Lipschitz continuous (§~\ref{sec:normalized_sinkhorn_gradient}), or we rely on the use of proximity operator of $\MK_\lbd$ (§~\ref{sec:prox_sinkhorn_distance}). 
A discussion follows to compare the two approaches.
\subsection{Normalized Sinkhorn distance $\MK_{\lbd,\le N}$} 
\label{sec:normalized_sinkhorn_distance}

As the set $\S$ of admissible histograms does not prescribe the sum of histograms, we
consider here a different setting in which the histograms' total mass are bounded above by $N$, the number of pixels of the image domain $\Om$
\eql{\label{eq:set_normalized_histogram}
	\S_{\le N}: = \left\{a, b \in \R^{M}_+ \,\Big\vert\,  \dotp{a}{\U_{M}} = \dotp{b}{\U_{M}} \le N \right\}. 
}
As an admissible transport matrices $P_\lbd^\star$ from $a$ to $b$ is not normalized anymore (\emph{i.e.} $\dotp{P_\lbd^\star}{\U}\le N$),  we use a slight variant of the entropic regularization: 
\eql{\label{eq:normalized_entropy}
\begin{split}
 \tilde h(p) &:= N h\left(\tfrac{p}{N}\right) = - N \,\textbf{KL}\left(\tfrac{p}{N} \Vert \U\right)\\
& = - \dotp{p}{\log p} + \dotp{p}{\U} \log N \;{\color{black}\ge 0}.
\end{split}
}
\begin{corollary}
\label{corollary:dual_simplexNmax}
The convex conjugate of the normalized Sinkhorn distance
\eql{\label{eq:sinkhorn_distanceN}
\begin{split}
	& \hspace{-5mm} \MK_{\lbd,\le N} (\alpha)  :=
	\\
	& \hspace*{-3mm}
	\min_{\substack{p\in \RR^{M^2}\\\text{s.t. } p \ge \O, \; L^\Trp  p = \alpha}}  \hspace*{-2mm}
	 \dotp{p}{c + \tfrac{1}{\lbd}\log p - \tfrac{1}{\lbd}\log N \;\U} 
	 +{\chi_{\S_{\le N}}(\alpha)}
\end{split}
} 
reads
\eql{\label{eq:dual_sinkhorn_inegality}\small
\MK^*_{\lbd,\le N} (\beta)= 
\left\{\begin{array}{ll}
\frac{N}{\lbd} \langle q_\lbd(\beta), \U\rangle 
&\, \text{ if } \langle q_\lbd(\beta), \U\rangle \le 1,  
\\[3mm]
\frac{ N}{\lbd} \log \langle q_\lbd(\beta), \U\rangle  + \frac{ N}{\lbd}
& \, \text{ if }\langle q_\lbd(\beta), \U\rangle \ge  1,
 \\
\end{array}\right.
}
using the {vector}-valued function $q_\lbd(.) \mapsto e^{\lbd (L. - c)-\U}$ defined in~\eqref{eq:dual_sinkhorn_nonnormalized}.
\end{corollary}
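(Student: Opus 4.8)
The plan is to compute the conjugate straight from the definition $\MK^*_{\lbd,\le N}(\beta) = \sup_\alpha \dotp{\alpha}{\beta} - \MK_{\lbd,\le N}(\alpha)$ and to collapse the resulting nested optimization into a single concave maximization over the transport vector $p$. First I would observe that the objective in \eqref{eq:sinkhorn_distanceN} is exactly $\dotp{p}{c} - \tfrac1\lbd \tilde h(p)$, with $\tilde h$ as in \eqref{eq:normalized_entropy}, and that for any $p \ge \O$ satisfying $L^\Trp p = \alpha = (a;b)$ the marginal structure of $L^\Trp$ forces $\dotp{a}{\U} = \dotp{p}{\U} = \dotp{b}{\U}$. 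Hence the equality constraint carried by $\chi_{\S_{\le N}}$ is automatic, and the only residual effect of $\S_{\le N}$ is the mass bound $\dotp{p}{\U}\le N$. Since $\dotp{\alpha}{\beta} = \dotp{L^\Trp p}{\beta} = \dotp{p}{L\beta}$ and $-\min_p = \max_p$, the supremum over $\alpha$ and the inner minimization merge into a joint supremum over $(\alpha,p)$, which, as $\alpha = L^\Trp p$ is determined by $p$, reduces to
\eq{
	\MK^*_{\lbd,\le N}(\beta) = \sup_{p\ge\O,\,\dotp{p}{\U}\le N} \dotp{p}{L\beta - c} - \tfrac1\lbd\dotp{p}{\log p} + \tfrac{\log N}{\lbd}\dotp{p}{\U}.
}
The entropic term makes this objective strictly concave and continuous (with $0\log 0 = 0$) on the compact feasible set, so the supremum is attained at a unique point.

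Next I would solve this maximization. Dropping the mass constraint, the problem separates over the $M^2$ coordinates, and setting the gradient to zero gives $\log p_k = \lbd(L\beta - c)_k - 1 + \log N$, that is $p_k = N\,(q_\lbd(\beta))_k$ with $q_\lbd$ as in \eqref{eq:dual_sinkhorn_nonnormalized}. This unconstrained optimum has total mass $\dotp{p}{\U} = N\dotp{q_\lbd(\beta)}{\U}$, hence is feasible precisely when $\dotp{q_\lbd(\beta)}{\U}\le 1$. Substituting it back, the $\log$ and $\log N$ contributions cancel and every coordinate contributes $\tfrac1\lbd p_k$, leaving the value $\tfrac{N}{\lbd}\dotp{q_\lbd(\beta)}{\U}$, which is the first branch of \eqref{eq:dual_sinkhorn_inegality} (and recovers the Cuturi--Doucet formula of Proposition~1 up to the factor $N$ coming from $\tilde h = N\,h(\cdot/N)$).

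For the case $\dotp{q_\lbd(\beta)}{\U}\ge 1$ the unconstrained maximizer is infeasible, so by concavity the constrained maximizer lies on the boundary $\dotp{p}{\U} = N$. Introducing a multiplier $\mu \ge 0$ for $\dotp{p}{\U}\le N$, stationarity yields $p_k = N e^{-\lbd\mu}(q_\lbd(\beta))_k$; imposing $\dotp{p}{\U} = N$ fixes $e^{\lbd\mu} = \dotp{q_\lbd(\beta)}{\U}$, hence $\mu = \tfrac1\lbd\log\dotp{q_\lbd(\beta)}{\U} \ge 0$, consistent with complementary slackness. A short computation shows the objective at this point equals $\mu\dotp{p}{\U} + \tfrac1\lbd\dotp{p}{\U}$, so with $\dotp{p}{\U} = N$ it gives $\tfrac{N}{\lbd}\log\dotp{q_\lbd(\beta)}{\U} + \tfrac{N}{\lbd}$, the second branch. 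I would close by checking consistency at the threshold $\dotp{q_\lbd(\beta)}{\U} = 1$, where both branches return $\tfrac{N}{\lbd}$.

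Most of this is routine once the reduction to a single concave maximization is in place; the step needing the most care is that reduction itself, namely verifying that $\chi_{\S_{\le N}}$ contributes only the mass bound $\dotp{p}{\U}\le N$ rather than an extra equality, and legitimately fusing $\sup_\alpha$ with the inner minimum. The only genuinely case-dependent part is the KKT treatment of the inequality constraint, which is precisely what produces the two-branch formula.
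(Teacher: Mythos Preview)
Your argument is correct and in fact cleaner than the paper's. The key reduction---that $\sup_\alpha[\dotp{\alpha}{\beta}-\min_p(\cdots)]$ collapses to a single supremum over $p$ because $\alpha=L^\Trp p$ is determined and $\chi_{\S_{\le N}}$ contributes only the mass bound $\dotp{p}{\U}\le N$---is justified exactly as you say, and the KKT analysis that follows is routine.

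The paper takes a different route. Rather than attacking the conjugate directly, it writes the \emph{primal} $\MK_{\lbd,\le N}(a,b)$ as a saddle-point problem via Lagrange multipliers $(u,v,w)$ for the marginal and mass constraints, eliminates $P$ by stationarity, and then maximizes explicitly over $w\le 0$ to obtain the two cases. This expresses $\MK_{\lbd,\le N}(a,b)=\max_{u,v}\big[\dotp{u}{a}+\dotp{v}{b}-\phi(u,v)\big]$ for a certain $\phi$, and the conjugate is identified as $\phi$ through the biconjugation $\MK_{\lbd,\le N}^{**}=\MK_{\lbd,\le N}$. Your multiplier $\mu$ corresponds to their $-w$, and your unconstrained maximizer $p=N q_\lbd(\beta)$ is exactly their $P^\star$ with $w=0$.

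The trade-offs: your approach is shorter and avoids the need to check that $\MK_{\lbd,\le N}$ is proper, convex, and lower semicontinuous (needed for $\MK^{**}=\MK$). The paper's approach, on the other hand, makes the Sinkhorn structure $P^\star=\diag(e^{\lbd u})\,e^{-\lbd C}\,\diag(e^{\lbd v})$ visible as a by-product, which ties in with the surrounding discussion of the diagonal-scaling form of the optimal plan.
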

\begin{proof}
See appendix \ref{proof_coro}. 
\end{proof}
Observe that the dual function $\MK^*_{\lbd,\le N}(\beta)$ is continuous at values $\dotp{q_\lbd(\beta^\star)}{\U}=1$.
Note also that the optimal transport matrix now is written
$P^\star_\lbd = N Q_\lbd(\beta)$ 
if $\dotp{Q_\lbd(\beta)}{\U} \le 1$,
and $P^\star_\lbd = N \tfrac{Q_\lbd(\beta)}{\dotp{Q_\lbd(\beta)}{\U}}$ otherwise.

\subsection{Gradient of $\MK_{\lbd,\le N}^*$}
\label{sec:normalized_sinkhorn_gradient}

From Corollary \ref{corollary:dual_simplexNmax}, we can express the gradient of $\MK_{\lbd,\le N}^*$ which is continuous
(writing $Q$ from Eq.~\eqref{eq:dual_sinkhorn_nonnormalized_matrix} in place of $Q_\lbd(\beta)$ to simplify expression)

\eql{\label{eq:dual_derivative_inegality}
\nabla \MK^*_{\lbd,\le N} (\beta) = 
\left\{\begin{array}{cll}
	N &\left(  Q \, \U; Q^\top \U \right)  
	&\,   \text{ if } \langle Q , \U\rangle \le  1,  
\\[2mm]
	\dfrac	{N}{\langle Q , \U\rangle} & \left( Q \, \U; Q^\top \U\right)   
	&\,  \text{ if } \langle Q, \U\rangle \ge   1. \\
\end{array}\right.
}
In vectorial notation, we have a simpler expression using matrix $L$:
\eql{\label{eq:dual_derivative_inegality_vector}
\nabla \MK^*_{\lbd,\le N} (\beta) = 
\begin{cases}
	N \hspace*{+1mm} L^\top q_\lbd (\beta) \hspace*{-1mm}
	&\text{if } \langle q_\lbd (\beta)   , \U\rangle  \!\le\!  1,  
\\[3mm]
	N \dfrac{ L^\top q_\lbd (\beta)  }{\langle  q_\lbd (\beta)  , \U\rangle} \hspace*{-1mm}
	&\text{if } \langle q_\lbd (\beta)  , \U\rangle \!\ge \!  1. \\
\end{cases}
}

\noindent
We emphasis here that,
when restricting the Sinkhorn distance to  histograms on the probability simplex $\PS_{1,M}$ (\emph{i.e.} the special case where $N=1$ and $\langle Q, \U\rangle =  1$), or more generally on $\S_{\le 1}$,
we retrieve a similar expression than the one originatively demonstrated in~\cite{CuturiPeyre_smoothdualwasserstein}.

{Finally, the normalized Sinkhorn transport cost can be used in the generic optimization scheme due to the following property.}
\begin{proposition}\label{prop:lipschitz_dual}
The gradient $\nabla \MK^{*}_{\lbd,\le N}$ is a Lipschitz continuous function with constant $L_{\MK^*}$ bounded by  $2\lbd N$.
\end{proposition}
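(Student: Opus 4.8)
The plan is to bound the spectral norm of the Hessian $\nabla^2 \MK^*_{\lbd,\le N}$ uniformly by $2\lbd N$ on each of the two smooth regions, and then to deduce the Lipschitz estimate for the gradient by a mean value argument. Recall that $\MK^*_{\lbd,\le N}$ is $C^\infty$ on each open set $\{\dotp{q_\lbd(\beta)}{\U}<1\}$ and $\{\dotp{q_\lbd(\beta)}{\U}>1\}$, and that its gradient \eqref{eq:dual_derivative_inegality_vector} has already been shown to be continuous across the interface $\dotp{q_\lbd(\beta)}{\U}=1$. Since a continuous map whose Jacobian exists and has operator norm at most $K$ on a dense open set is globally $K$-Lipschitz (integrate along segments, the Hessian being defined for all but a negligible set of parameters), it suffices to control $\norm{\nabla^2\MK^*_{\lbd,\le N}}$ region by region.

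First I would differentiate the vector field $q_\lbd$. Since $(L\beta)_{k(i,j)}=\beta_{1,i}+\beta_{2,j}$ for $\beta=(\beta_1;\beta_2)$, differentiating $q_\lbd(\beta)=e^{\lbd(L\beta-c)-\U}$ from \eqref{eq:dual_sinkhorn_nonnormalized} gives the Jacobian $\partial q_\lbd/\partial\beta=\lbd\,\diag(q_\lbd)\,L$. Writing $q=q_\lbd(\beta)$ and $s=\dotp{q}{\U}$, this yields in the first region ($s\le1$) the Hessian $\nabla^2\MK^*_{\lbd,\le N}=N\lbd\,L^\top\diag(q)\,L$, which is symmetric positive semidefinite. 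To bound its quadratic form for $v=(v_1;v_2)$ with $\norm{v}=1$, I would write $v^\top L^\top\diag(q)L\,v=\sum_{i,j}q_{k(i,j)}(v_{1,i}+v_{2,j})^2$ and use the elementary inequality $(v_{1,i}+v_{2,j})^2\le 2\bigl((v_{1,i})^2+(v_{2,j})^2\bigr)$ together with the fact that each row and column sum of the matrix associated to $q$ is at most $s\le1$. This bounds the sum by $2(\norm{v_1}^2+\norm{v_2}^2)=2$, so the Hessian has spectral norm at most $2\lbd N$ on this region.

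The second region ($s\ge1$) is the main obstacle, because of the normalization factor $1/s$. Differentiating $\nabla\MK^*_{\lbd,\le N}=\tfrac{N}{s}L^\top q$ gives $\nabla^2\MK^*_{\lbd,\le N}=\tfrac{N\lbd}{s}\,L^\top\bigl(\diag(q)-\tfrac1s\,qq^\top\bigr)L$. The key observation is that for $\norm{v}=1$ the quadratic form equals $N\lbd\,\mathrm{Var}_{\mu}(Lv)$, where $\mu=q/s$ is a probability vector: indeed, setting $w=Lv$, one checks $\tfrac1s\bigl(\sum_i q_i w_i^2-\tfrac1s(\sum_i q_i w_i)^2\bigr)=\sum_i\mu_i w_i^2-(\sum_i\mu_i w_i)^2=\mathrm{Var}_\mu(w)$. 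This matrix is therefore the covariance of $Lv$ under $\mu$, hence positive semidefinite, and its variance is dominated by the second moment $\sum_i\mu_i(Lv)_i^2$. Applying the same inequality as above, now with the row/column sums of $\mu$ bounded by $1$ since $\mu$ is a probability vector, gives $\sum_i\mu_i(Lv)_i^2\le 2\norm{v}^2=2$, and again the spectral norm is at most $2\lbd N$.

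Combining the two regions, the Hessian is bounded by $2\lbd N$ wherever it exists, and the mean value argument of the first paragraph yields $\norm{\nabla\MK^*_{\lbd,\le N}(\beta)-\nabla\MK^*_{\lbd,\le N}(\beta')}\le 2\lbd N\,\norm{\beta-\beta'}$, so $L_{\MK^*}\le 2\lbd N$. The only delicate point is the passage through the interface $\dotp{q_\lbd(\beta)}{\U}=1$, but this is harmless precisely because the gradient is already known to be continuous there, so no jump contribution appears in the integration.
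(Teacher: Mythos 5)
Your proof is correct and the bound matches the paper's. The overall architecture is the same as the paper's argument — bound the Hessian separately on the two smooth regions and handle the interface $\dotp{q_\lbd(\beta)}{\U}=1$ via continuity of the gradient and a segment/convexity argument — but the way you control the Hessian is genuinely different. The paper exploits positive semidefiniteness only through the crude inequality $\norm{\H}\le \operatorname{trace}(\H)$, computes the diagonal second derivatives explicitly, and checks that the trace is at most $2\lbd N$ in each region (in the normalized region the trace is exactly $2\lbd N\dotp{Q_\lbd}{\U}$). You instead bound the operator norm directly: in the unnormalized region via $\sum_{i,j}q_{k(i,j)}(v_{1,i}+v_{2,j})^2\le 2(\norm{v_1}^2+\norm{v_2}^2)\dotp{q}{\U}$, and in the normalized region by recognizing the quadratic form as $N\lbd\,\mathrm{Var}_\mu(Lv)$ for the probability vector $\mu=q/\dotp{q}{\U}$, dominating the variance by the second moment and using that the marginals of $\mu$ are at most one. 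Your route is more structural: it makes the positive semidefiniteness of the Hessian transparent (it is $N\lbd$ times a covariance operator), avoids the explicit second-derivative computations, and bounds the spectral norm itself rather than the trace — which is a priori a sharper estimate, though both happen to deliver the same constant $2\lbd N$ here. The only point to state a little more carefully is the mean-value step across the interface: since $s(\beta)=\dotp{q_\lbd(\beta)}{\U}$ is convex, a segment meets $\{s=1\}$ in at most two closed subintervals on which both gradient formulas coincide, so the piecewise integration is legitimate; this is the same mild technicality the paper's Case~3 glosses over with its intermediate boundary point $Y$.
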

\begin{proof}
See appendix \ref{sec:proof_lipschitz_dual}.
\end{proof}

\subsection{Optimization using $\nabla \MK^*_{\lbd, \le N}$}\label{sec:opt:grad}

The binary-segmentation problem \eqref{eq:convex_segmentation_problem} with normalized Sinkhorn transport cost can be expressed as:
\eql{\label{eq:OT_segmentation_energy_reg}
\begin{split}
\min_u\hspace{0.5cm}  &\chi_{[0,1]^N}(u)+\rho \TV(u) 
+ \MK_{\lbd, \le N}{\left( H u, Au \right)}\\
&+ \MK_{\lbd, \le N}{\left( H (\U -u), B(\U-u) \right)}.
\end{split}
}

\noindent
Using the Fenchel transform, the problem \eqref{eq:OT_segmentation_energy_reg} can be reformulated as:
\begin{equation*}\label{pb:dual}
\begin{split}
	\min_u \max_{\substack{v\\p_A^{},q_A^{}\\p_B^{},q_B^{}}}
	\hspace{0pt}
	& \quad \langle \Diff u,v\rangle\hspace{-1pt} + \chi_{[0,1]^{N}}(u) -\chi_{\lVert . \rVert_{\infty,2}\le\rho}(v)\hspace{-1pt}
	\\[-5mm]
	& \;+ \langle H u,p_A^{}\rangle + \langle H (\U_{} -u),p_B^{}\rangle
	\\ 
	& \;+ \langle A^{}u,q_A^{}\rangle\hspace{-1pt}+\langle B^{}(\U_{} -u),q_B^{}\rangle 
	\\ 
	& \;- \MK^*_{\lbd,\le N}(p_A^{},q_A^{})-\MK^*_{\lbd,\le N}(p_B^{},q_B^{}),
\end{split}
\end{equation*}

\noindent
and can be optimized with the algorithm \eqref{algo:primaldual},
{setting $S_1^* = 0$ and $S_2^* = \MK^*_{\lbd,\le N}$}.
Using proposition \ref{prop:lipschitz_dual}, $\nabla G^*$ is a Lipschitz continuous function with constant $L_{G^*} = 2L_{\MK^*}$.
The definition of the diagonal preconditionners in the same as in problem~\eqref{pb:dual_type}, using Formula~\eqref{eq:diag_precond}.
The extension to multiphase segmentation is also analogue to problem~\eqref{pb:dual_type} (see the last paragraph of Section~\ref{sec:optim}).


\paragraph{Advantages and drawback}
It has been shown in~\cite{CuturiPeyre_smoothdualwasserstein} that, aside from an increased robustness to outliers, the smoothing of the optimal transport cost offers significant numerical stability.
However, the optimization scheme may be slow due  to the use of unnormalized simplex $\S_{\le N}$.
In practice, the Lipschitz constant $L_{G^*}$ will be large for high resolution images (\emph{i.e.} large values of $N$) and for tight approximations of the $\MK$ cost (\emph{i.e.} $\lbd \gg1$). It will lead to low values of  time steps parameters in \eqref{eq:diag_precond} and   involve a slow explicit gradient ascent in the dual update \eqref{algo:primaldual}. 
In such a case, we can resort to the alternative scheme proposed hereafter.\vspace{-0.4cm}

\subsection{Primal-dual formulation of $\MK_{\lbd}$}
\label{sec:prox_sinkhorn_distance}



An alternative optimization of \eqref{eq:OT_segmentation_energy_reg} 
consists in using the proximity map of $G^*$.
Since we cannot compute such an operator for $\MK_\lbd^*$ in a closed form, {or in an effective way}, 
we resort instead to a biconjugaison, as previously done in §~\ref{sec:opt:biconj}.\vspace{-0.2cm} 


\paragraph{Biconjugaison}\label{sec:opt:biconj2}
For consistency with the previous section, we consider again the normalized entropy~\eqref{eq:normalized_entropy} to define the regularized cost function $\MK_{\lbd,N}$ on the set $\S$ in order to exhibit the factor $N$:
 \eql{\label{eq:MK_lbd_N}
	\begin{split}
 	\MK_{\lbd,N} (\alpha)
	& 
	:= \hspace*{-3mm} 
	\min_{\substack{p\in \RR^{M^2}\\\text{s.t. } p \ge \O, \; L^\Trp  p = \alpha}} \hspace*{-3mm}  
	\dotp{p}{c + \tfrac{1}{\lbd} \log (p / N)} + {\chi_{\S}(\alpha)}.
\end{split}
}

Simple calculations show that the dual conjugate in Eq.~\eqref{eq:dual_sinkhorn_nonnormalized} becomes
\eq{\MK_{\lbd,N}^* (\beta) 
	= \tfrac{N}{\lbd} \langle q_\lbd(\beta), \U_N \rangle 
.}
Introducing the dual conjugate function 
\eql{\label{eq:g_lbd}
	g_\lbd^*(q) := \tfrac{N}{\lbd} \dotp{e^{\lbd (q-c) - \U}}{\U}
} 
that is convex and continuous, we have
\eql{\label{eq:primal_dual_MK_lbd_splitting}
\begin{split}
	\MK_{\lbd,N}^*(\beta) 
	& = g_\lbd^*( L\beta ) = \max_r\; \dotp{r}{L\beta} - g_\lbd(r)
	\\
\end{split}
}
\text{and}
\eq{
\begin{split}
	\MK_{\lbd,N}(\alpha) &
	= \max_{\beta}\; \dotp{\alpha}{\beta}  - g_\lbd^*( L\beta )\\
	& = \min_{r} \; \max_{\beta} \; 
	 \dotp{\alpha - L^\Trp  r}{\beta} + g_\lbd(r).
	\\
\end{split}
}

This reformulation, combined with the following expression of the proximity function of $g_\lbd$, enables to solve efficiently the segmentation problem with $\MK_{\lbd,N}$. 
\begin{proposition}\label{prop:prox_bidual}
	The proximity operator of the function $g_\lbd$, the conjugate of $g_\lbd^*$ defined in Eq.~\eqref{eq:g_lbd},
	is
	\begin{equation}\label{eq:prox_dual}
	\begin{split}
		\text{prox}_{\;\tau g_\lbd}(r) 
		& =  \frac{\tau}{\lbd} W\left(  \tfrac\lbd\tau N e^{\lbd (\frac{r}{\tau}-c) - \U}  \right)
	\end{split}
	\end{equation}
where $W$ is the \emph{Lambert function}, such that $w=W(z)$ is solution of $we^w = z$. The solution is unique as 
$z\ge  0$.
\end{proposition}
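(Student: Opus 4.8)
The plan is to identify $g_\lbd$ explicitly by Legendre--Fenchel duality, then to exploit its separability to reduce the proximity computation to a scalar stationarity condition that turns out to be exactly the defining equation of the Lambert function.

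First I would recover $g_\lbd = (g_\lbd^*)^*$. The conjugate $g_\lbd^*$ from Eq.~\eqref{eq:g_lbd} is smooth, convex and \emph{separable}, namely $g_\lbd^*(q) = \tfrac{N}{\lbd}\sum_k e^{\lbd(q_k - c_k) - 1}$, so its biconjugate equals itself and $g_\lbd$ is obtained by conjugating each scalar term $\varphi(t) = \tfrac{N}{\lbd}e^{\lbd(t-c_k)-1}$. A direct computation of $\varphi^*(s) = \sup_t\{st - \varphi(t)\}$ gives, for $s>0$, the stationary point $s = N e^{\lbd(t-c_k)-1}$ and the value $\varphi^*(s) = s\,c_k + \tfrac{s}{\lbd}\log(s/N)$, while $\varphi^*(0)=0$ and $\varphi^*(s)=+\infty$ for $s<0$. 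Summing over coordinates yields $g_\lbd(r) = \dotp{r}{c} + \tfrac1\lbd\dotp{r}{\log(r/N)} + \chi_{\cdot\ge\O}(r)$ with the convention $0\log 0 = 0$, which is precisely the entropic primal objective of~\eqref{eq:MK_lbd_N}, as expected.

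Next I would use separability a second time: since $g_\lbd$ decouples across coordinates, $\Prox_{\tau g_\lbd}$ acts componentwise, and for each coordinate I must minimize the strictly convex scalar map $w \mapsto \tfrac12(w-r)^2 + \tau\big(wc + \tfrac{w}{\lbd}\log(w/N)\big)$ over $w\ge 0$. Strict convexity guarantees a unique minimizer. The first-order optimality condition reads $(w-r) + \tau c + \tfrac{\tau}{\lbd}\log(w/N) + \tfrac{\tau}{\lbd} = 0$, and because the nonnegativity constraint turns out to be inactive (the solution constructed below is automatically positive), this interior stationarity equation characterizes the prox exactly.

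The key algebraic step is then to isolate the logarithm and exponentiate, giving $w\,e^{\lbd w/\tau} = N e^{\lbd r/\tau - \lbd c - 1}$; substituting $u := \lbd w/\tau$ reaches the canonical form $u\,e^u = \tfrac{\lbd}{\tau}N e^{\lbd(\frac{r}{\tau}-c) - 1}$. By definition of the Lambert function this gives $u = W\big(\tfrac{\lbd}{\tau}N e^{\lbd(\frac{r}{\tau}-c)-1}\big)$, whence $w = \tfrac{\tau}{\lbd}W(\cdots)$, which is Eq.~\eqref{eq:prox_dual}. Since the argument of $W$ is nonnegative, the principal branch is real and single-valued, so $W(\cdot)\ge 0$ and the resulting $w$ is positive, confirming both that the constraint $w\ge 0$ was indeed inactive and that the minimizer is unique. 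I expect the main obstacle to be the bookkeeping in this last reduction --- keeping the factors $\tau$, $\lbd$, $N$ and the additive constant $-1$ in their right places so that the transcendental stationarity equation collapses exactly onto $u\,e^u = z$; the conjugate computation and the separability reductions are routine by comparison.
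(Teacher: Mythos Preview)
Your proposal is correct but takes a different route from the paper. The paper never computes $g_\lbd$ explicitly; instead it invokes Moreau's identity $\Prox_{\tau g}(p) + \tau\,\Prox_{g^*/\tau}(p/\tau) = p$ and works directly with the prox of the known smooth function $g_\lbd^*$, whose separable first-order condition $q_k - p_k + \tau N e^{\lbd(q_k - c_k)-1} = 0$ is then massaged into Lambert form. You instead compute the biconjugate $g_\lbd(r) = \dotp{r}{c} + \tfrac{1}{\lbd}\dotp{r}{\log(r/N)} + \chi_{\cdot\ge \O}(r)$ first and solve the prox of $g_\lbd$ directly. The paper's route is slightly shorter, since it sidesteps the conjugate calculation and never has to argue that the nonnegativity constraint is inactive (no constraint appears on the $g_\lbd^*$ side). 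Your route is arguably more transparent: it exhibits $g_\lbd$ as the entropic primal integrand of~\eqref{eq:MK_lbd_N}, and it makes explicit why the Lambert argument is nonnegative so that the principal branch applies and the constraint $w\ge 0$ is automatically satisfied. Both approaches land on the same transcendental stationarity equation and the same final formula.
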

\begin{proof}
See appendix \ref{proof:prox_bidual}.
\end{proof}

Note that the Lambert function can be evaluated very fast,
using an efficient parallel algorithm that requires a few iterations~\cite{Lambert}.

\paragraph{Segmentation problem} 
Using Formula~\eqref{eq:g_lbd} into Eq.~\eqref{pb:dual_type} provides the following primal dual problem
\eql{\label{pb:dual_type_MK_lbd}
\begin{split}
\hspace*{-1mm}
	\min_{\substack{u\\r_A, r_B}}
	\; \max_{\substack{v\\p_A^{}, q_A^{}\\p_B^{}, q_B^{}}}
	\;
	 & \quad \dotp{ \Diff u}{v} - \chi_{\lVert.\rVert_{\infty,2} \le \rho}(v) 
	 \\[-5mm]
	 & + \dotp{ A^{}u}{q_A^{}} + \dotp{ B^{}(\U-u)}{q_B^{}}
	 \\
	 & + \dotp{ Hu}{p_A^{}} + \dotp{ H(\U -u)}{p_B^{}} 
	 \\
 	 & - \dotp{r_A}{L{\small \begin{bmatrix}p_A\\q_A\end{bmatrix}}} 
 	    - \dotp{r_B}{L{\small \begin{bmatrix}p_B\\q_B\end{bmatrix}}}   
 	 \\
 	 & + \chi_{[0,1]^{N}}(u) +  g_\lbd(r_A) + g_\lbd(r_B)
\end{split}
}
Again, we can use a variant of the algorithm described in \eqref{algo:primaldual}, 
augmented by primal variables $r_A$ and $r_B$.
The operator $K$ is the same than in Formula~\eqref{eq:biconjugate_K}.
The proximity function $\Prox_{\tau R}$ corresponds to
\eq{\begin{split}
	& \Prox_{\,\chi_{[0,1]^{N}}(u) + \tau g_\lbd(r_A) + \tau g_\lbd(r_B)} (u,r_A,r_B) \\
	& \quad=  \left( \Proj_{[0,1]^N}(u) ; \Prox_{\tau g_\lbd}(r_A) ; \Prox_{\tau g_\lbd}(r_B) \right)
	.
\end{split}}


\subsection{Comparison of the two approches}
\label{sec:comp_prox_grad}


In the previous sections, two variants of the entropic regularized transportation problem have been introduced: $\MK_{\lbd, \le  N}$ in~\eqref{eq:sinkhorn_distanceN}  and $\MK_{\lbd, N}$ in~\eqref{eq:MK_lbd_N}.
We underline the fact that, while having different definitions, these two metrics provide the same numerical result for any of the segmentation problems investigated in this paper, as the corresponding primal-dual optimal solutions verify the same property (\emph{i.e.} the mass of histograms in $\Delta$ cannot exceed the total number of pixels $N$) for which the metrics behave identically.


\section{Segmentation Experiments}\label{sec:seg_exp}

\paragraph{Experimental setting}

\begin{figure*}[!ht]
\centering{
	\begin{tabular}{cccc}
	\includegraphics[width=0.21\textwidth]{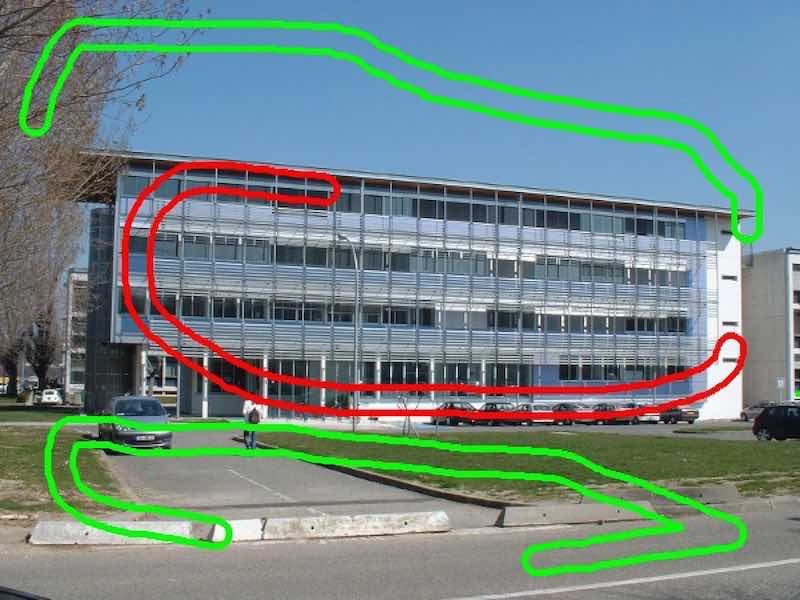} &
	\includegraphics[width=0.21\textwidth]{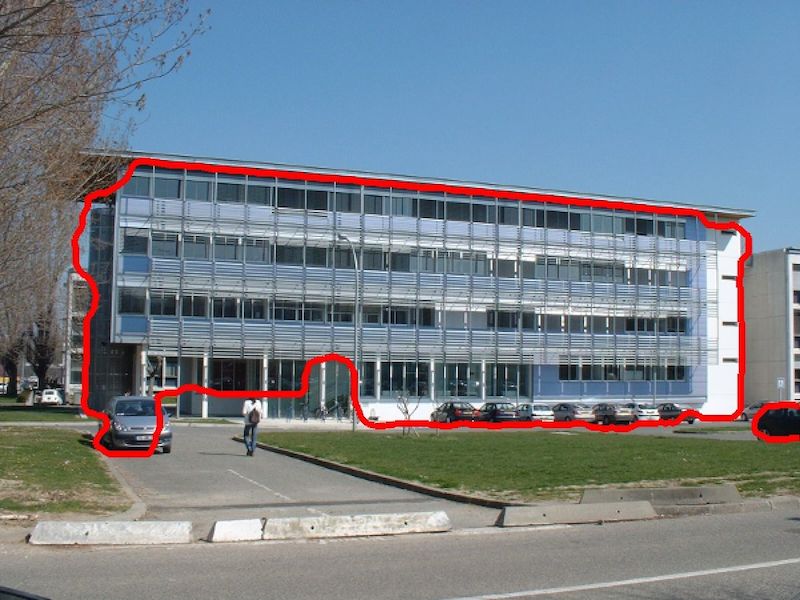} &
	\includegraphics[width=0.21\textwidth]{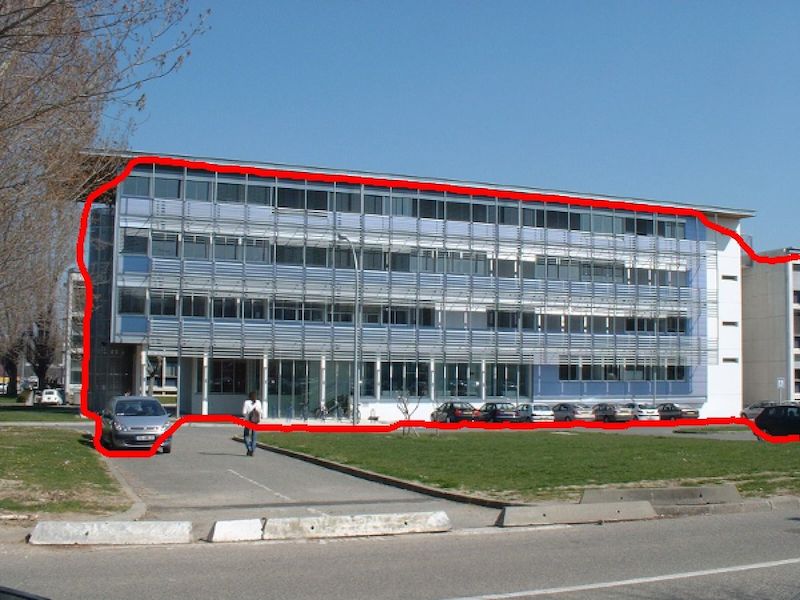} &
	\includegraphics[width=0.21\textwidth]{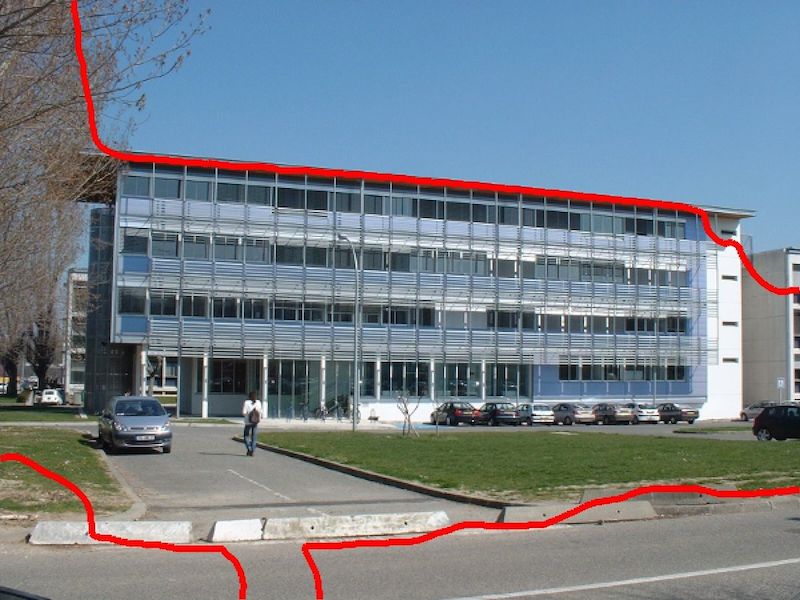} 
	\\
	\includegraphics[width=0.21\textwidth]{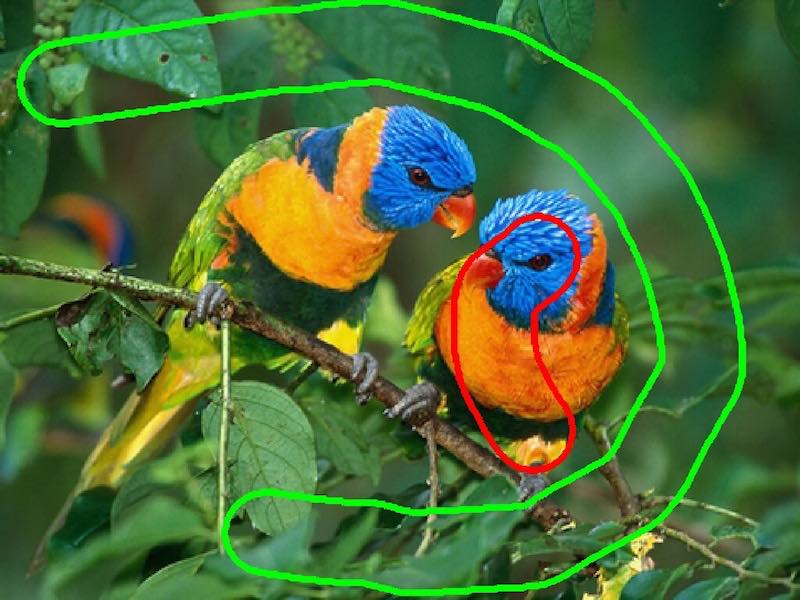} &
	\includegraphics[width=0.21\textwidth]{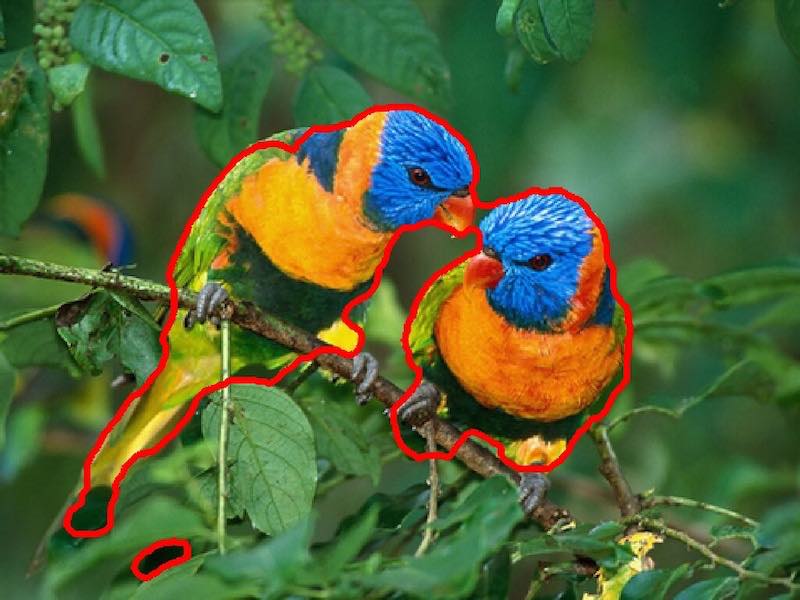} &
	\includegraphics[width=0.21\textwidth]{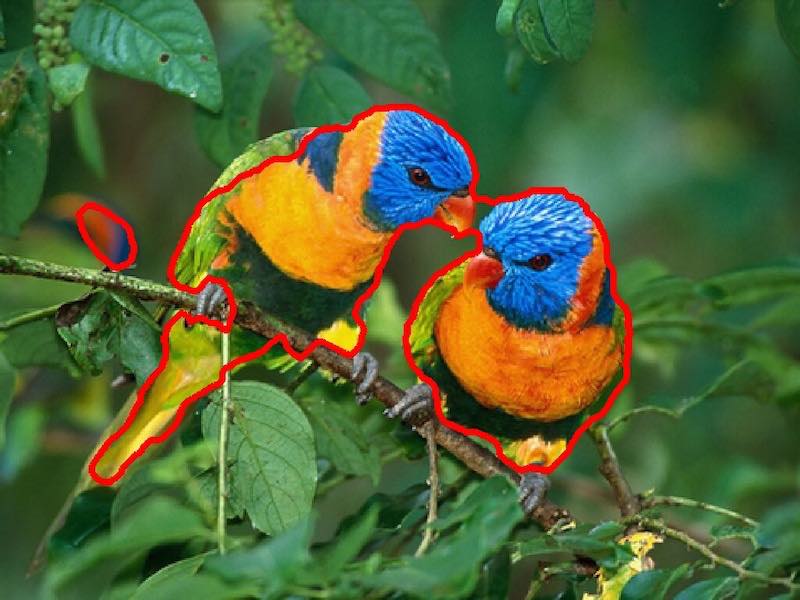} &
	\includegraphics[width=0.21\textwidth]{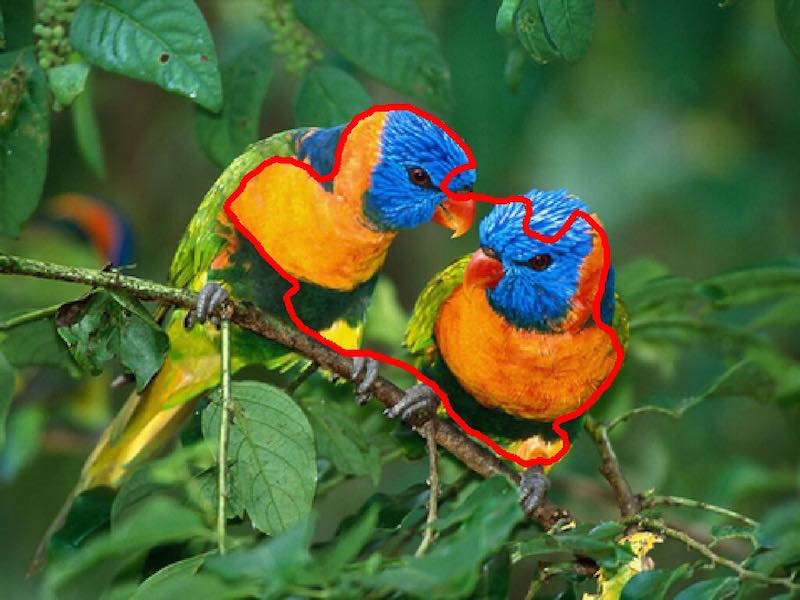} 
	\\
	\includegraphics[width=0.21\textwidth]{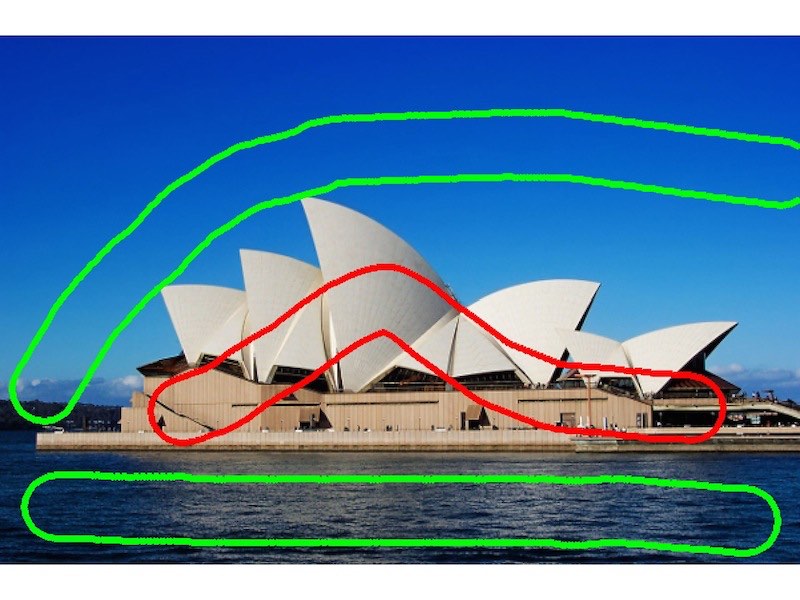} &
	\includegraphics[width=0.21\textwidth]{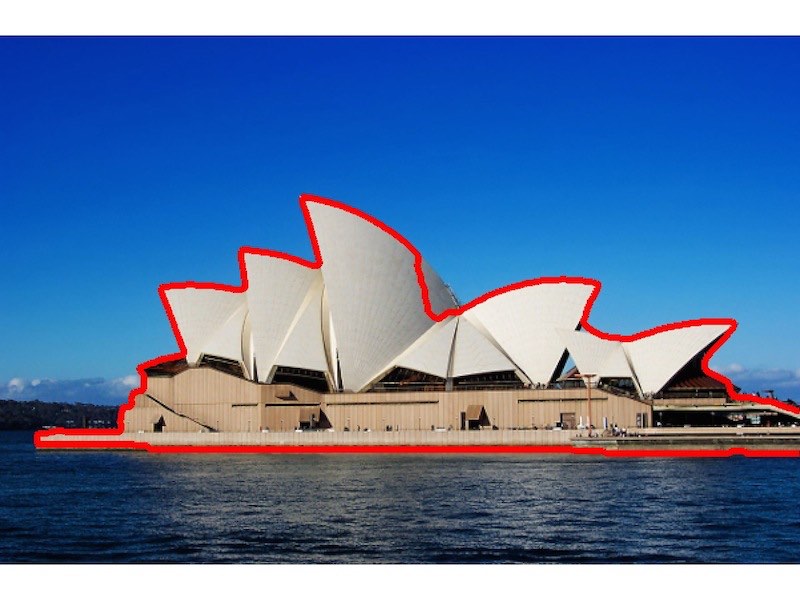} &
	\includegraphics[width=0.21\textwidth]{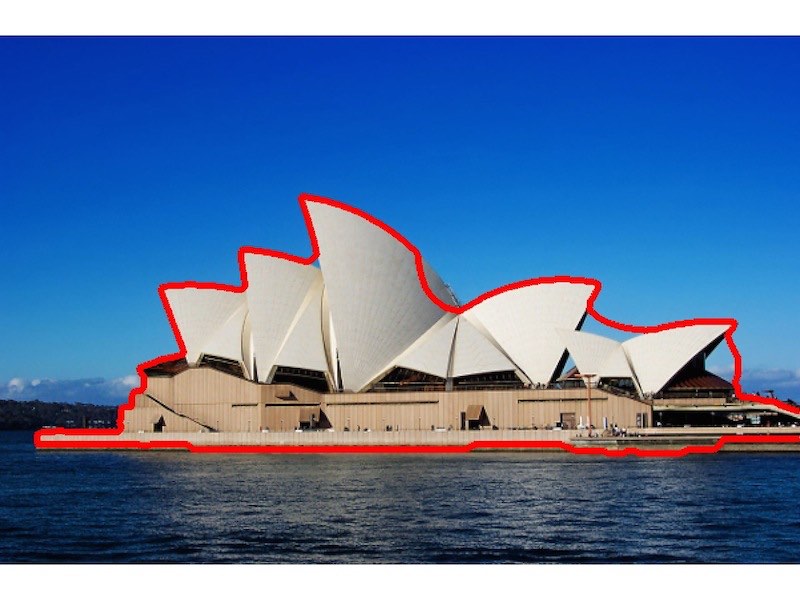} &
	\includegraphics[width=0.21\textwidth]{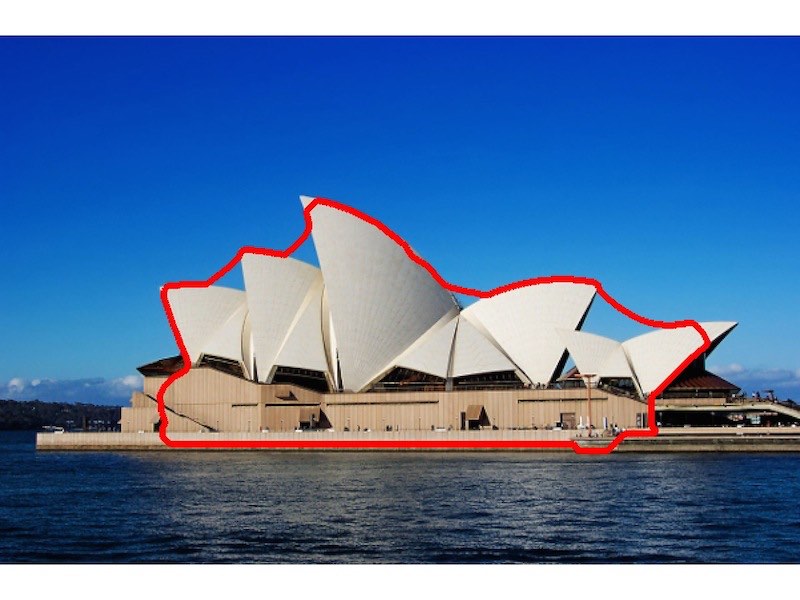} 
	\\
	Input & $\lbd = \infty$ & $\lbd = 100$ & $\lbd = 10$\\	
	\end{tabular}
}

\caption{\textbf{Comparison of segmentations obtained from the proposed models with $\MK_\lambda$ cost function.} 
The input images are partially labelled by the user, and the corresponding areas are used to compute the reference color distributions $a$ and $b$. 
The segmented regions, obtained from the thresholded solution, are contoured in red.
Different values of the regularization parameter $\lambda$ of the transport cost are used, $\lbd=\infty$ corresponding to the non-regularized model.
\vspace*{5mm}~
}
\label{fig:comp_lambda}
\end{figure*}

In this experimental section, exemplar regions are defined by the user with scribbles (see for instance Fig.~\ref{fig:thres}) or bounding boxes (Fig.~\ref{fig:zebras}). 
These regions are only used to built prior histograms, so erroneous labelling is tolerated.
The histograms $a$ and $b$ are built using hard-assignment on $M=8^n$ clusters, which are obtained with the K-means algorithm. 

We use either RGB color features ($\F = \Id$ and $n=d=3$) or the gradient norm of color features ($\F = \lVert \Diff . \rVert$ computed on each color channel, so that $n=3$).
The cost matrix is defined from the Euclidean metric $\lVert \cdot \rVert$ in $\R^n$ space, combined with the concave function $1-e^{-\gamma{\lVert \cdot \rVert}{}}$, which is known to be more robust to outliers~\cite{Rubner2000}.
Approximately $1$ minute is required to run $500$ iterations and segment 
a 1 Megapixel color image.

To account for the case where a region boundary coincide with the image border $\partial \Om$, we enlarge the size of the domain $\Om$ by 1 pixel and we force variable $u$ to be null on the border. 
That way, the model does not favor regions that lie across the boundary.

Throughout the experiments, the diagonal preconditioning is defined using Formula~\eqref{eq:diag_precond} with $r=\delta =1$.
We have observed an impressive convergence acceleration (approximately 3 orders of magnitude) due to preconditioning.

\paragraph{Projection onto the simplex}
The projector onto the discrete probability set $\PS_{1,K}$ 
can be computed in linear time complexity, see for instance~\cite{Condat_Simplex}.

\paragraph{Thresholding}
As previously stated, the segmentation map $u^*$ obtained by minimizing the functional \eqref{modele_seg} is not binary in general.
The final segmentation is obtained by thresholding the global minima $u^\star$ with $t=\tfrac{1}{2}$ (see Section~\ref{sec:threshold}). This leads to the best compromise in our experiments, as illustrated in Figure~\ref{fig:thres} that shows the influence of the threshold $t$ used to get a binary segmentation.

\begin{figure}[!h]
    \centering{
    \begin{tabular}{ccc}
    \includegraphics[width=0.14\textwidth]{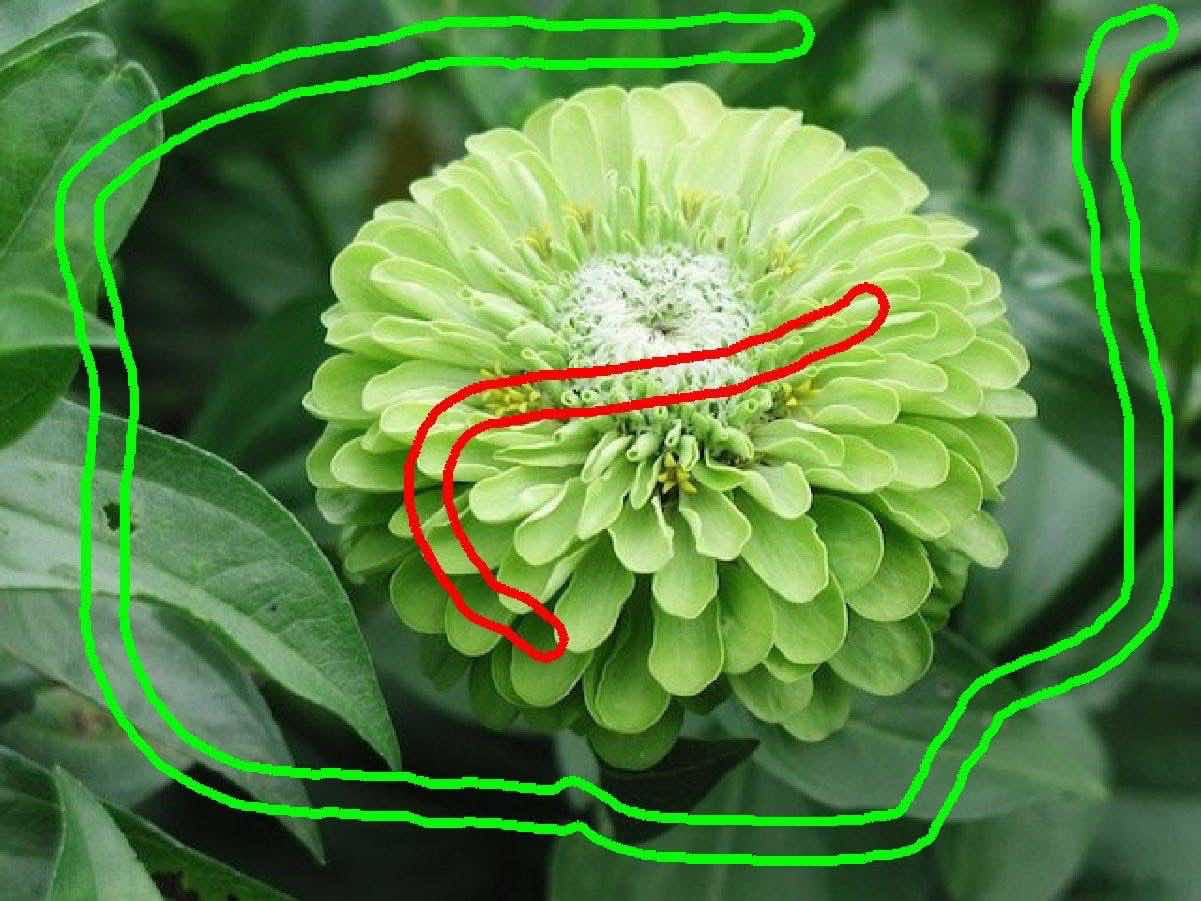} &\hspace{-0.2cm}
    \includegraphics[width=0.14\textwidth]{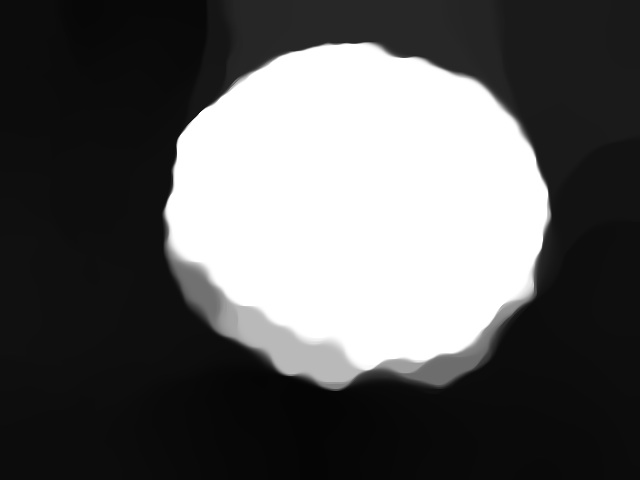} &\hspace{-0.2cm}
    \includegraphics[width=0.14\textwidth]{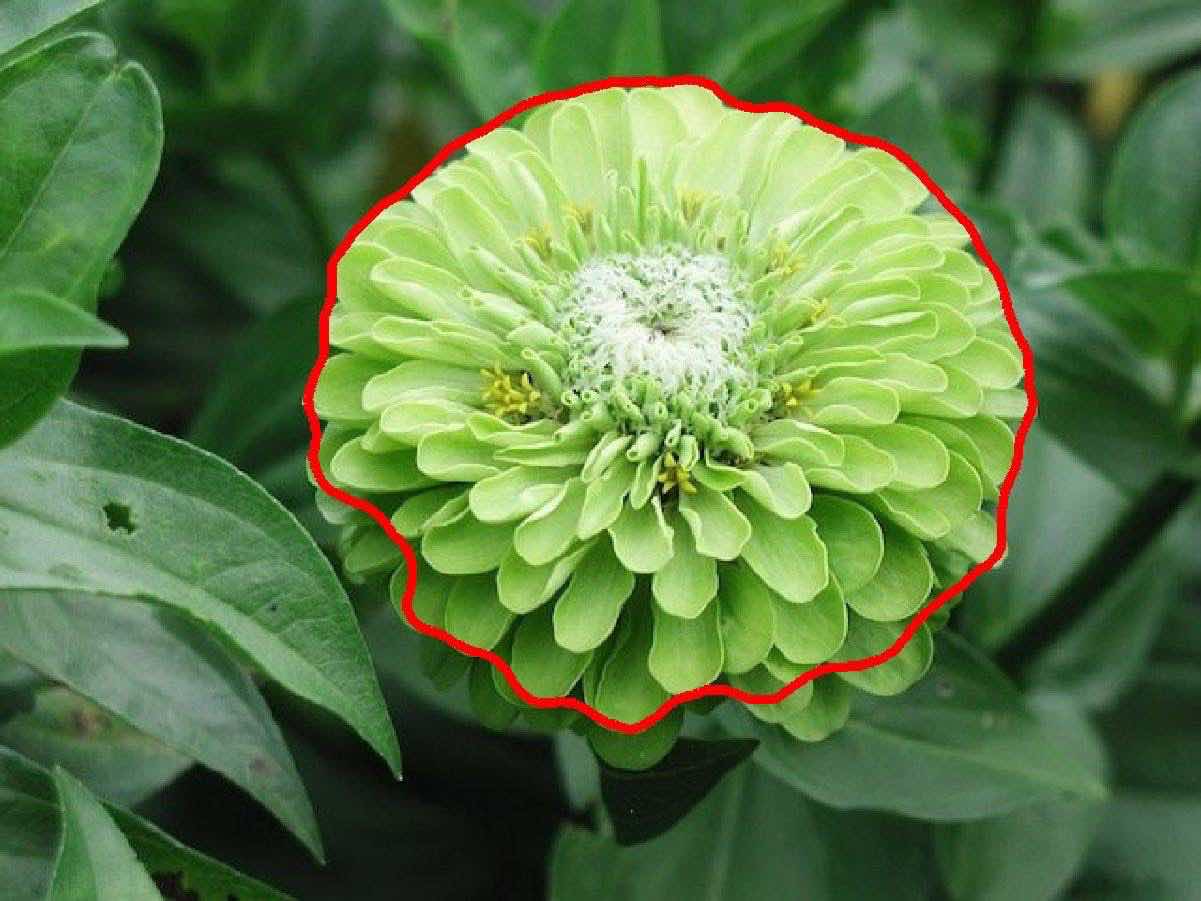}
    \\
    Input & $u$  & $t=0.5$ 
    \\
    \includegraphics[width=0.14\textwidth]{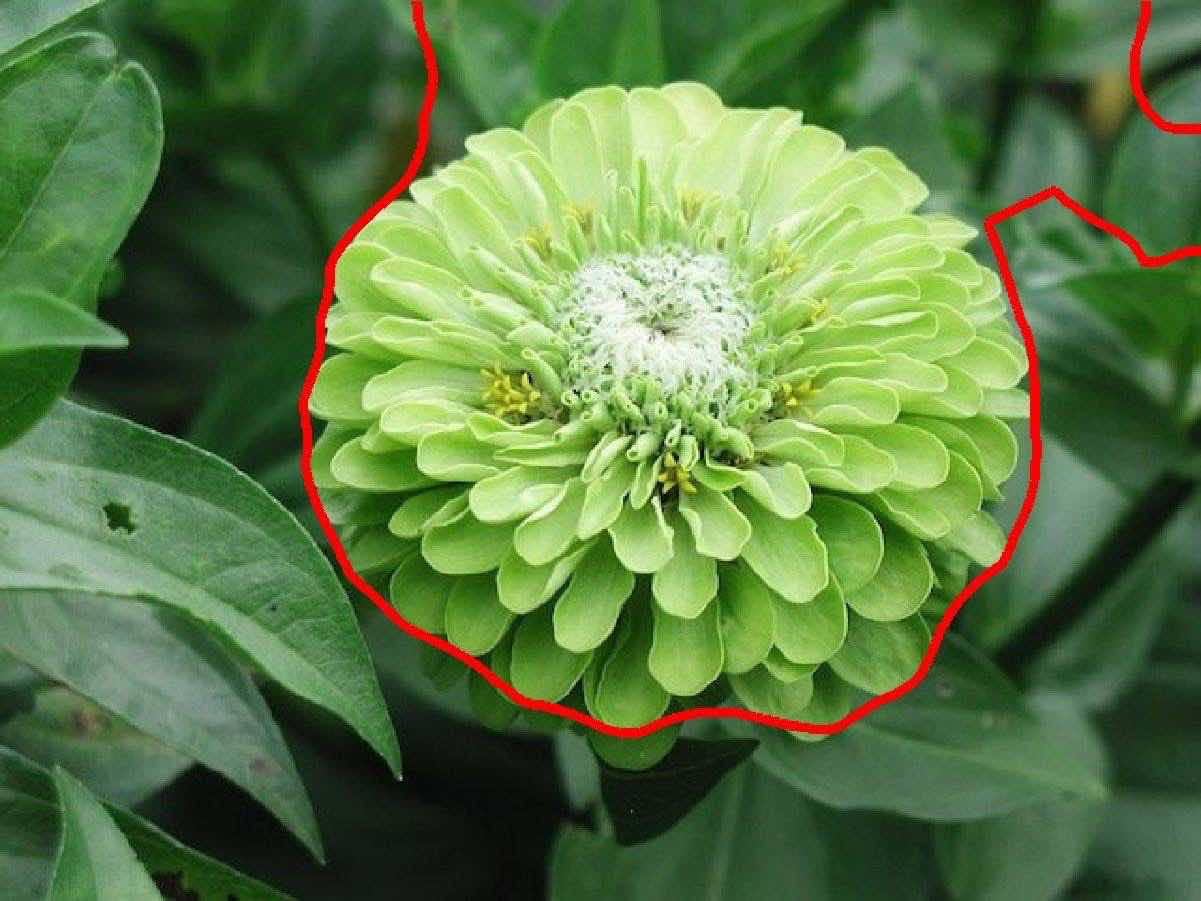} &\hspace{-0.2cm}
    \includegraphics[width=0.14\textwidth]{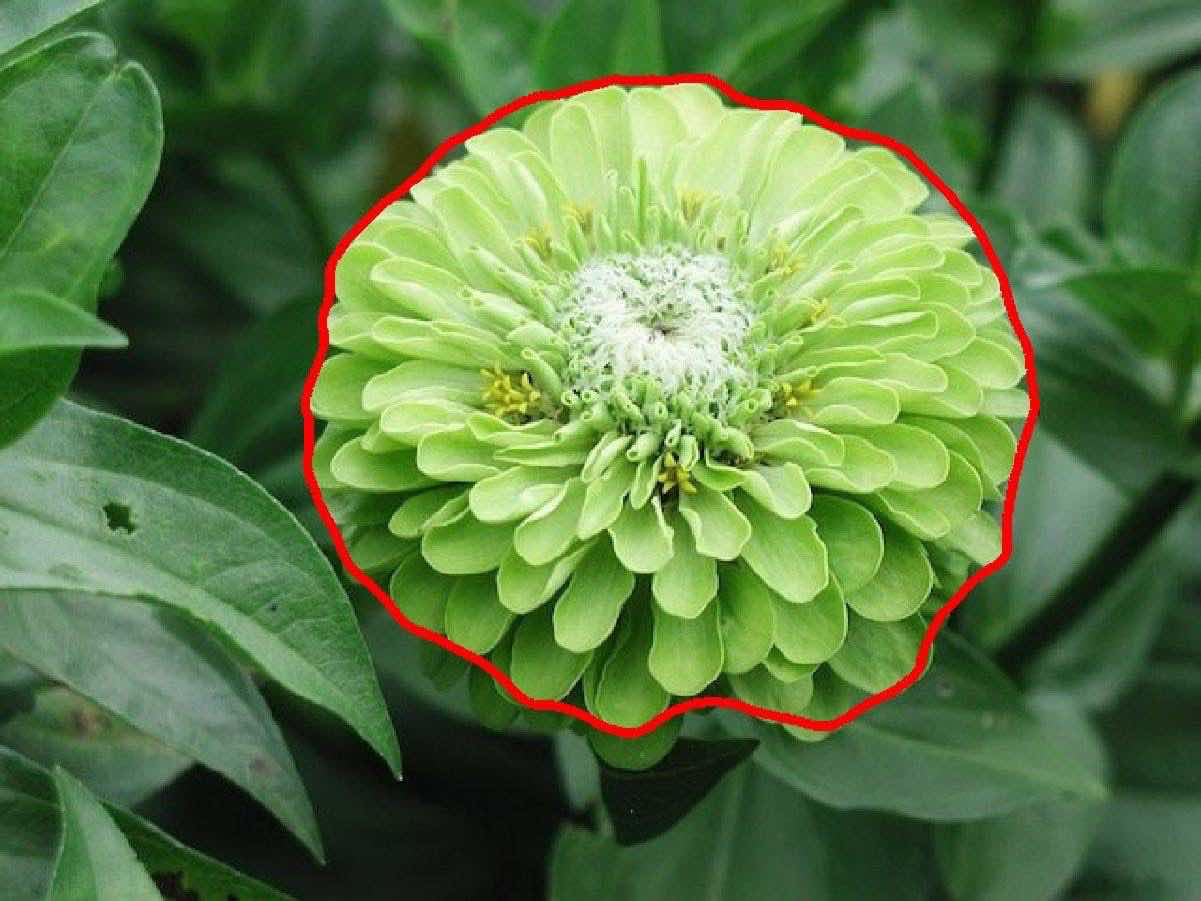} & \hspace{-0.2cm}
    \includegraphics[width=0.14\textwidth]{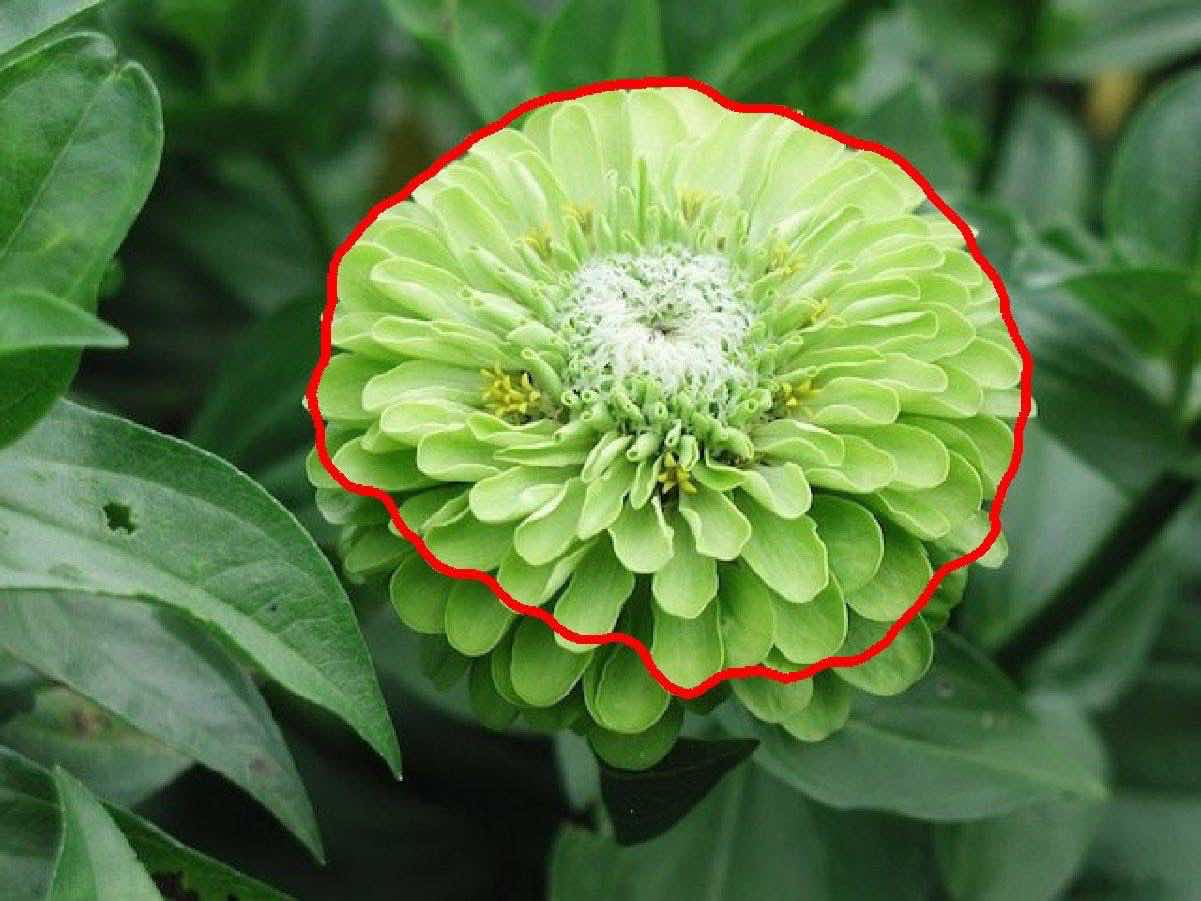} 
    \\
    $t=0.1$ & $t=0.2$ & $t=0.9$
    \end{tabular}
    }
\caption{\textbf{Illustration of the image segmentation method with $\MK$ transport cost and the influence of the final threshold parameter $t$ on the segmentation result.} 
The user defines scribbles which indicates the object to be segmented (here in red) and the background to be discarded (in green). 
The output image $u$ is a regularized weight map that gives the probability of a pixel to belong to the object.
This probability map $u$ is finally thresholded with a parameter $t$ to segment the input image into a region $R_t(u)$, which contour is displayed in red.
%
\vspace*{5mm}~
}
\label{fig:thres}
\end{figure}

\subsection{Regularized vs non-regularized $\MK$ distances}

As previously discussed in Section~\ref{sec:comp_prox_grad}, the solutions when using the gradient of $\MK_{\lbd,\le N}$ or the proximity operator based on $\MK_{\lbd,N}$ are the same when $N=\abs{\Om}$, even if the respective optimization schemes are different.
As a consequence, we simply denote by $\MK_\lbd$ when referring to these methods. We also indicates $\MK$ or $\lbd = \infty$ when not using any regularization. 

We first illustrate the influence of the $\lambda$ parameter in the regularized distance $\MK_\lbd$. 
Figure~\ref{fig:comp_lambda} gives a comparison between the non-regularized model, quite fast but using high dimensional representation~\eqref{pb:dual_type_MK}, with the regularized model, using either a smooth low dimensional formulation~\eqref{eq:OT_segmentation_energy_reg} or a smooth high dimensional representation~\eqref{pb:dual_type_MK_lbd}. One can see that setting a large value of $\lbd$ gives interesting results. 
On the other hand, using a very small value of $\lbd$ always yields poor segmentation results. In practice, if not specified otherwise, we consider $\lbd=100$ in our experiments, as higher values may lead to numerical issues (floating point precision).

\subsection{Comparisons with other segmentation models including Wasserstein distance}

We first exhibit the advantage of considering global data terms over histograms, {such as in Eq.~\eqref{eq:segmentation_energy}}. We present a comparison with the convex model proposed in \cite{bresson_wasserstein09} that includes a local data term over color distributions:
\begin{equation*}
	\begin{split}
	\tilde E(u)
	= \rho \TV(u) + \sum_{x\in\Omega} & \;\; \MK(a,h_{V(x)})u(x) 
	\\
	+ &   \;\; \MK(b,h_{V(x)})(1-u(x)) 
	\end{split}
\end{equation*}
where $h_{V(x)}$ is the color distribution over the neighborhood $V(x)$ of pixel $x$. This model, that can be optimized globally~\cite{Yildi12}, measures the local color distribution of the image with respect to the reference foreground and background distributions $a$ and $b$.
As illustrated in Figure~\ref{fig:comp2}, such local model is not able to perform a global segmentation.
Here the orange colors are more probable in the region related to the butterfly, so in small neighborhoods the flowers are classified as the butterfly, and the darker regions are segmented as being in the background. This example illustrates  the importance of considering global histogram comparisons to get a global segmentation of an image. Indeed, the global distance between histograms (c) is able to recover the butterfly, whereas the local approach (b) completely fails. Local approaches are therefore only relevant when the local histograms correctly approximate the global ones. 

\begin{figure}[!htb]
    \begin{center}
    \begin{tabular}{ccc}
    \hspace{-0.12cm}\includegraphics[width=2.55cm]{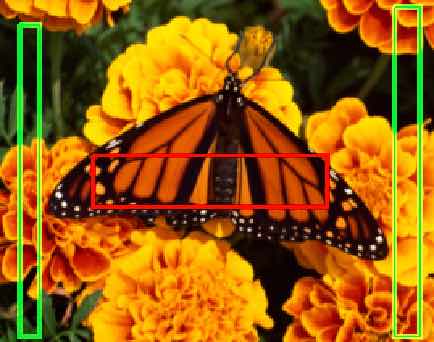}&\hspace{-0.3cm}
    \includegraphics[width=2.55cm]{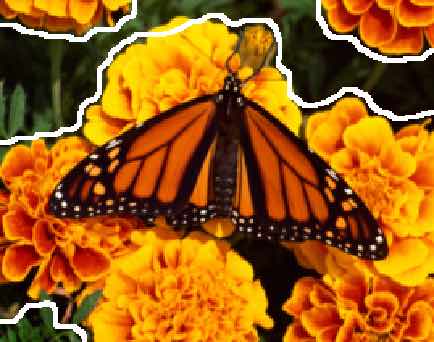}&\hspace{-0.3cm}
    \includegraphics[width=2.55cm]{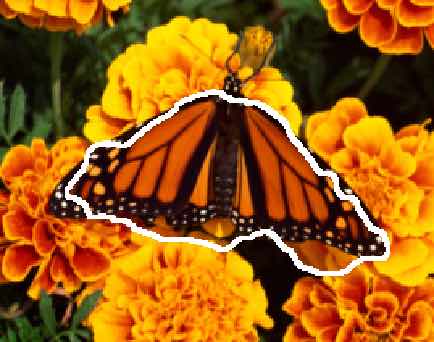}\\
    \hspace{-0.12cm}(a) $I$, $\color{red} a$, $\color{green} b$ &\hspace{-0.3cm}(b) Local \cite{bresson_wasserstein09}&\hspace{-0.3cm}(c) Global  
    \end{tabular}
    \caption{\label{fig:comp2} \textbf{Comparison with a local model.} (a) Input image and regions where reference distributions $a$ and $b$ are estimated. The segmentation fails for the local histogram model (b) as it classifies the orange areas in the first class and the darker ones in the second class. The global histograms on the segmented zones are not close to the given ones, contrary to the global model (c).
    \vspace*{2mm}~
    } 
    \end{center}
\end{figure}

\begin{figure*}[!htb]
\centering
\begin{tabular}{cccc}
\includegraphics[width=0.21\textwidth]{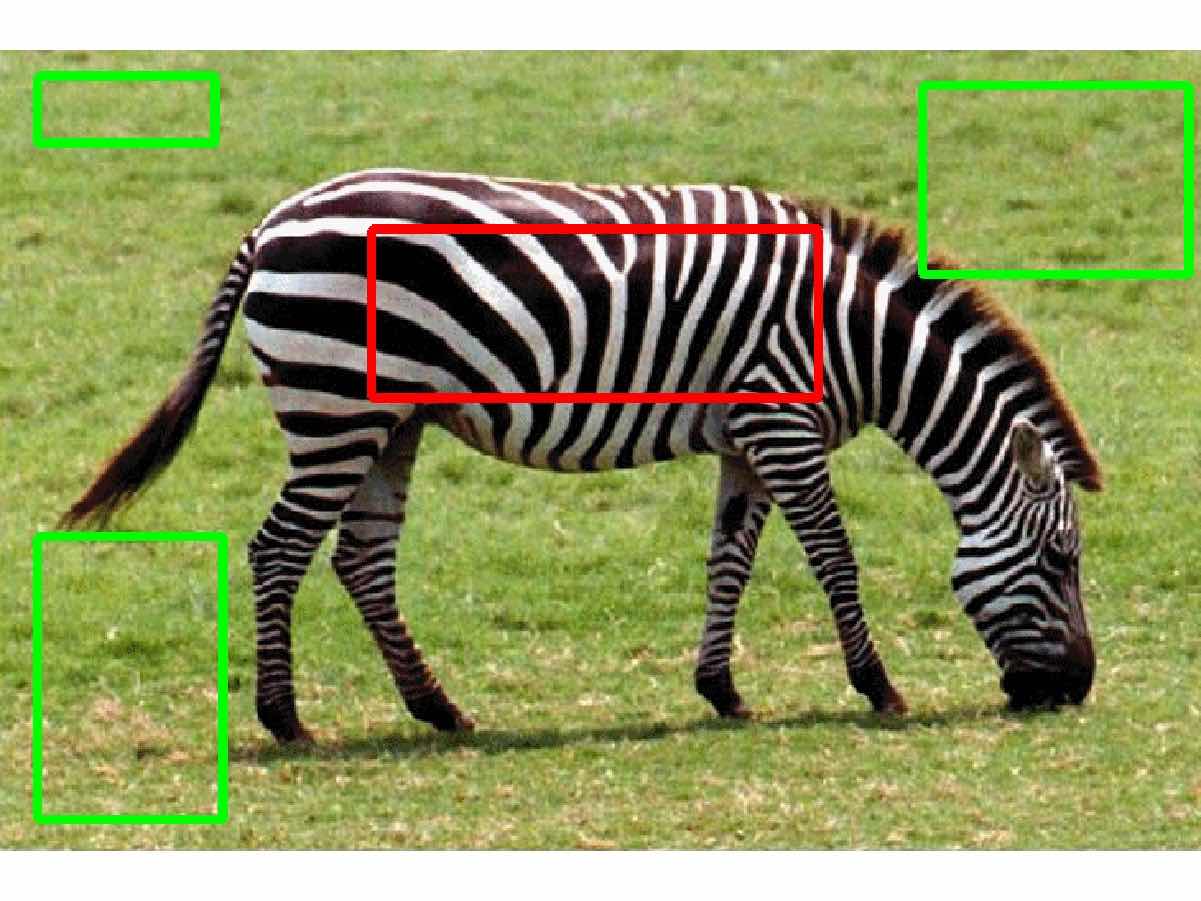} &
\includegraphics[width=0.21\textwidth]{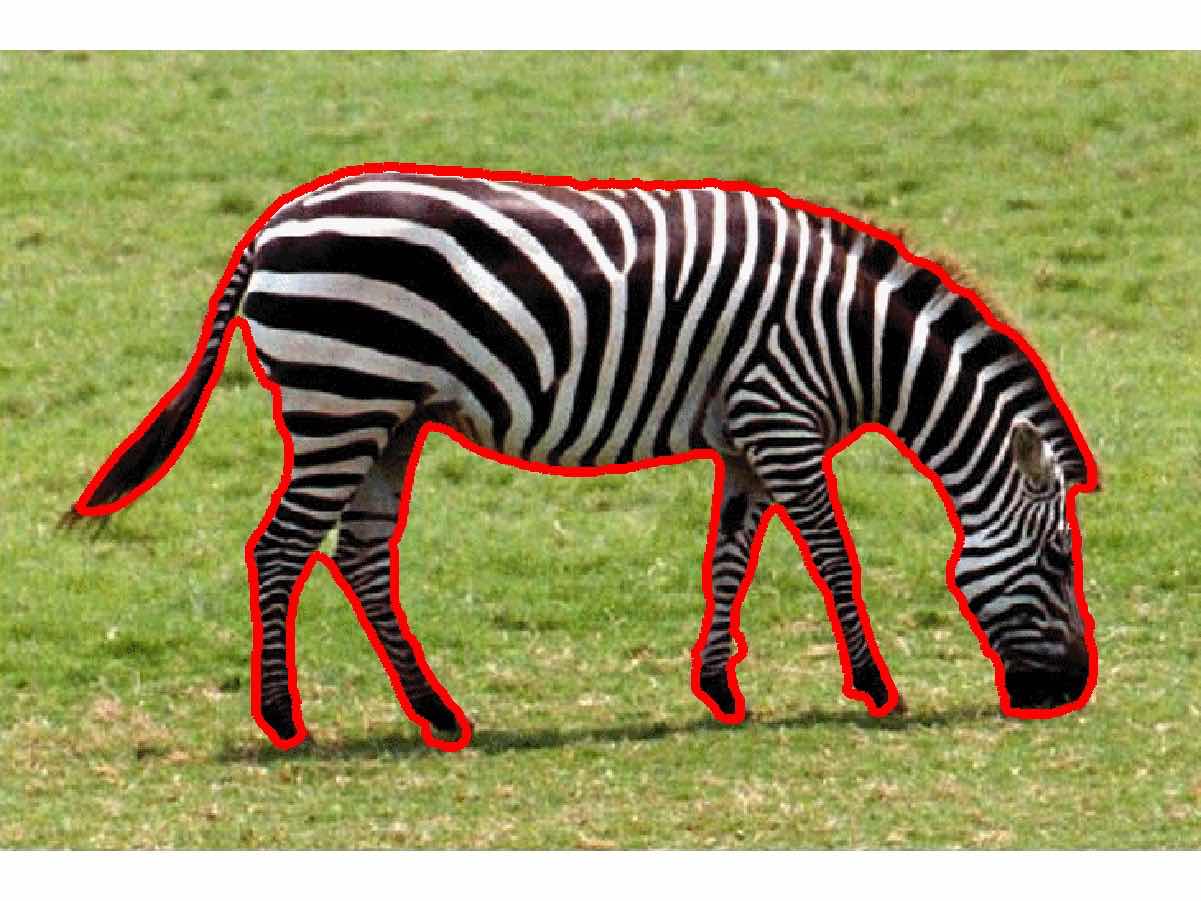} &
\includegraphics[width=0.21\textwidth]{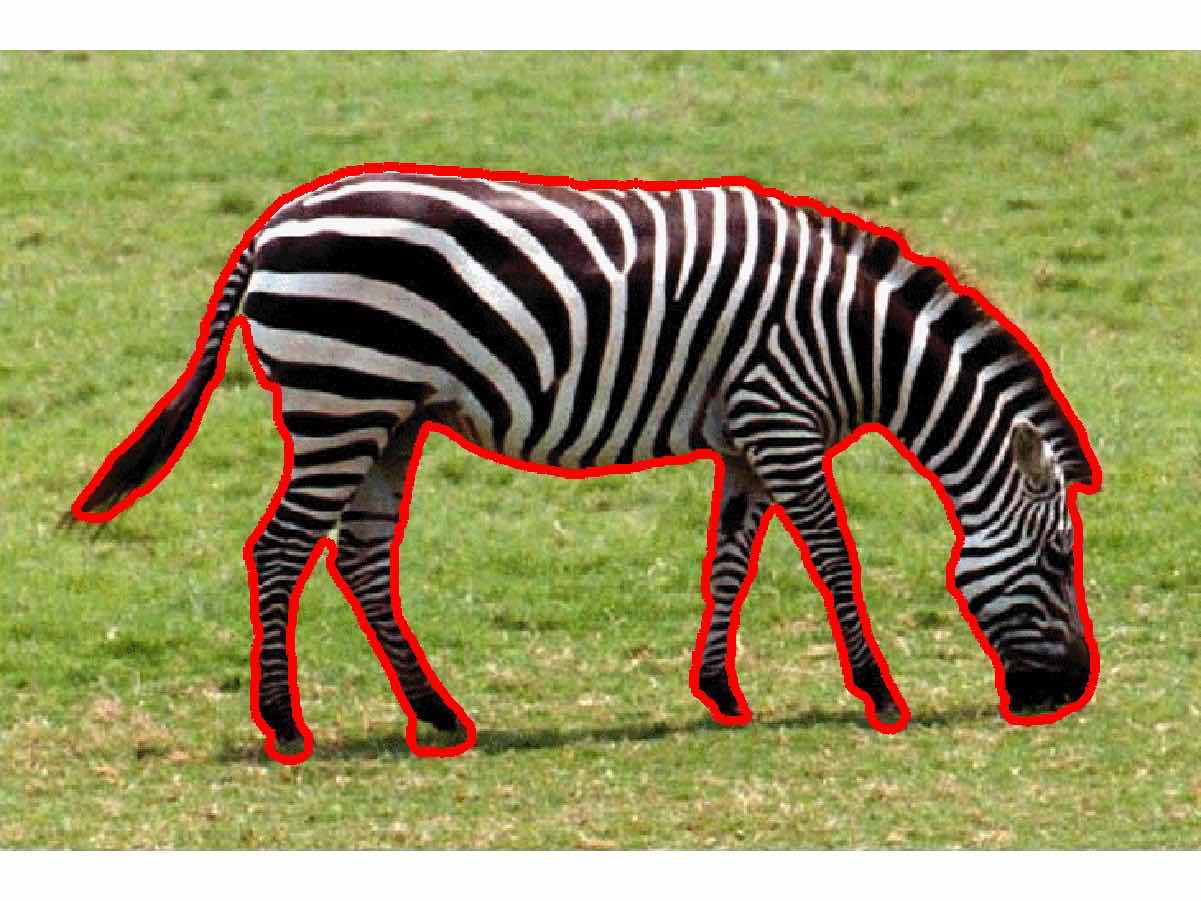} &
\includegraphics[width=0.21\textwidth]{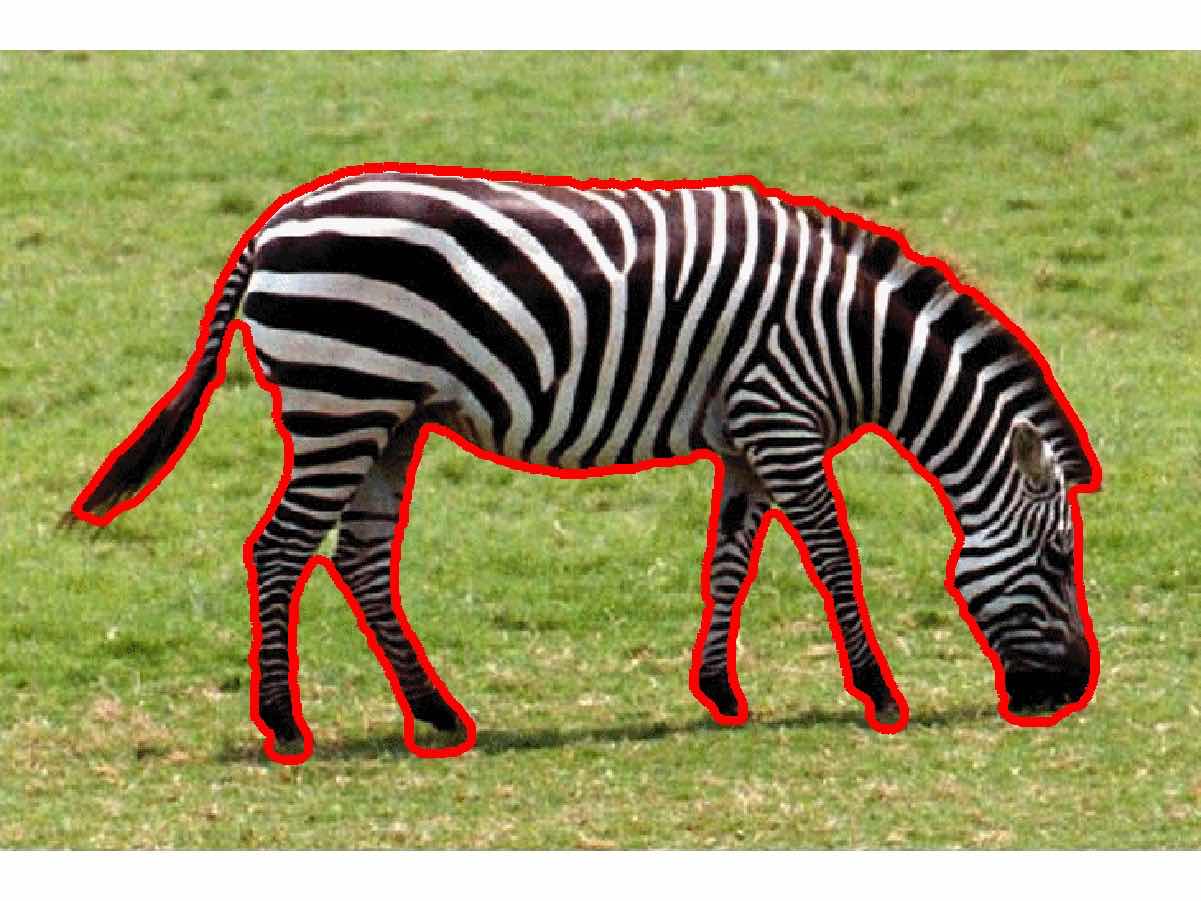} 
\\
\includegraphics[width=0.21\textwidth]{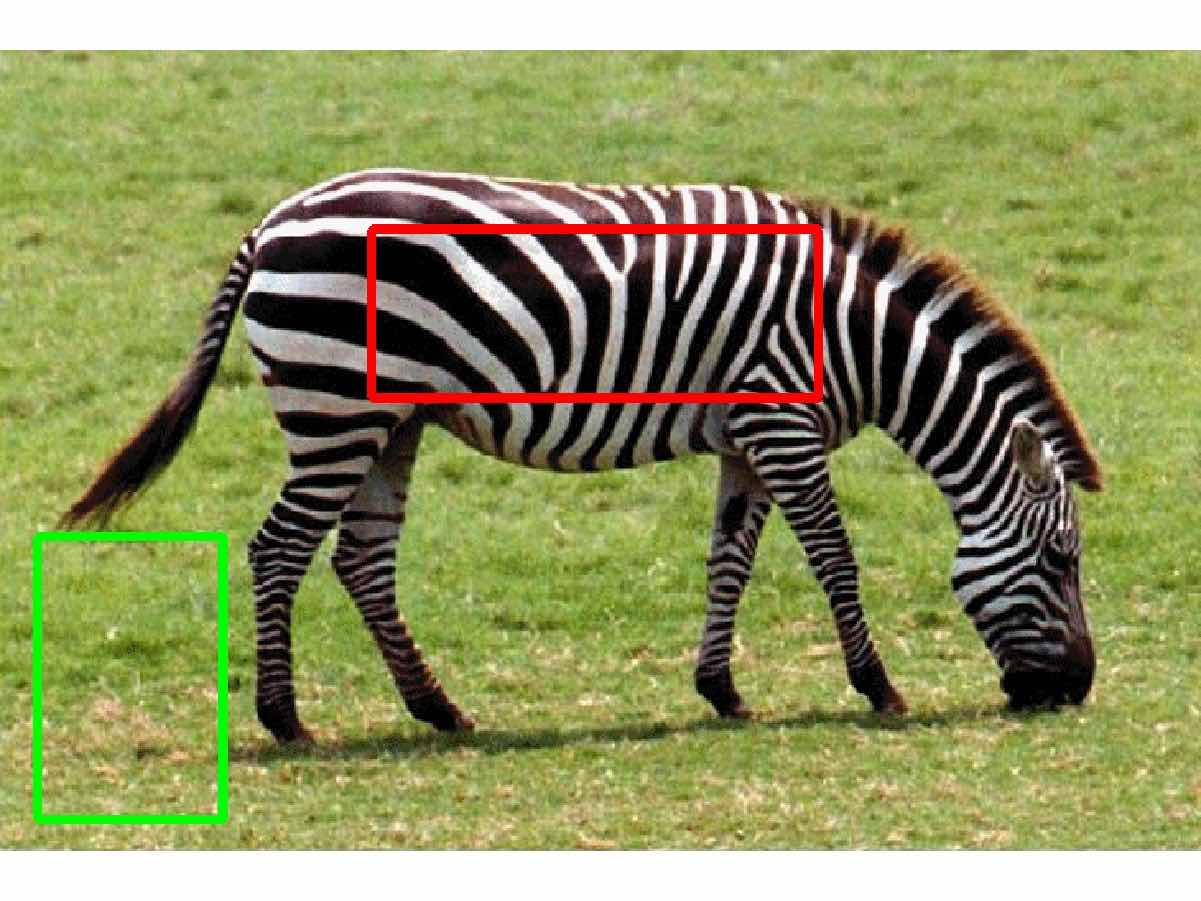} &
\includegraphics[width=0.21\textwidth]{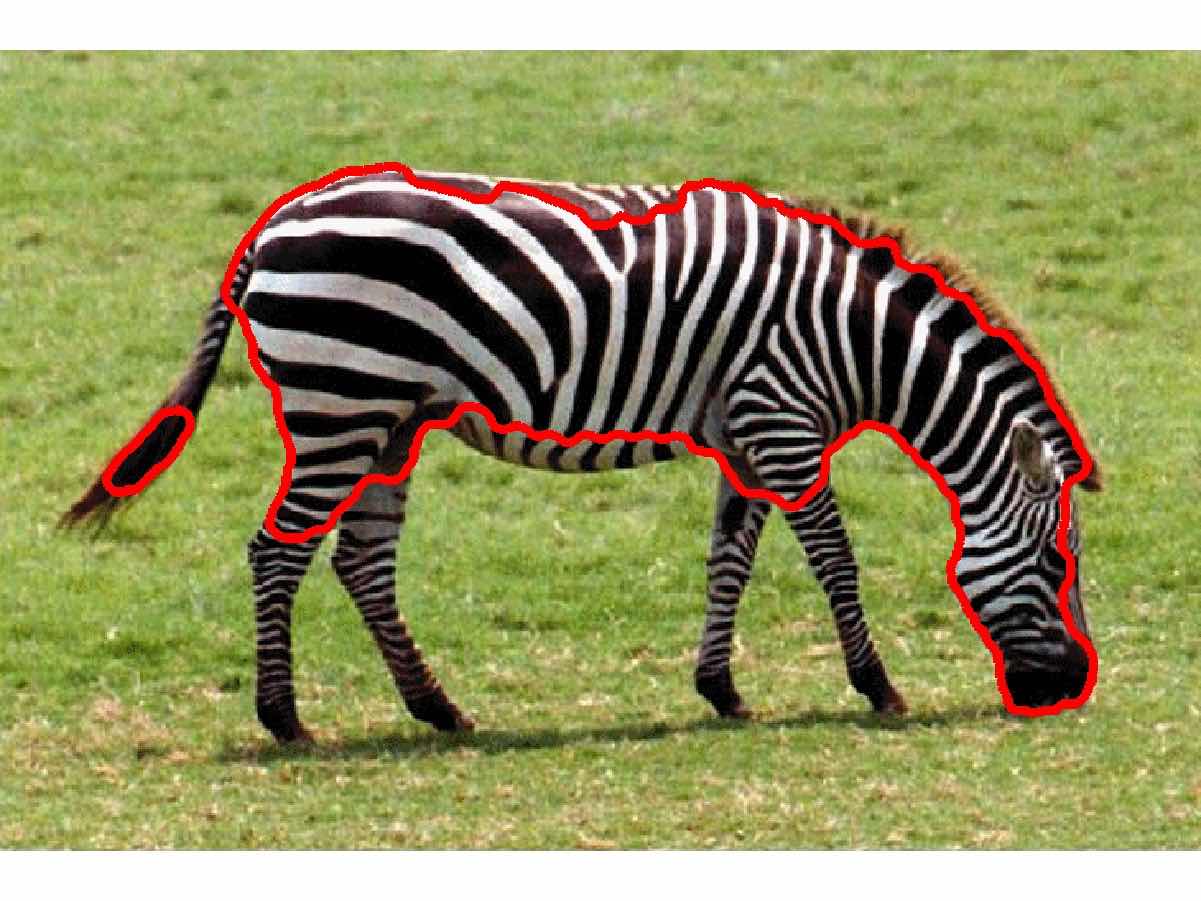} &
\includegraphics[width=0.21\textwidth]{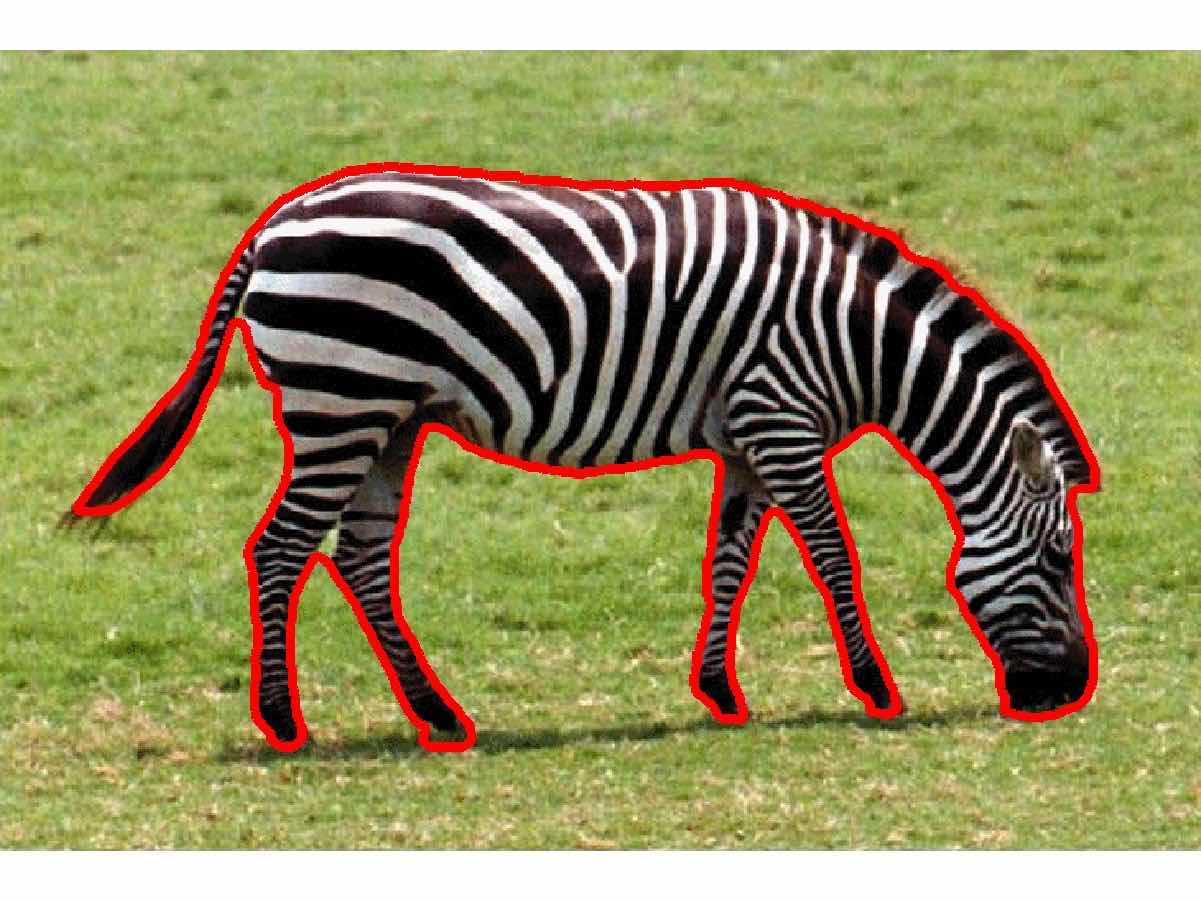} &
\includegraphics[width=0.21\textwidth]{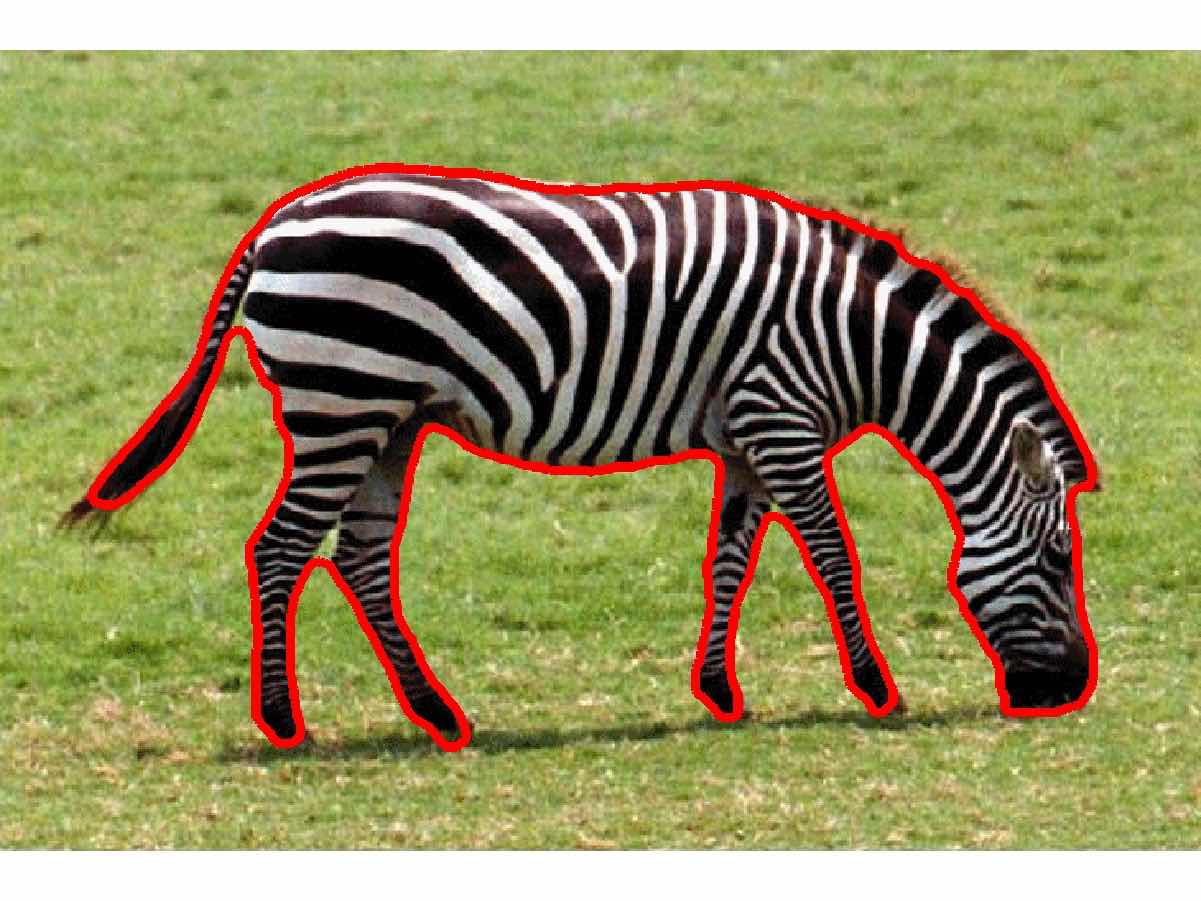} \\
Inputs & $\ell_1$ & $\lbd = \infty$ & $\lbd = 1000$
\end{tabular}

\caption{\textbf{Comparison of the segmentation results obtained from the proposed segmentation models (using $\MK_\lbd$ distances) together with the $\ell_1$ distance used in \cite{papa_aujol}}, for different initialization. The same regularization parameter $\rho$ is used for every segmentations. Note that the optimal transport similarity measure is a more robust statistical metric between histograms than $\ell_1$. 
\vspace*{5mm}~
}
\label{fig:zeb}
\end{figure*}

\begin{figure}[!htb]
\begin{center}
\begin{tabular}{cc}
	\hspace{-0.1cm}\includegraphics[width=3.8cm]{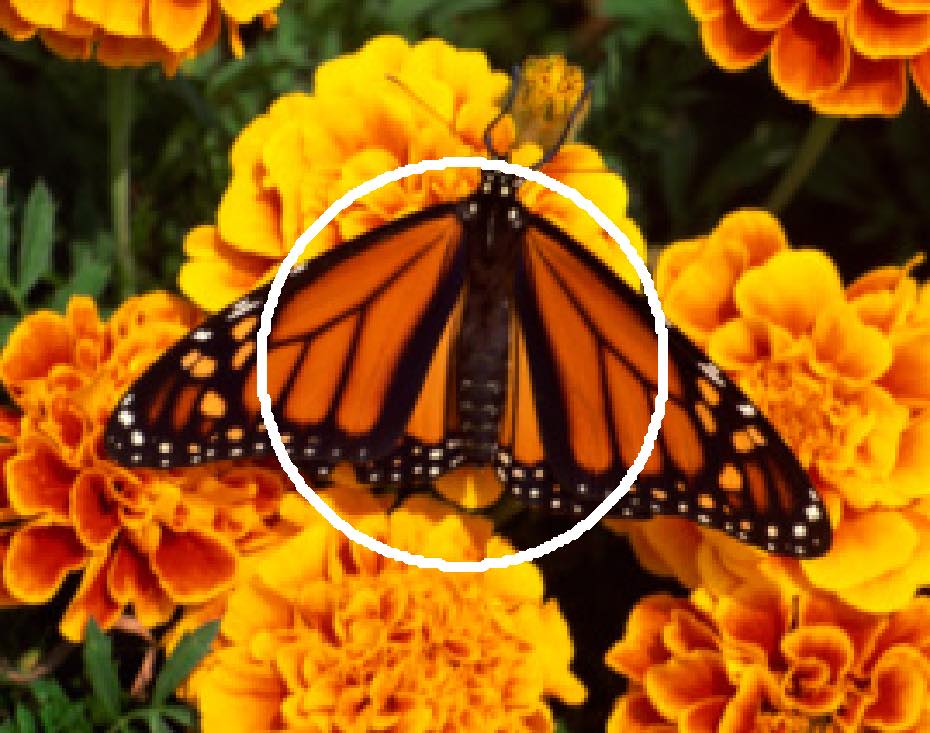}
	&\hspace{-0.2cm}\includegraphics[width=3.8cm]{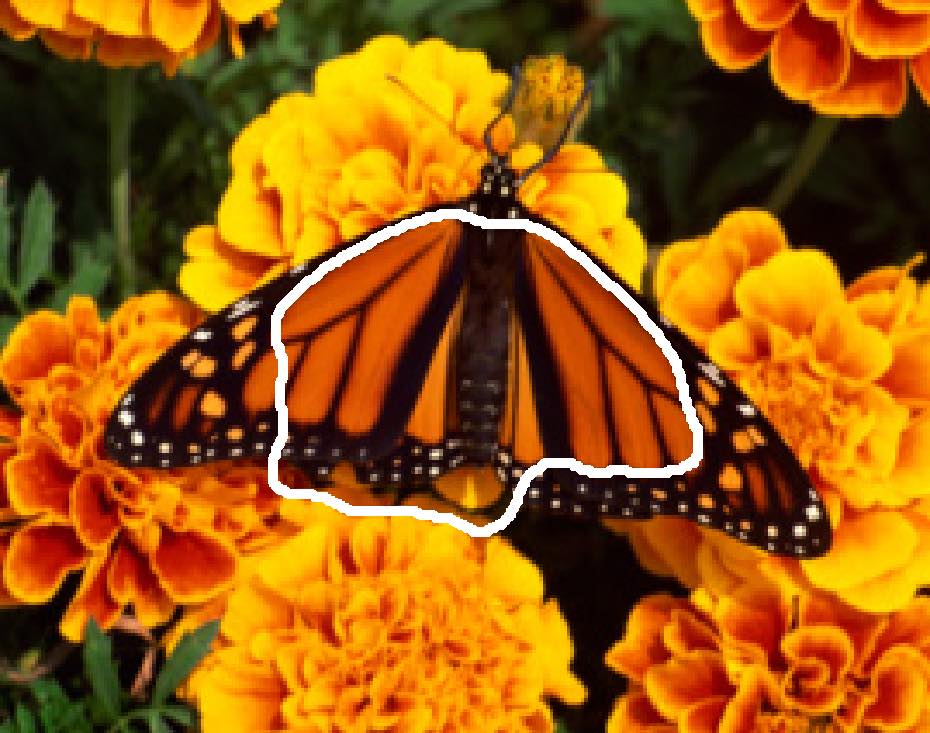}
	\\[2mm]
	\hspace{-0.1cm}\includegraphics[width=3.8cm]{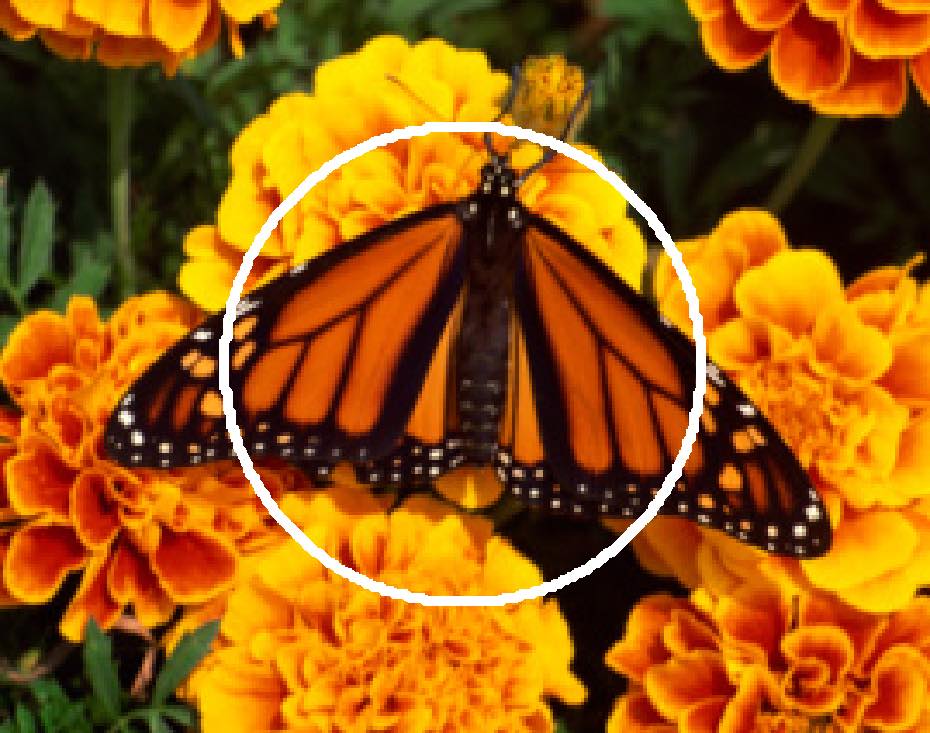}
	&\hspace{-0.2cm}\includegraphics[width=3.8cm]{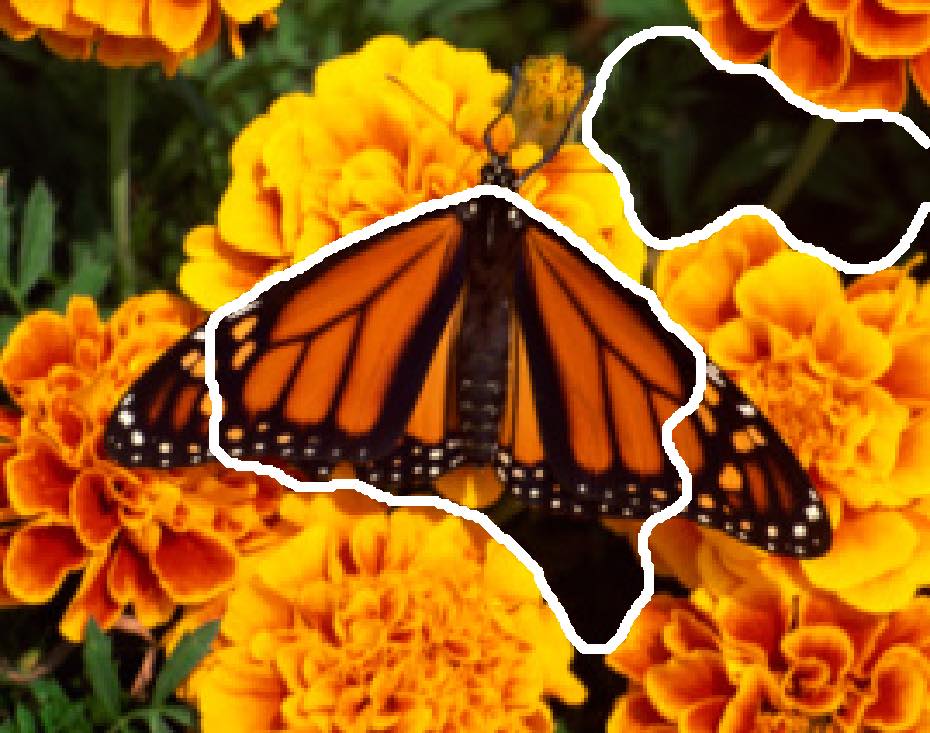}
	\\[2mm]
	\hspace{-0.1cm}(a) Initialization
	&\hspace{-0.2cm}(b) Result
\end{tabular}
\caption{\label{wac} \textbf{Comparison with a non convex model.} The Wasserstein active contours method  \cite{PeyreWassersteinSeg12}, initialized in two different ways in (a), provides the corresponding segmentations presented in (b), illustrating the non-convexity of the model. When carefully parameterized, it leads to a segmentation close to the one obtained with our global approach (see Figure \ref{fig:comp2}(c)). 
\vspace*{5mm}~
}
\end{center}
\end{figure}

%
%

Next, we illustrate the advantage of having a convex model that does not depend on the initialization. We compare our results with the ones obtained with the Wasserstein Active Contour method proposed in \cite{PeyreWassersteinSeg12}. Such approach consists in deforming a level set function in order to minimize globally the Wasserstein distance between the reference histograms and the one induced by the segmentation. To make the level set evolve, this formulation requires complex shape gradients computations.  As illustrated in Figure~\ref{wac}, even if this model can give good segmentations that are close to the ones we obtained in Figure~\ref{fig:comp2}~(c), its initialization may be a critical step as really different segmentations are obtained with very similar initializations.

We finally show comparisons with the global model of \cite{papa_aujol} that includes $\ell_1$ distances between histograms.
Figure \ref{fig:zeb} first illustrates the robustness  of optimal transport distance with respect to bin-to-bin $\ell_1$ distance. A small variation of the reference histograms may involve a large change in the final segmentation wit $\ell_1$ distance, whereas segmentations obtained with $\MK$ or regularized $\MK_\lbd$ are stable.
\begin{figure}[!htb]
\centering
	\begin{tabular}{cc}
	\multicolumn{2}{c}{\includegraphics[width=0.2\textwidth]{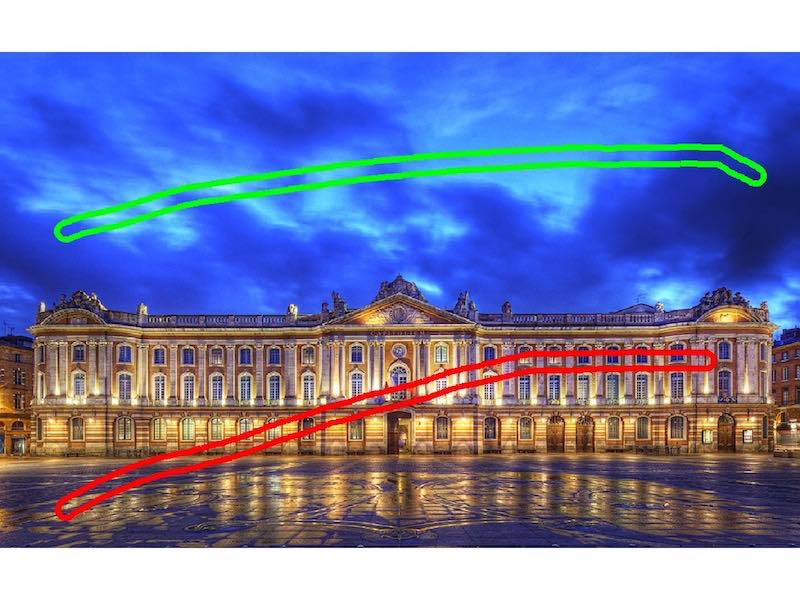}}
	\\
	\multicolumn{2}{c}{Input}
	\\[1mm]
	\includegraphics[width=0.2\textwidth]{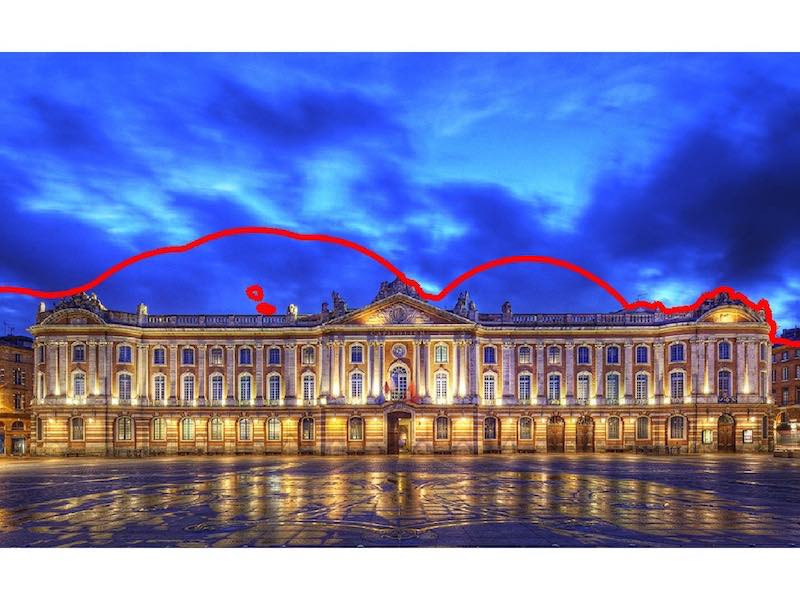}&\includegraphics[width=0.2\textwidth]{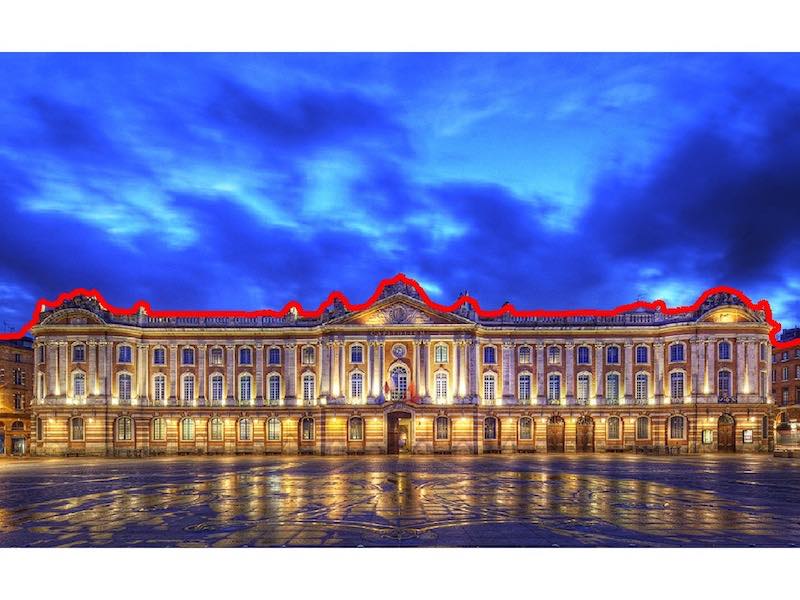}
	\\
	$\ell_1$& $\MK$ 
	\end{tabular}

\caption{\textbf{Robustness of $\MK$ with respect to $\ell_1$.} The blue colors that are not in the reference histograms are considered correctly as background with $\MK$ distance, but as foreground with the $\ell_1$ model where no color comparison is performed. 
\vspace*{5mm}~
}
\label{fig:comp_l1}
\end{figure}

Contrary to optimal transport, when a color is not present in the reference histograms, the $\ell_1$ distance does not take into account the color similarity between different bins, which can lead to incorrect segmentation. This is illustrated with the blue colors in Figure~\ref{fig:comp_l1} where the $\ell_1$ distance leads to an incorrect segmentation by associating some blue tones to the building area. 

\subsection{General results}

The robustness of optimal transport distances is further illustrated in Figure~\ref{fig:zebras}. It is indeed possible to use a prior histogram from a different image, even with a different clustering of the feature space. This is not possible with a bin-to-bin metric, such as $\ell_1$, which requires the same clustering. 

%
\begin{figure}[!htb]
\centering
\begin{tabular}{cc}
\includegraphics[width=0.205\textwidth]{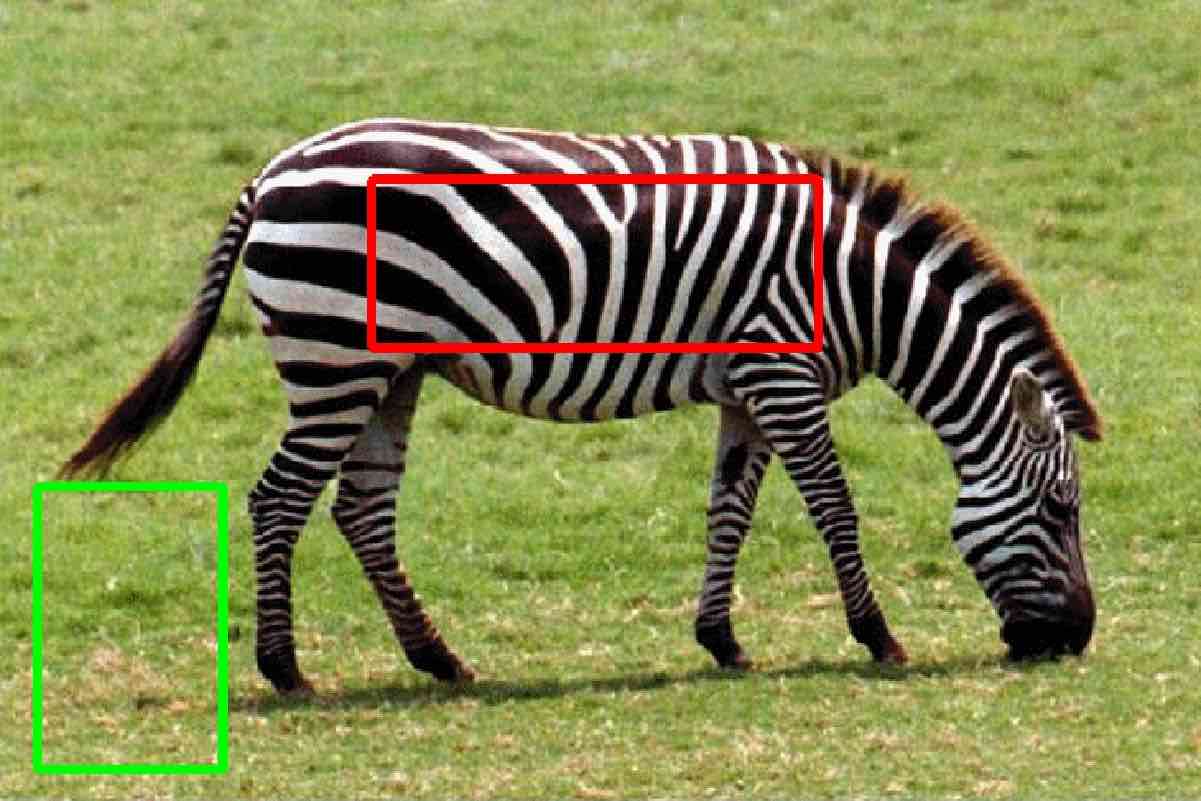}&
\includegraphics[width=0.205\textwidth]{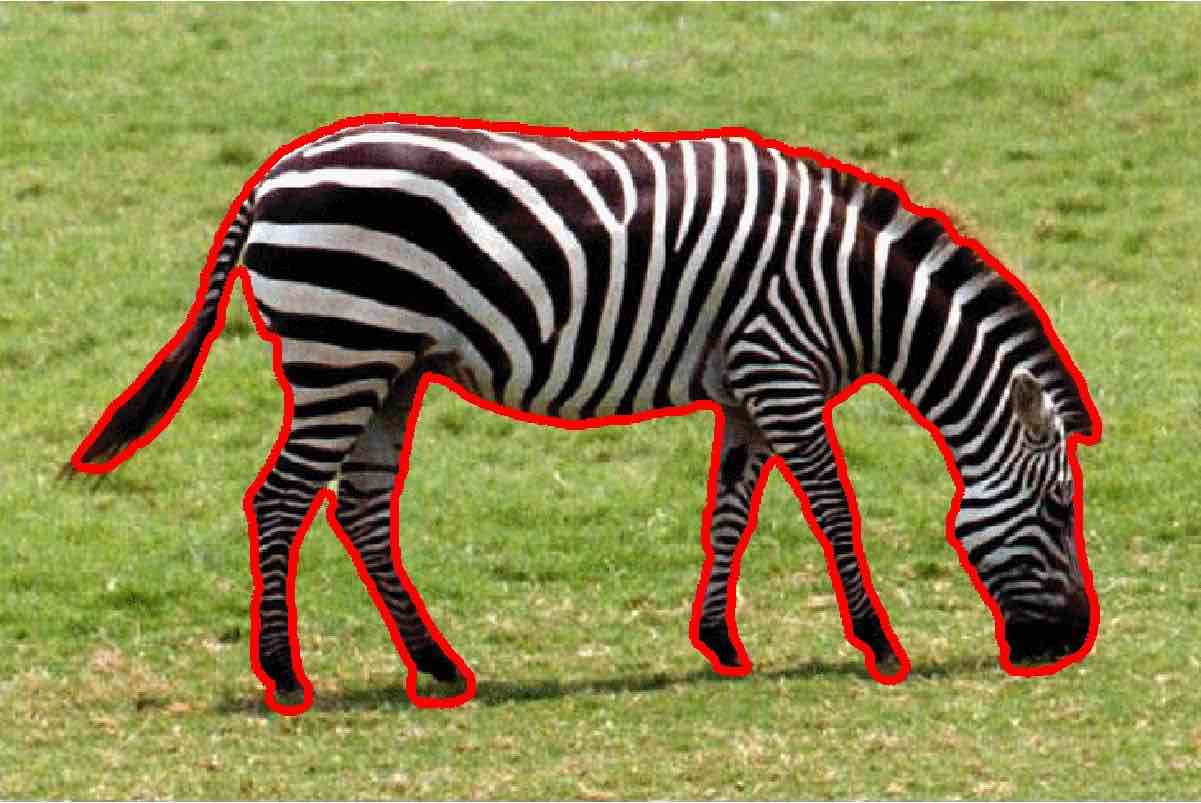}
\\
Input histograms & Segmentation $1$
\\[2mm]
\includegraphics[width=0.205\textwidth]{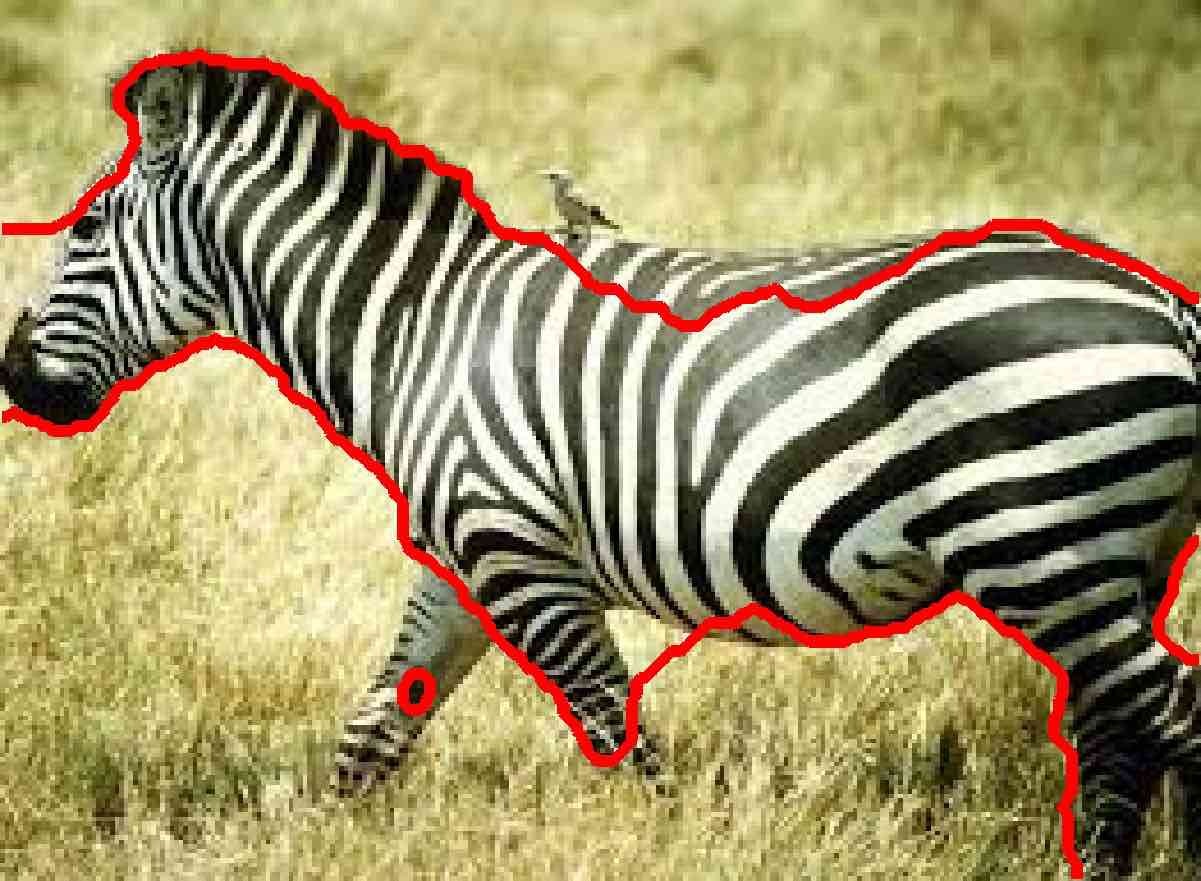}&
\includegraphics[width=0.205\textwidth]{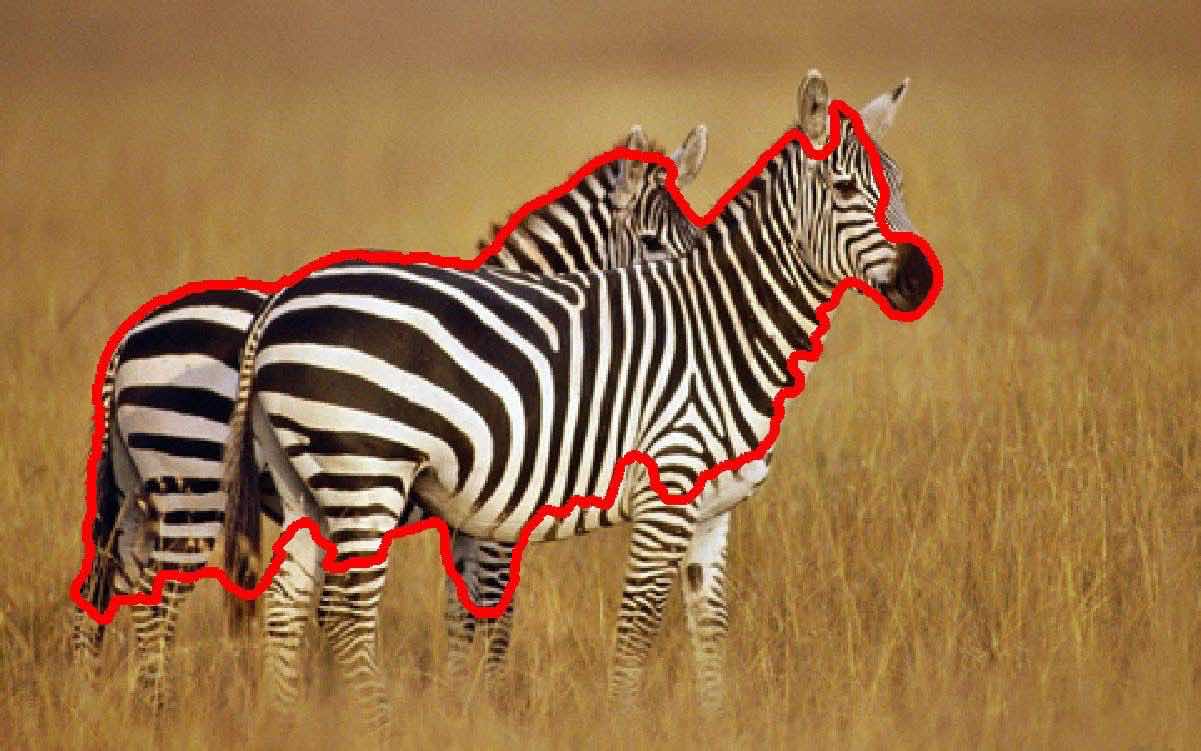}
\\
Segmentation $2$ & Segmentation $3$
\end{tabular}
\caption{\textbf{Illustration of the interest of optimal transport cost for the comparison of histograms}. Its robustness makes it possible to use prior histograms from different images (in this exemple, histograms are estimated from image $1$ and used to segment all images).
\vspace*{5mm}~
}
\label{fig:zebras}
\end{figure}

Other examples on texture segmentation are presented in Figure~\ref{fig:grad} 
where the proposed method is perfectly able to recover the textured areas.
We considered here the joint histogram of gradient norms on the $3$ color channels. 
The complexity of the algorithm is the same as for color features, as long as we use the same number of clusters to quantize the feature space.

\begin{figure}[htb]
\begin{center}
	\begin{tabular}{cc}
	\hspace{-0.15cm}\includegraphics[width=3.8cm]{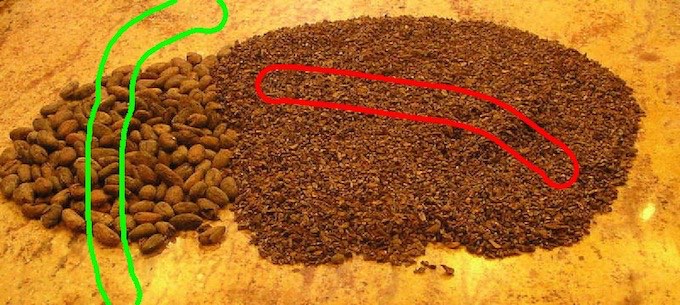}&\hspace{-0.3cm}  
	\includegraphics[width=3.8cm]{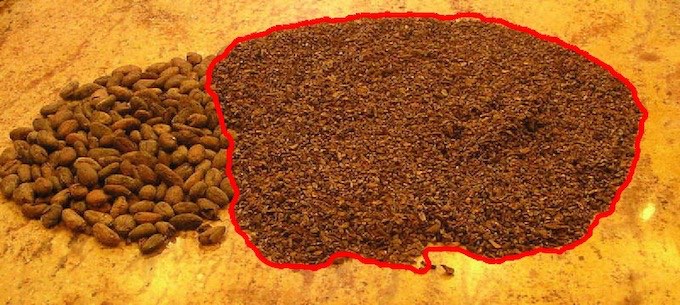}
	\\
	\hspace{-0.15cm}\includegraphics[width=3.8cm]{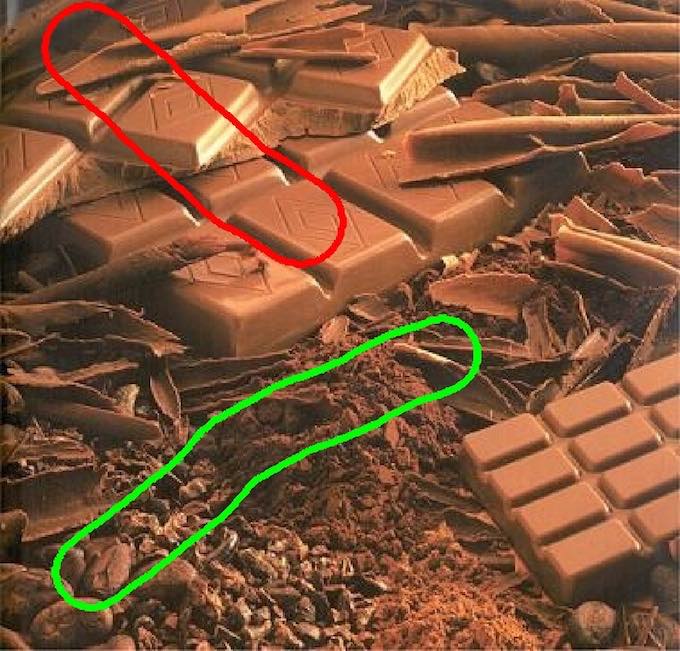}&\hspace{-0.3cm} 
	\includegraphics[width=3.8cm]{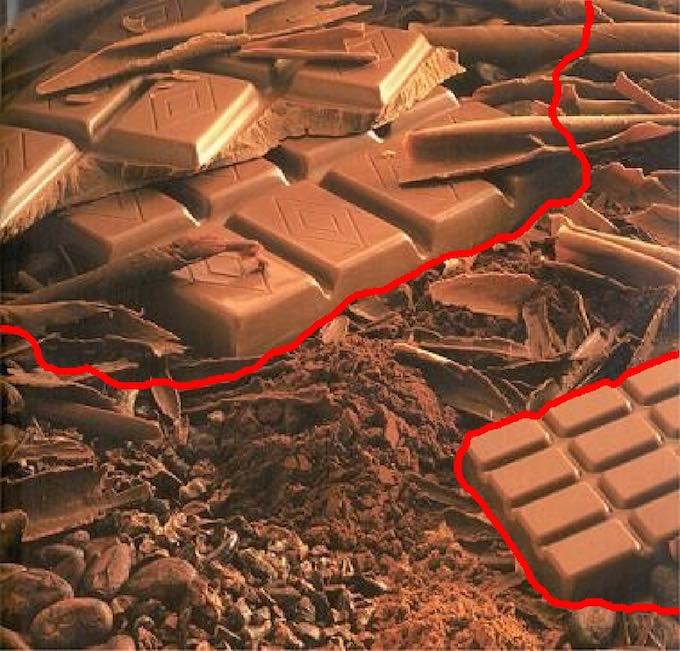} 
	\\
	\hspace{-0.15cm}Input &\hspace{-0.3cm}   $\MK$ 
	\end{tabular}
\end{center}
\caption{\textbf{Texture segmentation using joint histograms of color gradient norms.} In this exemple, only gradient information is taken into account, illustrating the versatility of the optimal transport framework.
\vspace*{5mm}~
}
\label{fig:grad}
\end{figure}

We finally present experiments involving more than two partitions in Figure \ref{fig:multiphase}.
In the first line, three regions are considered for the background and the two parrots. Even if the two parrots share similar colors, the model is able to find a global segmentation of the image in three regions.
In the second line of Figure \ref{fig:multiphase}, we considered $4$ regions for the sky, the grass, the forest and the plane.
The approach is able to deal with the color variations inside each class in order to perform a correct segmentation.

\begin{figure}[!h]
\begin{center}
	\begin{tabular}{cc}
	\hspace{-0.15cm}\includegraphics[width=3.8cm]{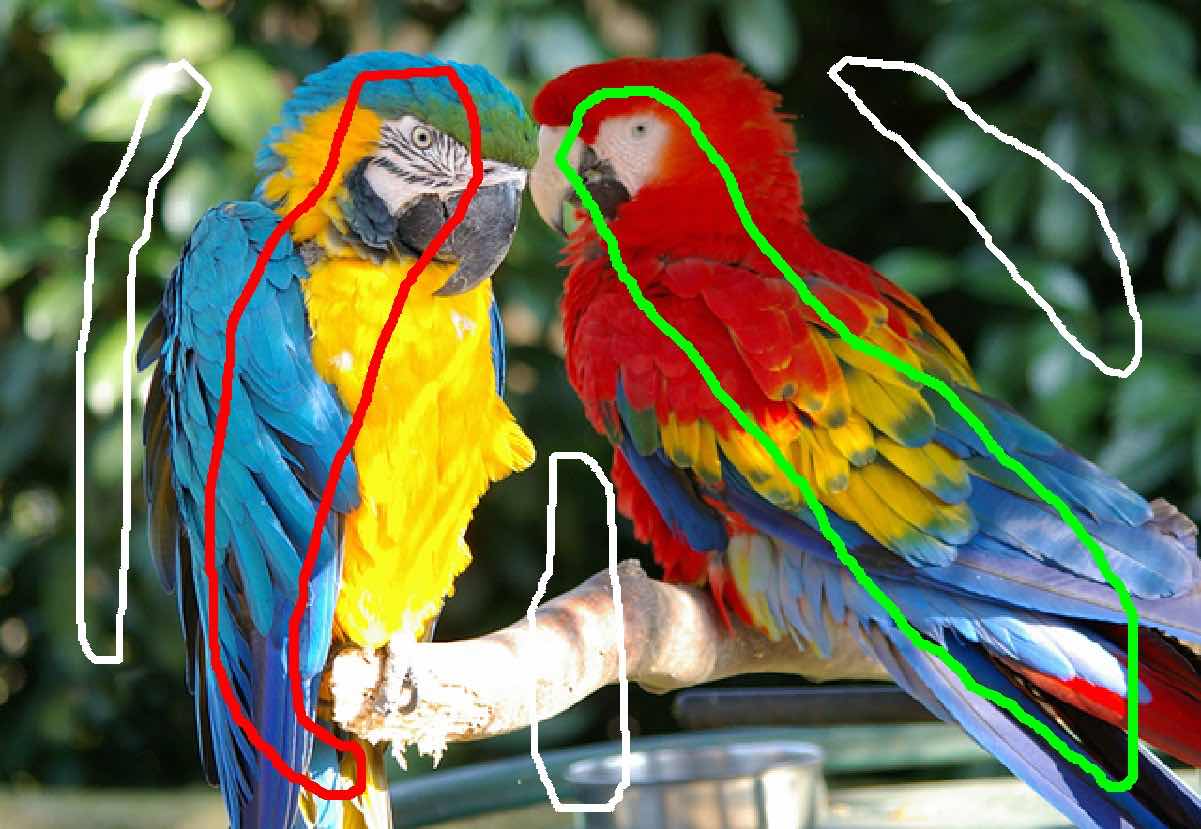}&\hspace{-0.3cm} 
	\includegraphics[width=3.8cm]{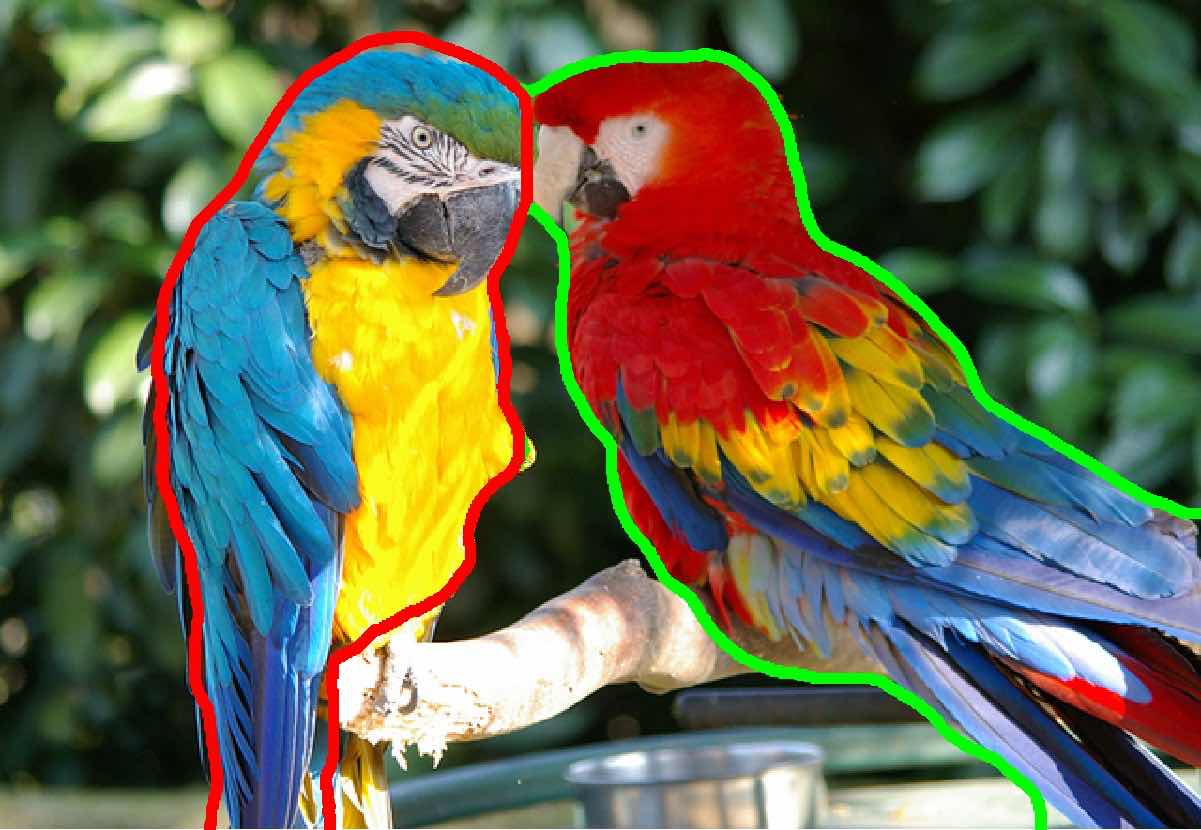} 
	\\[1mm]
	\hspace{-0.15cm}\includegraphics[width=3.8cm]{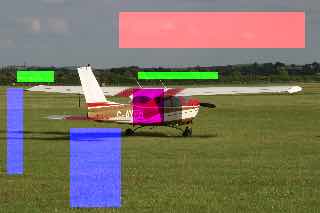}&\hspace{-0.3cm}
	\includegraphics[width=3.8cm]{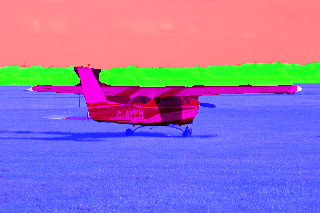} 
	\\
	\hspace{-0.15cm}Input &\hspace{-0.3cm} Segmentation
	\end{tabular}
\end{center}

\caption{\textbf{Multi-phase segmentation} with $3$ regions (first line) and $4$ regions (second line). 
\vspace*{5mm}~
}
\label{fig:multiphase}
\end{figure}


\section{Unsupervised Co-segmentation}\label{sec:coseg}

In this section, we extend our framework to the unsupervised co-segmentation of multiple images.
We invite the reader to see the following reference~\cite{Vicente_co-seg_eccv10} for a complete review.

\subsection{Co-segmentation of $2$ images}

We first consider two images $I^1$ and $I^2$ the domain of which is respectively $\Om_1$ and $\Om_2$ composed of $N_1$ and $N_2$ pixels.
Assuming that the images contain a common object, the goal is now to jointly segment them without any additional prior. 

\paragraph{Model for two images}

To that end, following the model used in~\cite{Vicente_co-seg_eccv10,Swoboda13}, we aim at finding the \emph{largest} regions that have similar feature distributions. To define the segmentation maps $u^1$ and $u^2$ related to each image, we consider the following model first investigated in~\cite{coseg_hist_matching,coseg_hist_matching_L2}, 
denoting $u = (u^1;u^2)$:
\eql{\label{eq:coseg_model0}
	\hspace*{-1mm}
	J'(u) := \Dist \left( H_1 u^1, H_2u^2 \right)
		    +  \sum_{k=1}^2  \rho \TV(u^k) - \delta {\norm{u^k}}_1 
}
where, for a non-negative variable $u^k$ we have a total mass ${\norm{u^k}}_1= \dotp{u^k}{\U_{N_k}}$. When $u^k \in \{0,1\}^{N_k}$, this term corresponds to the area of the region segmented in image $I^k$. Such a ballooning term encourages the segmentation of large regions.
Without this term, a global optimum would be given by $u^k=\O$.

Following definition \eqref{op:H}, the operator $H_k(i,x)$ is  $1$ if pixel $I^k(x)$ belongs to the cluster $\C_{\X_k}(i)$ and $0$ otherwise.
As before, the value of the segmentation variables $u^k$ are relaxed into the convex intervals $[0,1]^{N_k}$.

In \cite{Vicente_co-seg_eccv10}, several cost functions $\Dist$ are benchmarked for the model defined in Eq.~\eqref{eq:coseg_model0}, such as $\ell_1$ and $\ell_2$. It is demonstrated that $\ell_1$ performs the best.
In~\cite{Swoboda13}, Wasserstein distance 
is used again to measure the similarity of the two histograms.
In the following, we investigate the use of these two metrics in our setting.

\paragraph{Property of the segmented regions}
To begin with, note that when considering optimal transport cost to define $\Dist$,
one has to constraint the histograms to have the same mass, \emph{i.e.} $(H_1 u^1,H_2 u^2) \in \S$.
%
When using assignment operators such as in~\eqref{op:H}, this boils down to constraint the segmentation variables to have the same mass, \emph{i.e.} $\dotp{u^1}{\U_{N_1}} = \dotp{u^2}{\U_{N_2}}$.

%

When looking for a binary solution, this condition implies that the two regions corresponding to the segmentation of each image have the exact same number of pixels.
This means that the model is not robust to small scale change in appearance with optimal cost transport, while is it the case when using the $\ell_1$ metric, as demonstrated in~\cite{Vicente_co-seg_eccv10}.
In practice, while this property does not necessarily hold for solutions of the relaxed problem that are binarized by thresholding (see Eq.~\eqref{def:region}), this limitation has been also observed.

One simple way to remove such restriction from the model is to use the same formulation introduced in Section~\ref{sec:convexification_histogram} for segmentation.
Unfortunately this boils down to define the similarity measure with 
$$S(H_1 u_1 \dotp{u_2}{\U}, H_2 u_2 \dotp{u_1}{\U})$$ 
which is obviously non-convex and does not fit the optimization framework used in this paper.

It is not the first time that the conservation of mass in the optimal transport framework is reported to limit
its practical interest in imaging problem, and several variations have been proposed to circumvent it.
Without entering into details, a common idea is to discard the conservation of mass when the two histograms are \emph{unbalanced} and to define alternative transport maps that may create or annihilate mass.
As an exemple, a solution might be to transport the minimum amount of mass between the unnormalized histograms and penalize the remaining, as done by the distance introduced in \cite{Pele-ICCV} and similarly in \cite{gramfort_2015}.
Other models has been recently investigated, such as in 
\cite{lombardi_maitre_2013}, and
\cite{Vialard15b,Frogner}.
However, the application of such metric for our setting is far from being straightforward and need careful analysis that is left for future work.

\paragraph{Optimization}
%
To solve the relaxed problem
$$\min_{u \in [0,1]^{N_1 + N_2}} J'(u)$$
using either $\ell_1$, $\MK$ or $\MK_{\lambda}$ as a cost function $\Dist$,
we rely again on the primal-dual formulation~\eqref{pb:primaldual} of the problem and the algorithm~\eqref{algo:primaldual}.
Notice that the minor difference with previous segmentation problems is the presence of the linear ballooning term and that there is only one dissimilarity term.

\paragraph{Experiments} 

We now illustrate the behavior of this mo\-del.
Again, we underline that the convex co\-seg\-men\-tation model \eqref{eq:coseg_model0} is not new, as our approach only differ algorithmically from~\cite{Vicente_co-seg_eccv10,Swoboda13} when using $\ell_1$ or $\MK$ as a cost function. Therefore, we only focus on results obtained using optimal transport with entropic regularization (setting $\lbd = 100$).

In the synthetic experiment of Figure~\ref{fig:coseg} containing exactly the same object with different backgrounds, we compare our approach with the one of \cite{Swoboda13}, that does not include entropic regularization\footnote{Another main difference is that \cite{Swoboda13} makes use of super-pixel representation to reduce the complexity, whereas we use a pixel representation.}. 
Both methods gives similar co-segmentations.

\begin{figure}[bth]
    \begin{center}
    \begin{tabular}{cc}
    \includegraphics[width = 0.42\linewidth]{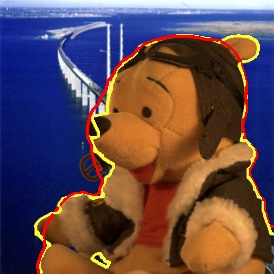}&\hspace{-0.4cm}
    \includegraphics[width = 0.42\linewidth]{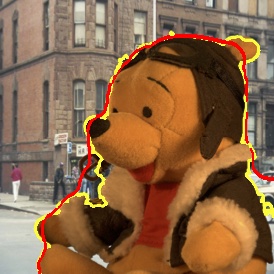}%
\\
    \includegraphics[width = 0.42\linewidth]{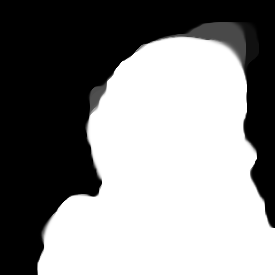}&\hspace{-0.4cm}
    \includegraphics[width = 0.42\linewidth]{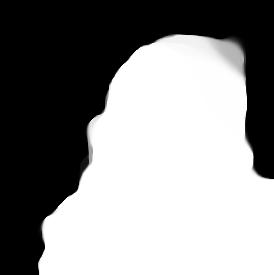}%
    \end{tabular}
    \end{center}
    \caption{\label{fig:coseg}  
    \textbf{Co-segmentation and optimal transport with or without entropic regularization}. The results obtained with the model \eqref{eq:coseg_model0} with the entropic regularization (in red), that approximate the method of  \cite{Swoboda13} (in yellow, {image courtesy of \cite{Swoboda13}}).
    The estimated segmentation maps  ${u^k}$ are binary almost everywhere. The threshold $t = \frac{1}{2}$ is used to obtain the final co-segmentation regions.
    \vspace*{5mm}~
    }
\end{figure}

When considering images where the common object has a similar scale in both images, Figure~\ref{fig:coseg2} shows that the condition $\norm{u^1}_1=\norm{u^2}_1$ is not restrictive and our method still gives acceptable co-segmentations.

\begin{figure}[!htb]
\begin{center}
\begin{tabular}{cc}
\hspace{-0.cm}\includegraphics[height=2.45cm]{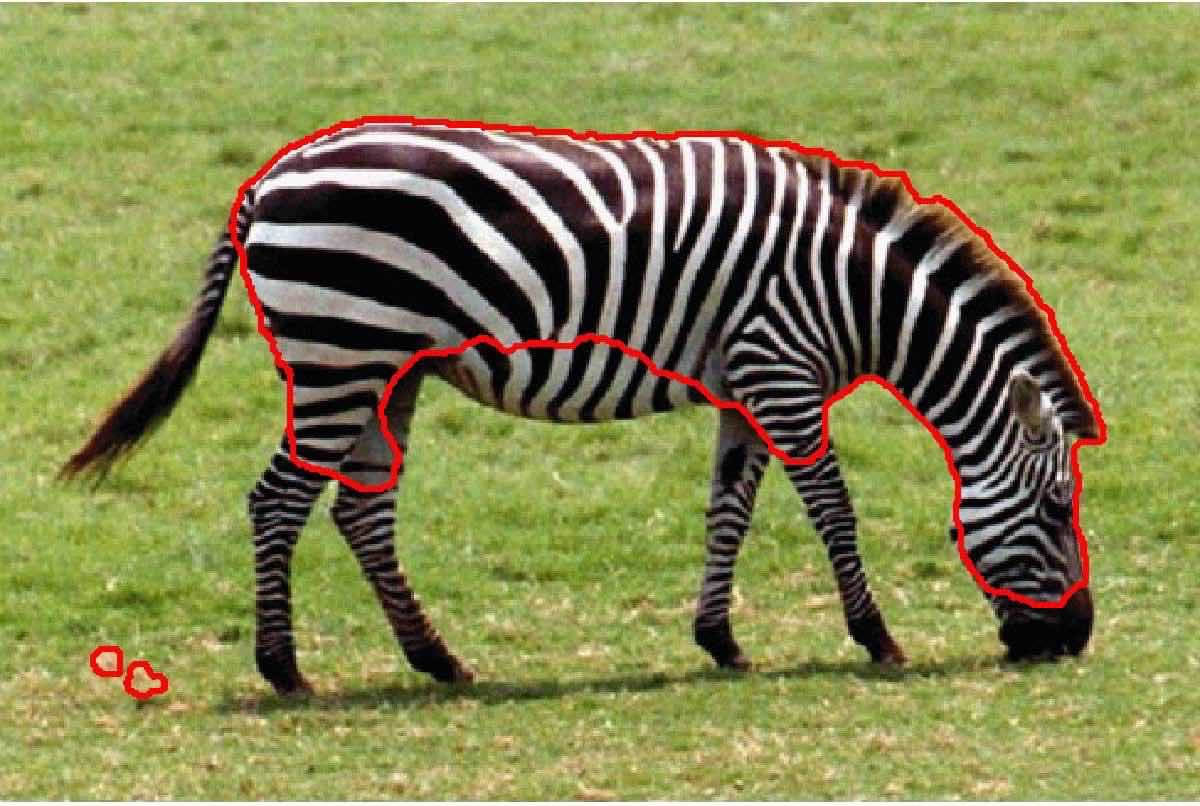}   &\hspace{-0.2cm}
\hspace{-0.cm}\includegraphics[height=2.45cm]{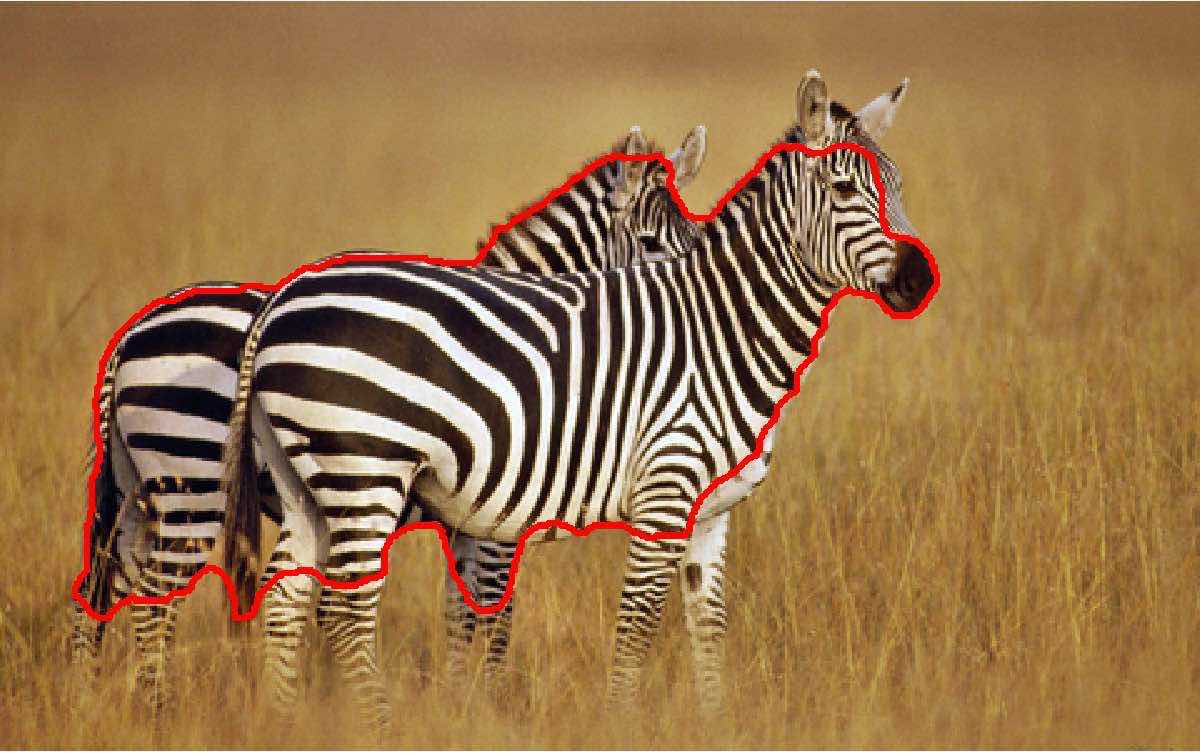}   \\
$u^1$&\hspace{-0.2cm}$u^2$
\end{tabular}
\end{center}
\caption{\label{fig:coseg2}  
Co-segmentation of two zebras with the model \eqref{eq:coseg_model0}. The convex constraint $\norm{u^1}_1=\norm{u^2}_1$ enforces   the segmented regions to have the same area. As the obtained result is not binary, the areas may be different after the thresholding.
\vspace*{5mm}~
}
\end{figure}

Nevertheless, in a more general setting, we cannot expect the common objects to have the same scale in all images.
We leave the study of alternative optimal transport based distance such as \cite{Vialard15b,Frogner} for future work.


%

\subsection{Co-segmentation of $P$ images}


\if 0
    In order to circumvent the scale issue, we now consider another model for the co-segmentation of $P\ge  2$ images. As stated before  in paragraph~\ref{sec:wasserstein_vs_l1}, 
    we can consider the $\ell_1$ distance between unnormalized histograms as a specific instance of the cost matrix. 
    Next, instead of comparing the histograms of each pair of images, we introduce a new variable that corresponds to the histogram barycenter  of the segmented regions.
    Hence the proposed multi-image co-segmentation model reads
    \eql{\label{eq:coseg_model1}
    \begin{split}
    J''(u,b) &= \sum_{k=1}^P\left( {\norm{H_k u^k-b}}_1   + 
     \rho \TV(u^k)\right.\\ &+\left. \chi_{[0,1]^{N_k}}(u^k) - \delta {\norm{u^k}}_1\right)+ \chi_{\PS_{1,M_b}}(b),
    \end{split}}
    where we also estimate the unnormalized barycentric distribution $b$ between the histograms of the segmented regions in all images.
    The barycenter $b$, that  should be quantified over $M_b$ bins chosen beforehand, belongs to the probability simplex $\PS_{1,M_b}$.
    

    With respect to the model \eqref{eq:coseg_model0}, we do not constraint anymore the regions to have the same area. Nevertheless, we can not consider $\MK$ distances that can not deal with histograms of different masses.
    To that end, unbalanced transport models \cite{Frogner,Vialard15b} should be consider, but they are from now too complex to optimize in the proposed framework and not able to give accurate information between very different histograms.
      
    \cmt{Todo: montrer le barycentre b et l'utiliser sur une nouvelle image afin de montrer la robustesse: comment l'utiliser vu que l'on n'a pas de modele de fond?}
    
\fi

We consider now the generalization of the previous co-segmentation model to an arbitrary number of $P\ge  2$ images.

\paragraph{Complexity}

A natural extension of \eqref{eq:coseg_model0} for more than two images would be to penalize the average dissimilarity between all image pairs, writing for instance
 \eql{\label{eq:coseg_model2}
    \begin{split}
    J''(u) &	= \sum_{k=1}^{P-1}  \left(\sum_{l=k+1}^{P} S(H_k u^k,H_l u^l) \right) 
     			+ \rho \TV(u^k) \\ 
		& \quad 	 - \delta {\norm{u^k}}_1+ \chi_{[0,1]^{N_k}}(u^k) 
\end{split}}
which would require to compute $\tiny\begin{pmatrix} P\\2 \end{pmatrix}$ similarity terms $S(H_k u^k,H_l u^l)$.
However, the complexity of such a model scales quadratically with the number of images $P$ which is not desirable.

To that end, we consider instead the following barycentric formulation which scales linearly with $P$
\eql{\label{eq:coseg_model1}
\begin{split}
    J'''(u,b) &	= \sum_{k=1}^P  S(H_k u^k,b)  
     			+ \rho \TV(u^k) \\ 
		& \quad - \delta {\norm{u^k}}_1+ \chi_{[0,1]^{N_k}}(u^k) + \chi_{\ge \O}(b) 
\end{split}
}
where $b$ is the estimated barycentric distribution between the histograms of the segmented regions in all images.
Note that in the \emph{unsupervised} case studied in this section, the barycenter $b$ has to be estimated jointly with segmentation variables.

\paragraph{Model properties with optimal transport costs} 

Combining the $O(P^2)$ model~\eqref{eq:coseg_model2} or the linear barycentric formulation~\eqref{eq:coseg_model1} with optimal transport based cost functions $\MK$ and $\MK_\lbd$ results in the scaling problem previously reported with model~\eqref{eq:coseg_model0}, as definitions of $\MK$ \eqref{eq:OT_indicators} and $\MK_\lbd$ \eqref{eq:sinkhorn_distance_nonnormalized} constraint each pair of histograms to have the same mass. 
%
Non-convex formulation or unbalanced transport costs~\cite{Frogner,Vialard15b} should again be considered as a solution, but do not fit in the proposed optimization framework.

\paragraph{Model properties with $\ell_1$} 

In order to circumvent this issue, we consider the $\ell_1$ case instead, \emph{i.e.} using 
\eq{S(H_k u^k,b)  = {\norm{H_k u^k-b}}_1.}
As stated before in paragraph~\ref{sec:wasserstein_vs_l1}, 
the $\ell_1$ distance between normalized histograms can be seen as the total variation distance, a specific instance of the $\MK$ distance that naturally extend to unnormalized histograms.
In this setting, recall that histograms must have the same number of bins $M$ and the exact same feature clusters $\C_\X$
(see Section~\ref{sec:def_feature_histogram}).


\paragraph{Optimization}

The minimization of the functional~\eqref{eq:coseg_model1} for fixed histograms $H_k u^k$ boils down to the smooth Wasserstein barycenter problem studied in \cite{CuturiPeyre_smoothdualwasserstein}.
The authors investigate the dual formulation this primal problem and show that it can be solved by a projected gradient descent on the dual problem. 
They resort to a splitting strategy, defining $P$ primal histogram variables $(b_k \in \R^M)_{k=1..P}$ with the linear constraint $b_1=\ldots=b_P.$
Using a similar approach, one obtain the following primal-dual formulation 
\begin{equation*}
\begin{split}
\hspace*{0mm}
\min_{\substack{u\\[1mm]b}}\;
\max_{\substack{h\\[0.5mm]v}}\; \hspace{-1pt} 
	& \sum_{k=1}^P \dotp{H_k u^k - b}{h_k}  -  \hspace{-1pt} \delta \hspace{-1pt}  \dotp{u^k}{\U_{N_k}}  \hspace{-1pt} +   \hspace{-1pt} \dotp{D_k u^k}{v_k} 
	\\
	&  \qquad -  \chi_{\norm{h_k}_{\infty}\le 1}- \chi_{\norm{v_k}_{\infty,2}\le \rho}
	\\
	& \qquad + \chi_{u^k \in [0;1]^{N_k}} + \chi_{b \ge \O }
\end{split}
\end{equation*}%
which fits the canonic form of Problem~\eqref{pb:primaldual}.
In the above equation, $D_k$ refers to the finite difference operator in the grid $\Om_k$ of image $I^k$.

The algorithm~\eqref{algo:primaldual} requires to compute the Euclidean projector onto 
the nonnegative orthant~\eqref{eq:proj_positive_orthant} 
and on 
the $\ell_\infty$ unit ball (similarly to Eq.~\eqref{eq:proj_Linf2}) 
\eq{
	\Proj_{\norm{.}_\infty\le 1}(h)(i) = \frac{ h(i) }{ \max \{ \abs{h(i)} , 1\} }.
}

\paragraph{Experiments} 
To illustrate the validity of the proposed model, we repeat the toy experiment of Figure~\ref{fig:coseg} in Figure~\ref{fig:coseg3}, where 
the same object is shown in two images with different backgrounds. 
While there is no more constraint on the size of the objects to segment as in Eq.~\eqref{eq:coseg_model0}, the model~\eqref{eq:coseg_model1} is still able to get a good co-segmentation of the data.

\begin{figure}[!htb]
    \begin{center}
    \begin{tabular}{cc}
    \includegraphics[width = 0.45\linewidth]{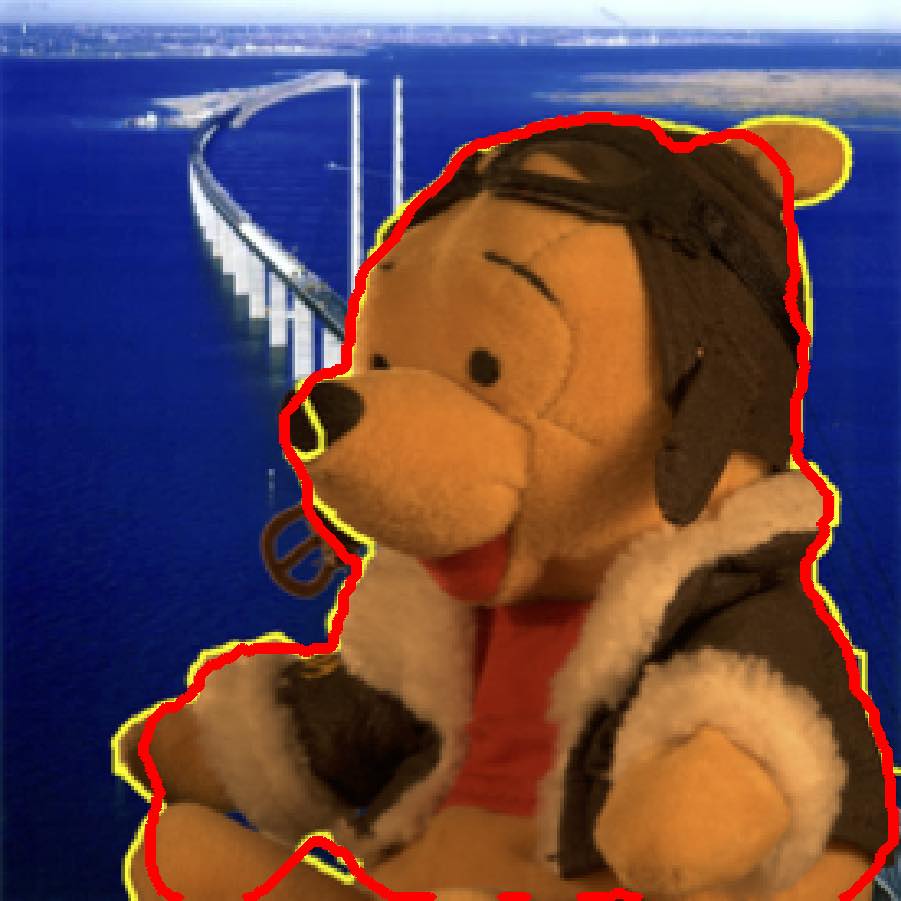}&\hspace{-0.4cm}
      \includegraphics[width = 0.45\linewidth]{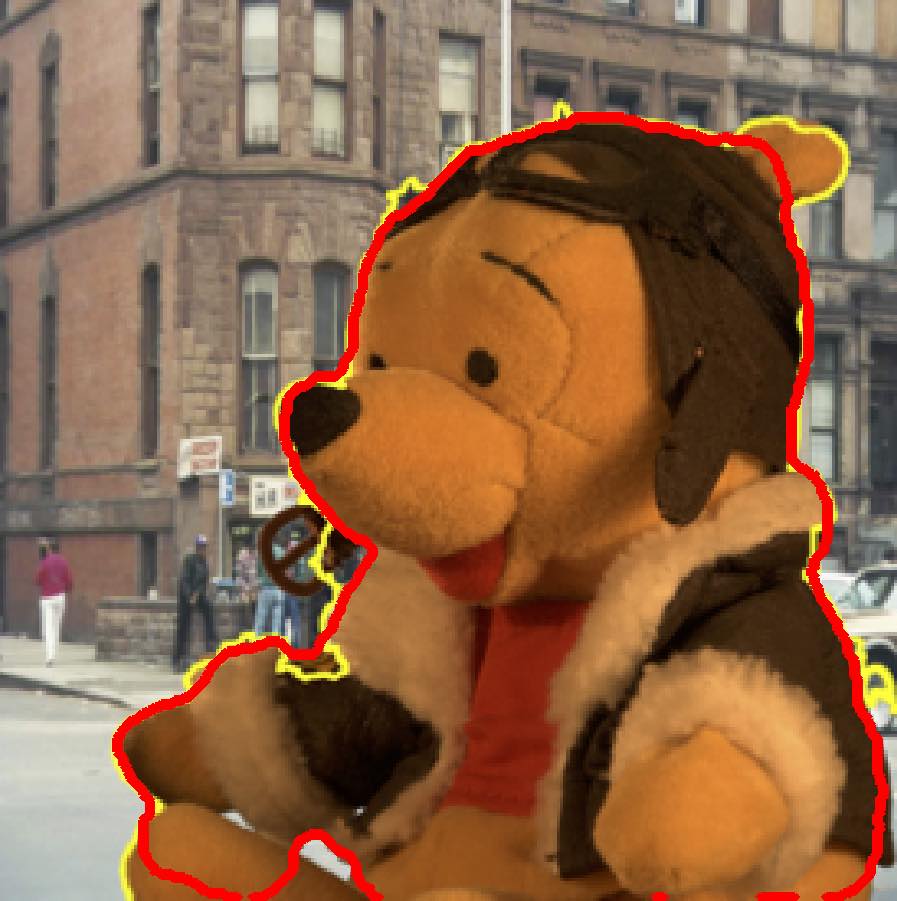}\\
          \includegraphics[width = 0.45\linewidth]{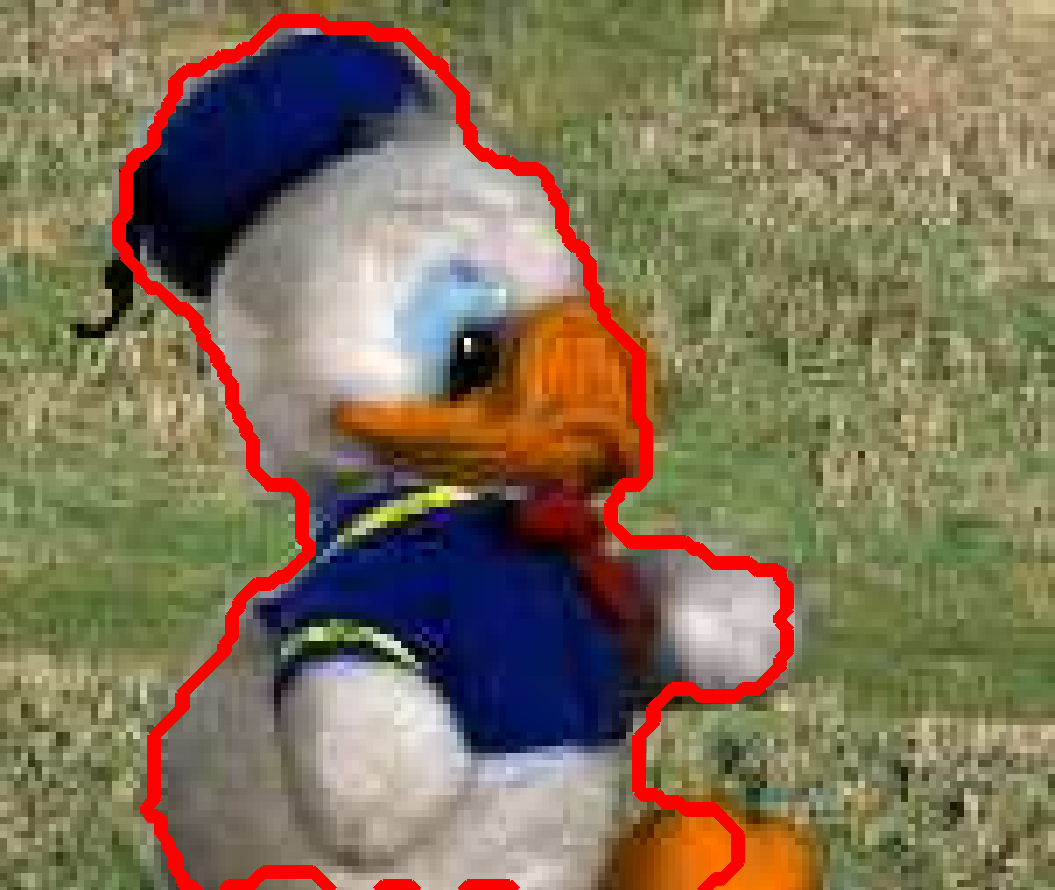}&\hspace{-0.4cm}
      \includegraphics[width = 0.45\linewidth]{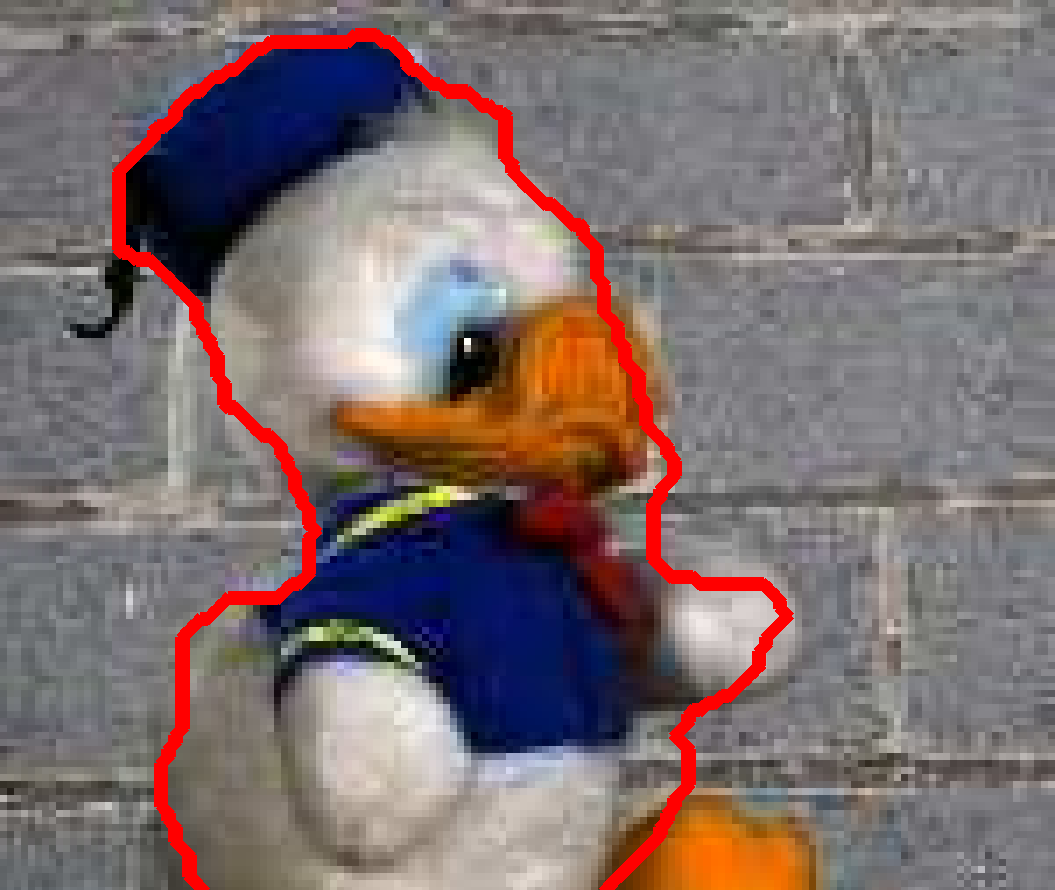}
    \end{tabular}
    \end{center}
    \caption{\label{fig:coseg3}  
    Examples of co-segmentation of $P=2$ images with model \eqref{eq:coseg_model1}. The objects to segment have the same scale.  Comparing with the first line with~Figure \ref{fig:coseg}, results similar to the model~\eqref{eq:coseg_model0} and the method~\cite{Swoboda13} (in yellow), whereas no constraint is considered here on the size of the regions.
   \vspace*{5mm}~
   }
\end{figure}

\begin{figure*}[t]
    \begin{center}
    \begin{tabular}{ccccc}
    \includegraphics[width = 0.18\linewidth]{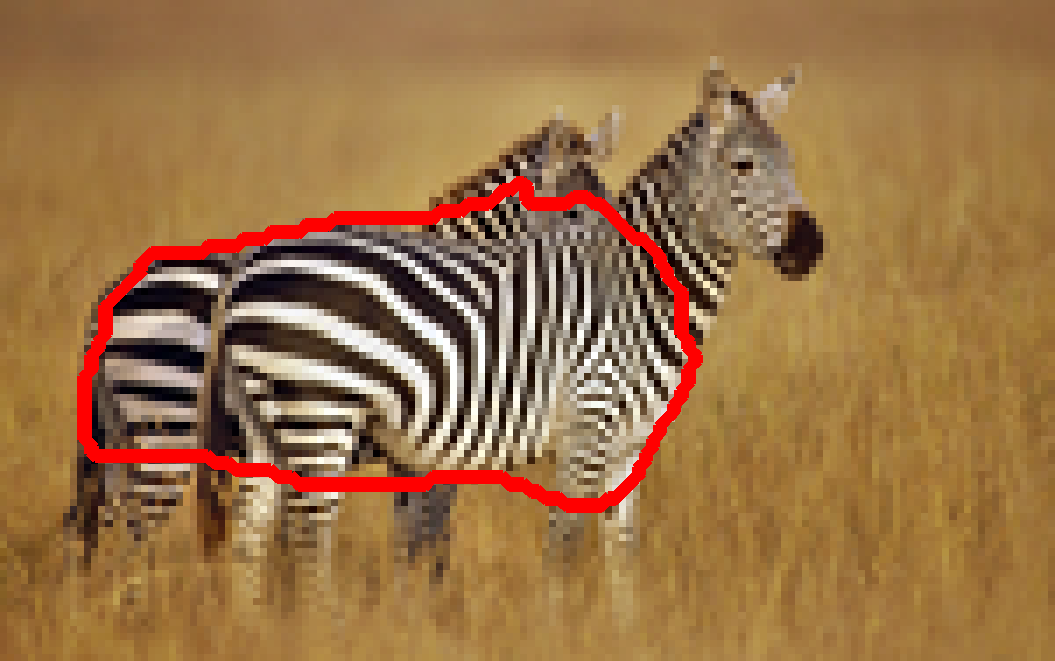}&\hspace{-0.4cm}
      \includegraphics[width = 0.18\linewidth]{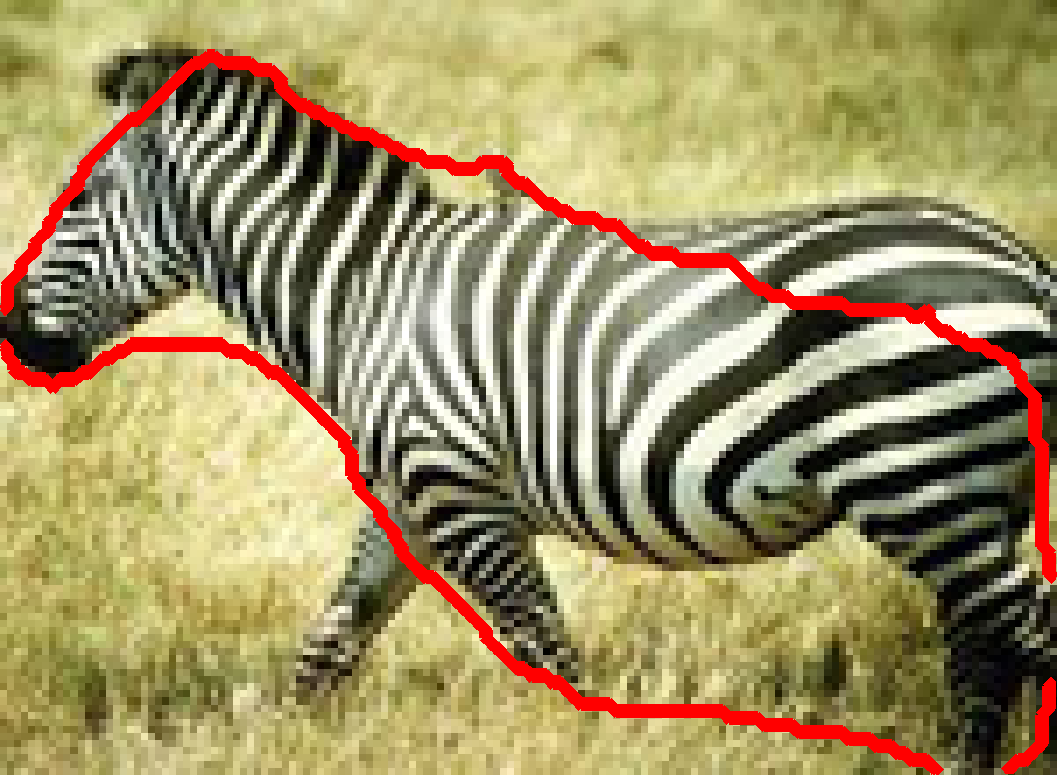}&\hspace{-0.4cm}
          \includegraphics[width = 0.18\linewidth]{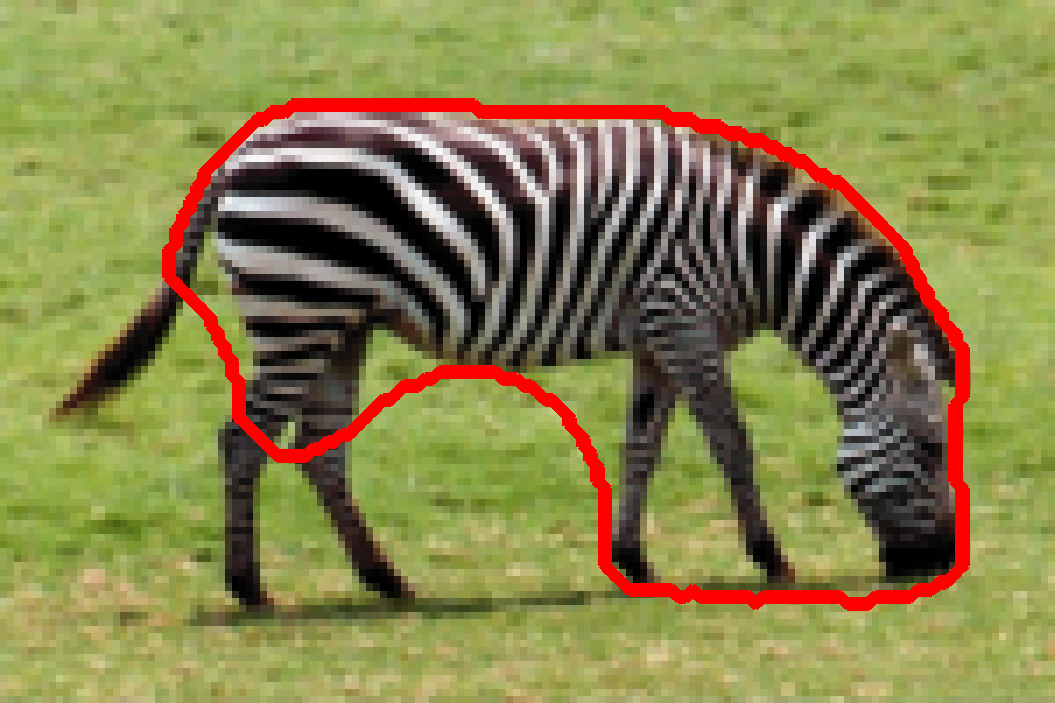}&\hspace{-0.4cm}
      \includegraphics[width = 0.18\linewidth]{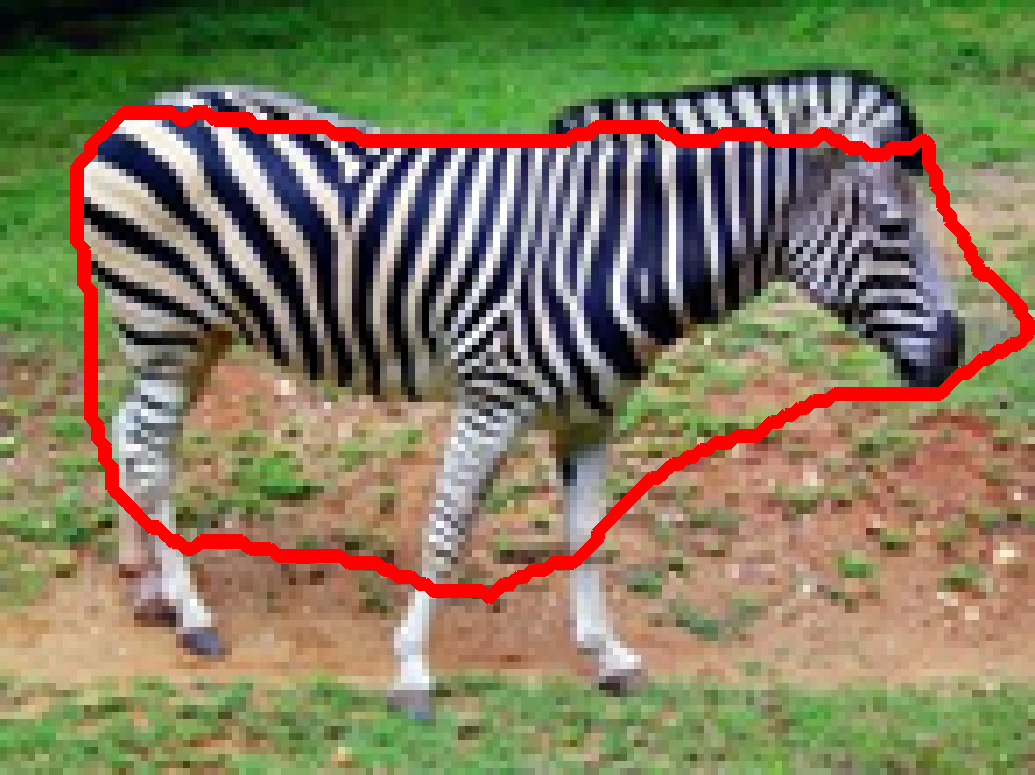}&\hspace{-0.4cm}
      \includegraphics[width = 0.18\linewidth]{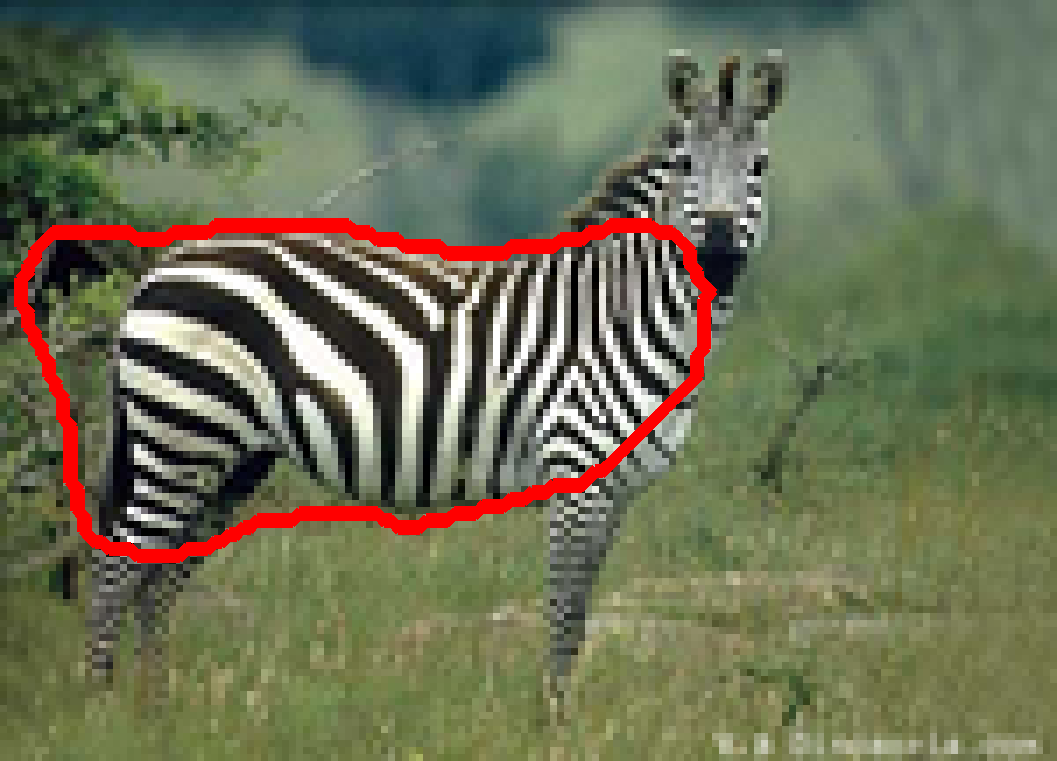}
    \end{tabular}
    \end{center}
\caption{\label{fig:coseg5}  
   Co-segmentation of  $P=5$ images with model \eqref{eq:coseg_model1}.
     \vspace*{5mm}~
}
\end{figure*}


In Figure \ref{fig:coseg5}, 
we illustrate how this new model  is able to segment objects of different scales in $P=5$ images. The area of the zebra can be very different in each image. 
Note that for simplicity the same regularization parameter $\rho$ and ballooning parameter $\delta$ are used for all images in the model~\eqref{eq:coseg_model1}, whereas one should tune separately these parameters according to each image in order to obtain more accurate co-segmentations.
The  histogram $b$ recovered from the model is shown in Figure~\ref{fig:bary}.

\begin{figure}[!htb]
	\centering{\includegraphics[width=7cm]{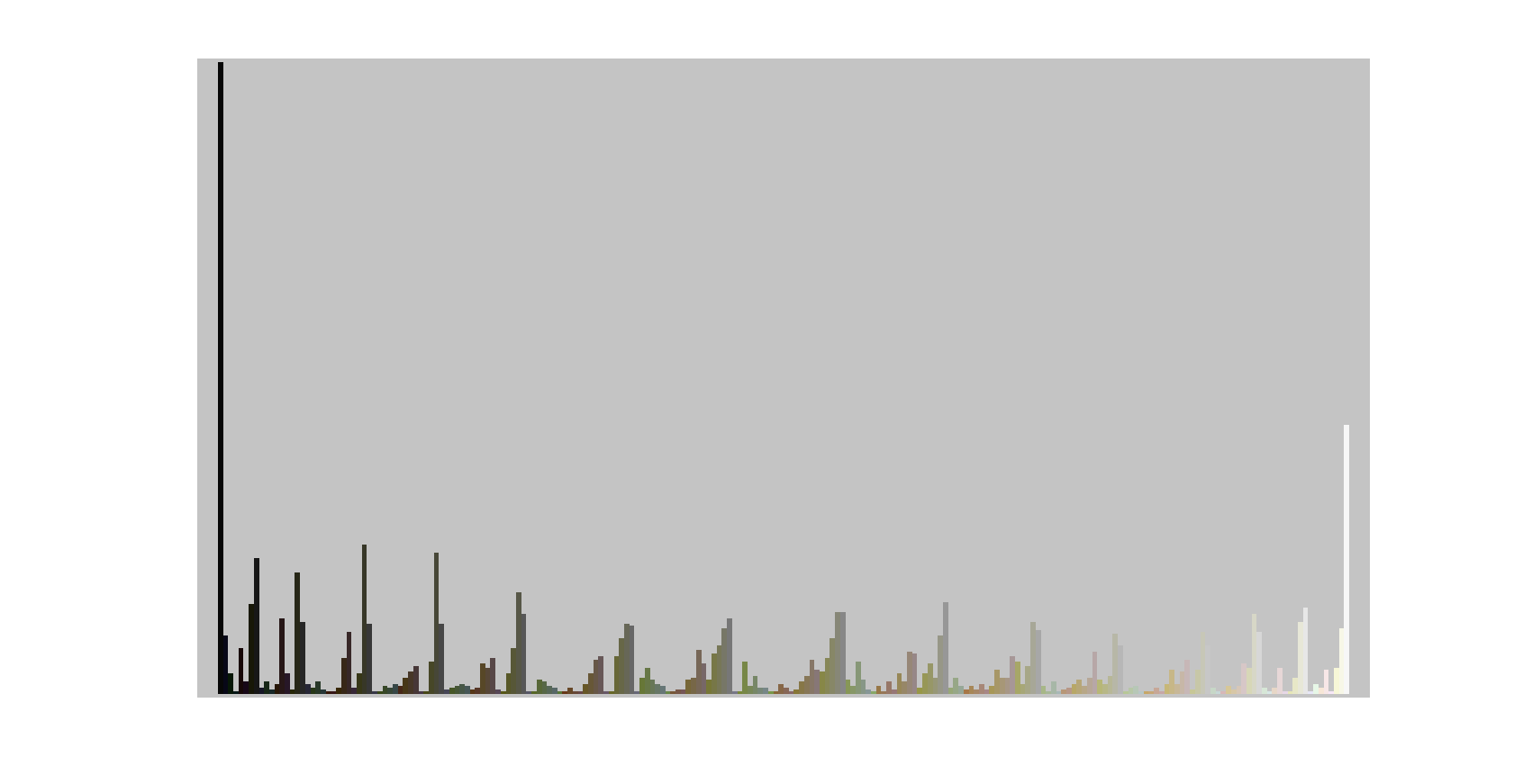}}
	\caption{Learnt barycenter histogram for Figure \ref{fig:coseg5}. It mainly contains black and white colors corresponding to the zebras.
\vspace*{5mm}~}
	\label{fig:bary}
\end{figure}

%
Moreover, it seems necessary to add information on the background for improving these results. 
In the previous examples of Figures~\ref{fig:coseg3} and \ref{fig:coseg5}, the backgrounds were enough different in the different images to be discarded by the model.
As soon as the backgrounds of the co-segmented images contain very similar informations (for instance grey regions outside the gnome in images of Figure~\ref{fig:coseg6}), the ballooning term in Eq.~\eqref{eq:coseg_model1} forces the model to include these areas in the co-segmentations.



\begin{figure}[htb]
    \begin{center}
    \begin{tabular}{ccc}
        \includegraphics[width = 0.3\linewidth]{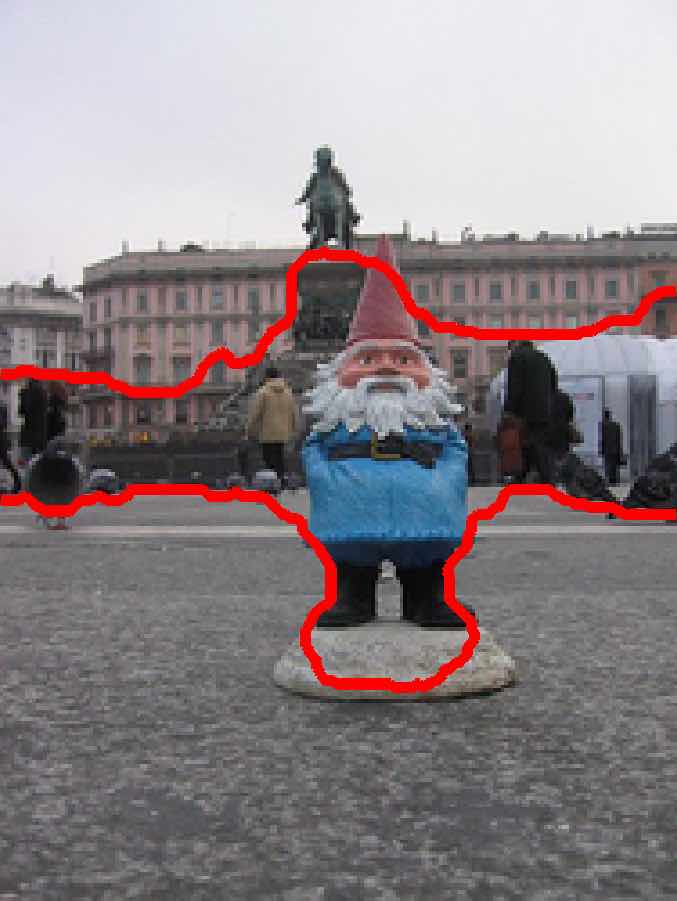}&\hspace{-0.4cm}
      \includegraphics[width = 0.3\linewidth]{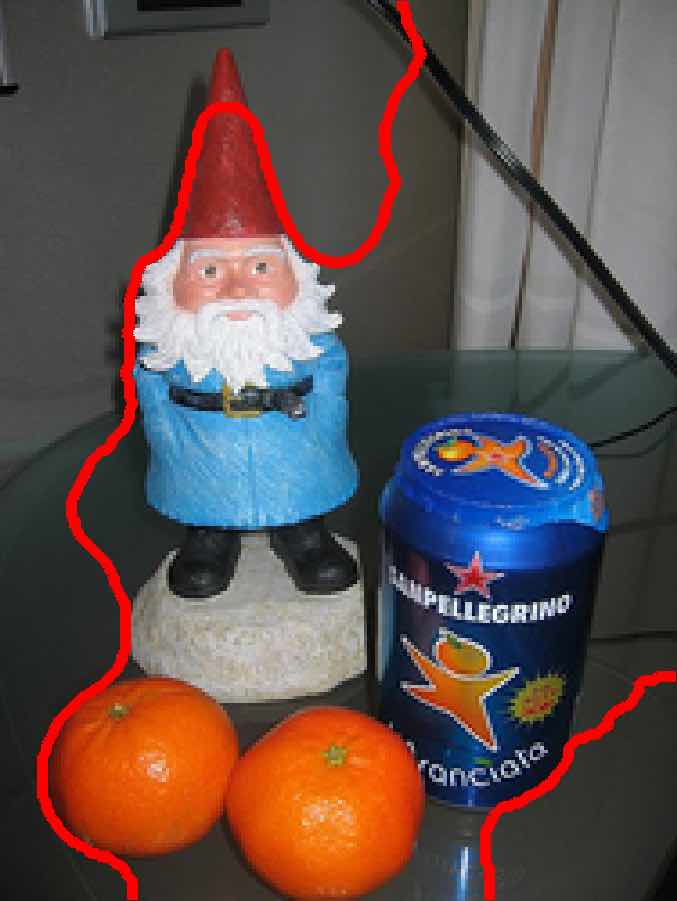}&\hspace{-0.4cm}
          \includegraphics[width = 0.3\linewidth]{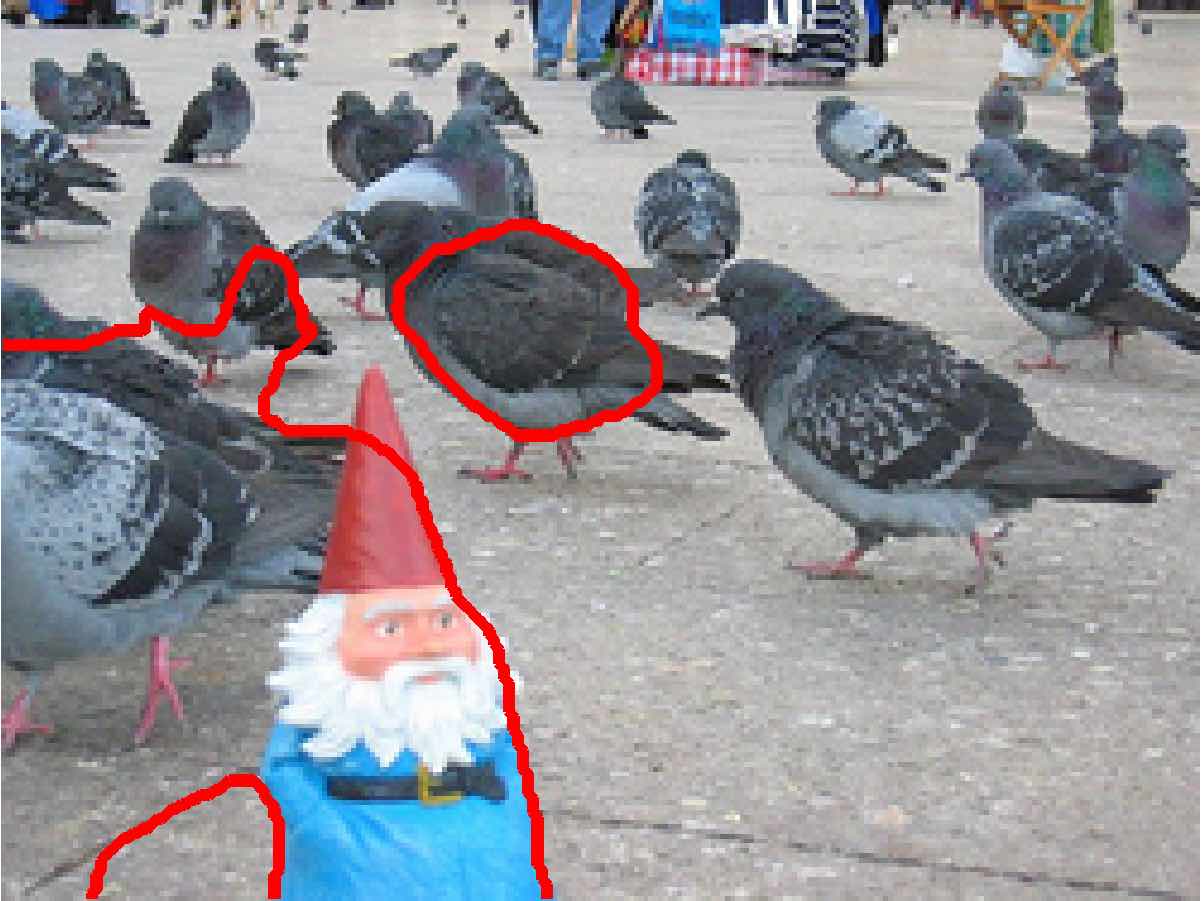}\\
    \end{tabular}
    \end{center}
    \caption{\label{fig:coseg6}  
    Incorrect co-segmentation result in case of similar backgrounds in the two images. 
    \vspace*{10mm}~
    }
\end{figure}

\section{Conclusion and future work}

In this work, several formulations have been proposed to incorporate transport-based cost functions in convex variational models for  supervised segmentation and unsupervised co-segmentation. The proposed framework includes entropic regularization of the optimal-transport distance, and deals with multiple objets as well as collection of images.

As already demonstrated for the image segmentation problem, optimal transport yields significative advantages when comparing feature distribution from segmented regions (robustness with respect to histogram quantization, the possibility to incorporate prior information about feature into the transport cost, definition of a metric between different types of feature, \emph{etc}).
When considering entropic regularization, the algorithmic scheme is yet very similar to the one obtained for the $\ell_1$ norm, 
at the only expense of requiring more memory.
We observed, as acknowledged in~\cite{CuturiPeyre_smoothdualwasserstein}, that such regularization offers practical acceleration for small histograms but also improves robustness to outliers (such as noise or rare features).
However, we also emphasized that large regularization degrade significatively the performance.

The main limitation highlighted and discussed in this work is the lack of scale invariance property for the unsupervised co-segmentation problem due to the convex formulation. In comparison, non-convex formulations of optimal transport with probability distribution such as~\cite{PeyreWassersteinSeg12} yields such invariance, while usual cost functions such as $\ell_1$ offer some robustness~\cite{Vicente_co-seg_eccv10}.
A promising perspective to overcome this restriction it the use of the unbalanced optimal transport framework recently studied in \cite{Vialard15b,Frogner}.

In the future, other potential improvements of the model will be investigated, such as 
the optimization of the final thresholding operation, 
some variable metrics during optimization, 
the use of capacity transport constraint relaxation~\cite{Ferradans_siims14},
the incorporation of other statistical features 
and the integration of additional priors such as the shape of the objects \cite{Schmitzer_jmiv13,Schmitzer_jmiv15}.


\bigskip

\subsection*{Acknowledgements}

The authors acknowledge support from the CNRS in the context of the ``Défi Imag’In'' project CAVALIERI (CAlcul des VAriations pour L'Imagerie, l'Edition et la Recherche d'Images).  
This study has been carried out with financial support from the French State, managed by the French National Research Agency (ANR)
in the frame of the Investments for the future Programme IdEx Bordeaux (ANR-
10-IDEX-03-02). 
The authors would like to thanks Gabriel Peyré and Marco Cuturi for sharing their preliminary work
and Jalal Fadili for fruitful discussions on convex optimization.

\appendix
\section{Appendices}

\subsection{Norm of $K$}
\label{proof:norm_K}
\begin{proof}
We recall that $K=[H^\Trp,A^\Trp,-H^\Trp,-B^\Trp,\Diff^\Trp]^\Trp$
so that
\eq{
\norm{K}\le 2\norm{H} + \norm{A}  + \norm{B} + \norm{\Diff} .
}
For rank one operator $A$ and $B$ we can write 
\eq{
\norm{B} = \max_{\norm{x}=1}  \norm{Bx} =  \max_{\norm{x}=1} \abs{\dotp{x}{\U_N}} \norm{b} =  \norm{\U_N} \norm{b} =  \sqrt{N}\norm{b}
}
and for histogram $b$ subject to $b\ge 0$ and $\dotp{b}{\U}=1$, we have
$
\norm{b}_2 \le \norm{b}_1 = 1
$
and the same for histogram $a$.
For the hard assignment operator
\eq{
	\norm{H} \le \norm{H}_F = \sqrt{\sum_{x\in \Om} \sum_{i=1}^M H_{i,x}^2} = \sqrt{N}
}
where the equality holds when assignment matrix is $H = \U^\Trp_N$.
The finite difference operator $\Diff$ verifies (see for instance \cite{Chambolle2004})
\eq{
	\norm{\Diff} \le \sqrt{8}.
}
Finally, we obtain
\eq{
	\norm{K}\le 4 \sqrt{N}+ \sqrt{8}.
}
\end{proof}

\subsection{Proof of the special $\ell_1$ case}
\label{sec:proof_l1}

\begin{proof}
We consider here that two histograms $(a,b) \in \S$ and the cost matrix $C$ such that $C_{i,j} = 2 (1-\delta_{ij})$.
This cost is only null when not moving mass (that is $j=i$) and constant otherwise, so that an optimal matrix $P \in {\Pp}(a,b)$ must verifies $P_{i,i} = \min{(a_i,b_i)}$ to minimize the transport cost $\dotp{P}{C}$. 
Therefore, we have that 
\eq{
\begin{split}
	\MK(a,b) 
	&= 2 \sum_i \sum_{j \not=i} P_{i,j} = 2 \sum_i (a_i - \min{(a_i,b_i)}) 
	\\
	&= 2 \sum_i (a_i - b_i) \Idc_{a_i > b_j}
	\\
	&= 2 \sum_j (b_j - a_j) \Idc_{a_i < b_j}.
\end{split}
}
The last equality is obtained by symmetry.
Then, adding the two last equalities, we obtain the desired result
\eq{
	\MK(a,b) = \norm{a-b}_1
	.
}
\smartqed
\end{proof}

\subsection{Proof of Corollary \ref{corollary:dual_simplexNmax}}%
\label{proof_coro}

\begin{proof}
For sake of simplicity, 
the notation $\U$ without subindex refers to either the vector $\U_{M}$ or a matrix $\U_{M \times M}$ depending on the context.
Let us consider the problem~\eqref{eq:sinkhorn_distanceN} using Lagrangian multipliers:
\begin{equation*}
\begin{split}
	& \hspace*{-4mm}\MK_{\lambda,\leq N}(a,b)
	\\ 
	= &\min_{P \in {\Pp}(a,b)} \langle P,C \rangle +\tfrac{{1}}\lambda \langle P, \log (P/N) \rangle 
+ {\chi_{\Delta_{\le N}}(a,b)}
	\\
	=&\min_{P\ge \O} \max_{\substack{u, v\\w \leq 0}} 
	\left\{\begin{array}{l} 
		\langle P,\tfrac1\lambda {\log P/N}+ C\rangle 
		+\langle u,a - P \U \rangle 
		\\
		+ \langle v,b - P^\Trp  \U \rangle + w (N-\U^\Trp  P \U) 
	\end{array}\right\} 
  	\\
 	= & \max_{\substack{u, v\\w \leq 0}}  
	\left\{ \begin{array}{l}
		\langle u,a \rangle+ \langle v,b \rangle + Nw \\
		+ \min_{P\ge \O}  \langle P,\frac1\lbd { \log P/N}
		+C - u \U^\Trp  - \U v^\Trp - w\, \U \rangle 
	\end{array}\right\} 
	\\ 
\end{split}
\end{equation*}
using the fact that the (normalized) negative-entropy is continuous and convex.
%
The corresponding Lagrangian is
\begin{equation*}
\begin{split}
	{\cal L}(P,u,v,w) & = \langle u,a \rangle+ \langle v,b \rangle + Nw 
	 \\
	 & \frac{1}{\lbd} \langle P,\log P -\log N+ \lambda(C - u \U_{}^\Trp - \U_{} v^\Trp - w \U_{}) \rangle.
\end{split}
\end{equation*}
The first order optimality condition $\partial_P {\cal L}(P^\star,u,v,w) = \mathbf{0}$ gives:
\begin{equation*}
\begin{split}
	\log P^\star - \log N +  \lbd  (C  -  u \U_{}^\Trp - \U_{} v^\Trp  -  w \U_{}) + \U_{}=\mathbf{0},
\end{split}
\end{equation*}
that is
\begin{equation}\label{eq:optim_P_MK_lbd_N}
P_{i,j}^\star =N e^{-1+\lbd w } \; e^{- \lbd(C_{i,j} - u_i - v_j)} \ge 0.
\end{equation}
%
Using this expression in $\L(P^\star,u,v,w)$
\begin{align}\label{calcul_intermediaire0}
	&\hspace*{-3mm}\MK_{\lambda,\leq N}(a,b)\nonumber\\ 
	=&\max_{u ,v,w \leq 0}\, \langle u,a \rangle+ \langle v,b \rangle + Nw
	- \frac1\lambda\langle P^\star,\U_{}  \rangle 
	\\
	=&\max_{u ,v}  \hspace{-1pt} 
	\left\{ \begin{array}{l}
		\langle u,a \rangle+ \langle v,b \rangle \\
		 +\max_{w \leq 0}  Nw - \frac{N}\lambda e^{-1+\lambda w} \sum_{i,j} e^{ - \lambda (C_{i,j} - u_i - v_j) }
	\end{array}  \hspace{-2pt} \right\}
	\nonumber
\end{align}

Observe that the expression of $P^\star(u,v,w)$ in \eqref{eq:optim_P_MK_lbd_N}, 
that becomes $P^\star = N e^{\lbd w} Q_{\lbd}(u,v)$ using definition \eqref{eq:dual_sinkhorn_nonnormalized_matrix},
is scaled by the Lagrangian variable $w$
which corresponds to the constraint 
$\dotp{P^\star}{\U} \le N$.
We consider now whether or not this equality holds.

\medskip
\noindent \textbf{\bf Case 1: $\dotp{P^\star}{\U} = N$.}
Let us first consider the case where the constraint is saturated, that is when $w<0$ due to the complementary slackness property. 
The maximum of function 
$f(w) = Nw - \frac{N}\lambda e^{\lambda w} \dotp{Q_\lbd}{\U}$ 
which is concave ($\partial_w^2 f(w) < 0$), 
is obtained for $w^\star$ subject to
$$e^{\lambda w^\star} = \frac{1}{\dotp{Q_\lbd}{\U}} = {\left( \sum_{i,j} e^{- \lambda (C_{i,j} - u_i - v_j)-1}\right)}^{-1} \le 1.$$
One can check that the equality $\sum_{i,j} P_{i,j}^\star = N$ is indeed satisfied.
In addition, the maximum of $f$ verifies
\begin{equation*}
\begin{split}
	f(w^\star)
	&= N w^\star - \frac{N}\lambda =  \frac{N}\lambda (\lambda w^\star-1) = \frac{N}\lambda \log e^{\lambda w^\star -1} 
	\\
	&= - \frac{N}{\lambda} \log \dotp{Q_\lbd}{\U} - \frac{N}\lambda 
\end{split}
.
\end{equation*}
%
The problem~\eqref{calcul_intermediaire0} becomes
\begin{equation}
\begin{split}
	& \hspace*{0mm}
	\MK_{\lambda,\leq N}(a,b)\\
	& \quad = \max_{u,v} \;\; 
	\langle u,a \rangle +  \langle v,b \rangle  -  \frac{ N}\lambda {  \log}  \sum_{i,j}  e^{ - \lambda (C_{i,j} - u_i - v_j) } 
	.
\end{split}
\end{equation}

From the definition of the Legendre-Fenchel transformation, this implies that 
\eq{
	\MK_{\lambda,\leq N}(u,v)=\left(\frac{N}\lambda  {  \log}\left( \sum_{i,j} e^{ - \lambda (C_{i,j} - u_i - v_j) }  \right)\right)^*
.}
As these functions are convex, proper and lower semi-\-continuous, we have that $\MK^{**}_{\lambda,N}=\MK_{\lambda,N}$ which concludes the proof for the case $\dotp{Q_\lbd(u,v)}{\U} \ge 1$.

\medskip
\noindent \textbf{\bf Case 2: $\dotp{P^\star}{\U} < N$.}
Now we consider the case where the constraint is not saturated, \emph{i.e.} $w=0$.
The expression of $P^\star(u,v,w)$ in \eqref{eq:optim_P_MK_lbd_N} becomes $P^\star = N Q_{\lbd}(u,v)$.
Going back to relation \eqref{calcul_intermediaire0}, we have directly 
\eq{
	\MK_{\lambda,\leq N}(u,v) 
	=\max_{u,v}  \;\; \langle u,a \rangle+ \langle v,b \rangle 
- \frac{N}\lambda \dotp{Q_\lbd(u,v)}{\U} 
}
which concludes the proof for the case $\dotp{Q_\lbd(u,v)}{\U} \le 1$.


\end{proof}

\subsection{Proof of proposition \ref{prop:lipschitz_dual}}
\label{sec:proof_lipschitz_dual}

\begin{proof}
The derivative $\nabla \MK^{*}_{\lbd,\le N} (X) $ with $X=(u;v)$  is lipschitz continuous \emph{iff} there exists $L_{\MK^*}>0$ such that 
\eq{
	\lVert  \nabla \MK^{*}_{\lbd,\le N}(X)-\nabla \MK^{*}_{\lbd,\le N}(X') \rVert\leq L_{\MK^*}\lVert X-X'\rVert.
}
We denote as ${\cal U}$ the set of vectors $X=(u;v) \in \R^{2M}$ such that 
$\dotp{Q_\lbd(u,v)}{\U} > 1$ (where $Q_\lbd$ is defined in Eq.~\eqref{eq:dual_sinkhorn_nonnormalized_matrix}).
We denote ${\cal V = U}^c$ the complement of $\cal U$ in $\R^{2M}$.
{Observe that the set $\cal V$ is convex, as it corresponds to a sublevel set of the convex function $\MK^{*}_{\lbd,\le N}$.
}

Due to the expression of the gradient in \eqref{eq:dual_derivative_inegality} that is different on sets $\cal U$ and $\cal V$, we will consider the following three cases. 

\bigskip
\noindent \textbf{\bf Case 1.}
Let $X,X'\in \cal U$.
As  $\nabla \MK^{*}_{\lbd,\le N}$ is derivable in the set $\cal U$, it is a lipschitz function \emph{iff} the norm of the Hessian matrix $\H$ of $\MK^{*}_{\lbd,\le N}$ is bounded. 
Denoting $\{\mu_i\}_{i=1}^{2M}$ the eigenvalues of $\H$, its $\ell_2$ norm is defined as $\lVert\H\rVert=\max_i |\mu_i|$. 
Moreover, as $\MK^{*}_{\lbd,\le N}$ is convex, all eigenvalues are non negative.
Thus, we have that the norm of $\H$ is bounded by its trace: $\lVert\H\rVert\leq \Tr(\H)=\sum_i \mu_i=\sum_i \H_{ii}$.

The Hessian matrix $\H$ of $\MK^{*}_{\lbd,\le N}$ is defined as:
\eq{
	\H=\begin{bmatrix}\H^{11}&\H^{12}\\\H^{21}&\H^{22}\end{bmatrix},
	\quad \text{with } \quad \H^{mn}=\nabla_m \nabla_n^\Trp \MK^{*}_{\lbd,\le N}.
}
Combining Equations \eqref{eq:dual_derivative_inegality} and \eqref{eq:dual_sinkhorn_nonnormalized_matrix}, we have
\begin{equation*}
\begin{split}
	& \hspace*{-1mm} (\nabla_1 \MK^{*}_{\lbd,\le N}(u,v))_i
	=\partial_{u_i} \MK^{*}_{\lbd,\le N}(u,v)
	\\
	& \quad = { N} \left( \frac{Q_\lbd (u,v) \U_{}}{\dotp{Q_\lbd (u,v)}{\U}} \right)_i
	= {N} \frac{\sum_l  e^{- \lambda (C_{i,l} - u_i - v_l) }}{\sum_{k,l}  e^{- \lambda (C_{k,l} - u_k^{} - v_l) }}.
\end{split}
\end{equation*}
Hence the diagonal elements of the matrix  read
\begin{equation*}
\begin{split}
	\H^{11}_{ii}&=\partial^2_{u_i} \MK^{*}_{\lbd,\le N}(u,v)\\&=\lambda {N} \frac{\sum_l e^{- \lambda (C_{i,l} - u_i-v_l) }{\sum_{k\neq i,l}  e^{- \lambda (C_{k,l} - u_k^{} -v_l) }}}{ {\left(\sum_{k,l}  e^{- \lambda (C_{k,l} - u_k^{}- v_l) } \right)}^2 },\\
	\H^{22}_{jj}&=\partial^2_{v_j} \MK^{*}_{\lbd,\le N}(u,v)\\&=\lambda {N} \frac{\sum_k^{} e^{- \lambda (C_{k,j} - u_k^{}-v_j) }{\sum_{k,l\neq j}  e^{- \lambda (C_{k,l} - u_k^{} -v_l) }}}{ {\left(\sum_{k,l}  e^{- \lambda (C_{k,l} -u_k^{}- v_l) } \right)}^2 }.
\end{split}
\end{equation*}
Computing the trace of the matrix $\H$, we obtain
\eq{
	\lVert\H\rVert\leq \Tr(\H)=\sum_i \H^{11}_{ii}+\sum_j\H^{22}_{jj}\leq 2\lambda N.
}

\bigskip
\noindent \textbf{\bf Case 2.}
We now consider $X,X'\in \cal V$.
In this case, we have for $X=(u,v)$:
\eq{
	\MK^*_{\lbd,\le N} (u,v)\hspace{-0.05cm}  =\hspace{-0.05cm}  N \dotp{Q_\lbd(u,v)}{\U}\hspace{-0.05cm}  =\hspace{-0.05cm} \frac{N}\lambda \sum_{i,j} e^{-1-\lambda (C_{i,j} -u_i - v_j)}.
}
As the second partial derivative with respect to $u_i$ reads $\partial_{u_i}^2\MK^*_{\lbd,\le N} (u,v)=N\lambda\sum_j e^{-1-\lambda (C_{i,j} -u_i - v_j)}$,
the trace of the Hessian matrix is: 
\begin{equation*}
\begin{split}
	\Tr(\H)
	&\hspace{-1pt}=\hspace{-2pt}N \lambda\hspace{-2pt}\left(\hspace{-1pt}\sum_{i,j} \hspace{-1pt}e^{-1-\lambda (C_{i,j} -u_i - v_j)}\hspace{-2pt}+\hspace{-2pt}\sum_{j,i}\hspace{-2pt} e^{-1-\lambda (C_{i,j} -u_i - v_j)}\hspace{-3pt}\right)
	\\
	& = 2\lambda N \dotp{Q_\lbd(u,v)}{\U} \leq 2\lambda N,
\end{split}
\end{equation*}
since $(u,v)\in \cal V$.

\bigskip
\noindent \textbf{\bf Case 3.} We consider $X\in \cal U$ and $X'\in \cal V$. 
As $\cal V$ is a convex set and $U$ its complement, we denote as $Y$ the vector that lies in the segment $[X; X']$ and belongs to $\partial V$, the boundary of $\cal V$, 
that is satisfying $\dotp{Q_\lbd(Y)}{\U} = 1$.
We thus have 
$\lVert X-Y \rVert+\lVert X'-Y\rVert=\lVert X-X'\rVert$ so that
\begin{equation}
\begin{split}
    & \hspace*{0mm} \norm{ \nabla \MK^{*}_{\lbd,\le N}(X) - \nabla \MK^{*}_{\lbd,\le N}(X') }
    \\
    & 		\quad \leq \norm{ \nabla \MK^{*}_{\lbd,\le N}(X)  - \nabla \MK^{*}_{\lbd,\le N}(Y)}  
    \\
    & 		\qquad + \norm{ \nabla \MK^{*}_{\lbd,\le N}(X') - \nabla \MK^{*}_{\lbd,\le N}(Y)} 
    \\
    & \quad \leq 2\lambda N (\norm{ X-Y} + \norm{X'-Y})
    	= 2\lambda N \norm{X-X'}
\end{split}
\end{equation}
which concludes the proof.

\end{proof}

\subsection{Proof of proposition \ref{prop:prox_bidual}}
\label{proof:prox_bidual}

\begin{proof}
We are interested in the proximity operator of $g$, which convex conjugate is
$
	g^*(q) = \frac{N}{\lambda} \dotp{e^{\lambda (q-c) - \U}}{\U}
$.
First, notice that the proximity operator of $g$ can be computed easily from the proximity operator of $g^*$ through Moreau's identity:
\eq{
	\Prox_{\tau g} (p)+\tau \Prox_{g^*/\tau} (p/\tau) = p 
	\quad \; \forall \, \tau >0, \forall \,p.
}
We now recall that the Lambert function $W$ is defined as:
$$
z = we^w \Leftrightarrow w = W(z)
$$
where $w$ can take two real values for $z\in ]-\frac1e,0]$, and only one on $]0,\infty[$, as illustrated in Figure \ref{fig:lambert}.
As $z$ will always be positive in the following, we do not consider complex values.
\begin{figure}[!htb]
	\centering
	\includegraphics[width=0.45\textwidth]{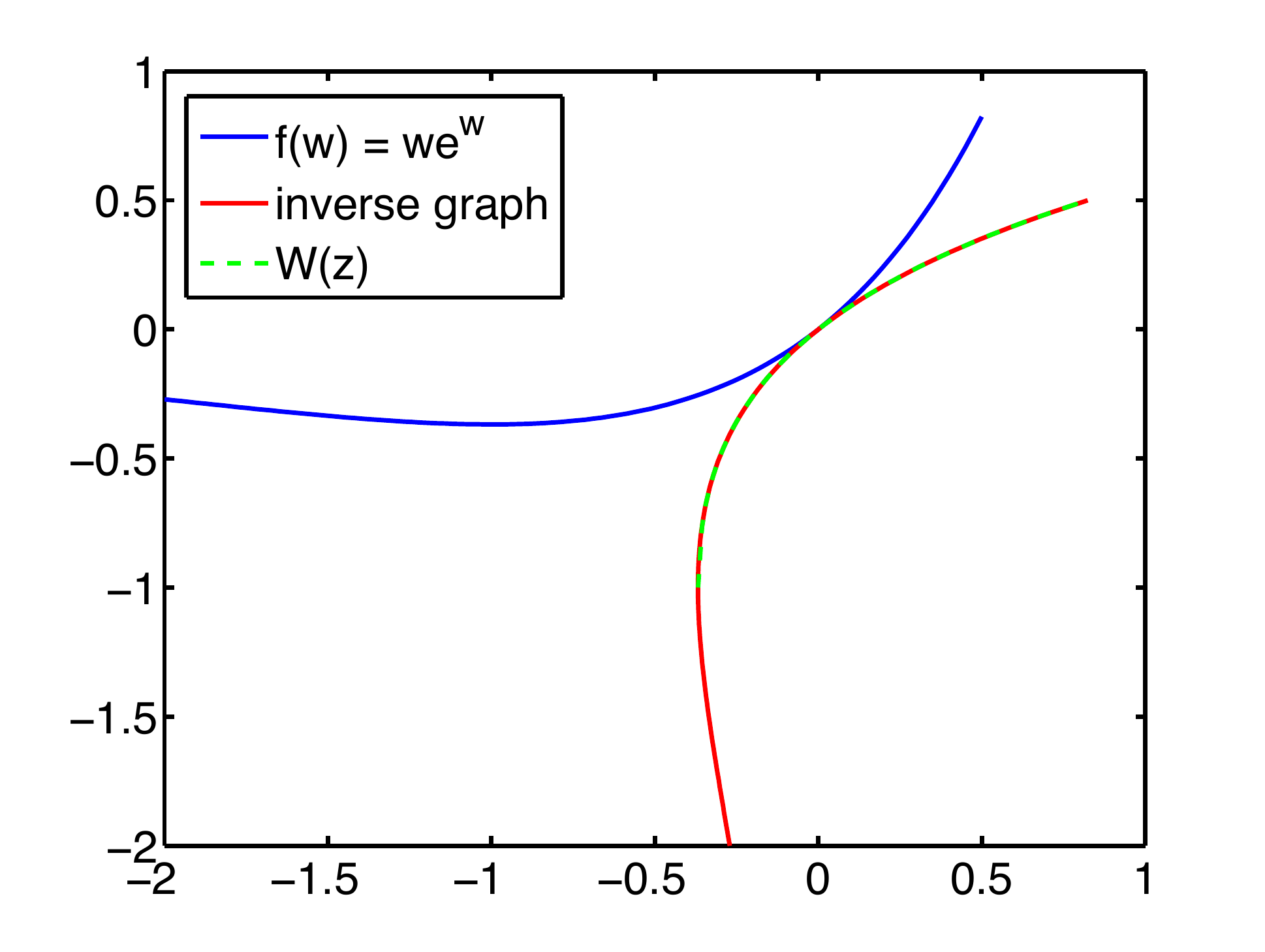}
	\caption{Graph of the Lambert function $W(z)$.\vspace*{5mm}~
	}
	\label{fig:lambert}
\end{figure}

The proximity operator of $g^*$ at point $p$ reads (as $g^*$ is convex, the $\Prox$ operator is univalued):
\eq{
\begin{split}
	\Prox_{\tau g^*}(p) & =q^\star \in  \uargmin{q} \tfrac{1}{2\tau}\norm{q-p}^2+g^*(q) 
	\\
	& =\uargmin{q} \sum_k^{}\tfrac{1}{2\tau}{(q_k^{}-p_k^{})}^2+\tfrac{N}{\lambda} e^{\lambda(q_k^{} - c_{k})-1}.
\end{split}
}
This problem is separable and can be solved independently $\forall \, k$. 
Deriving the previous relation with respect to $q_k^{}$, the first order optimality condition gives:
\eq{
\begin{split}
&q^\star_k-p_k^{}+\tau N e^{\lambda(q^\star_k - c_{k})-1}=0
\\
\Leftrightarrow&(p_k^{}-q_k^*)e^{-\lambda q^\star_k}=\tau N e^{-\lambda c_{k}-1}
\\
\Leftrightarrow&\lambda(p_k^{}-q_k^*)e^{\lambda (p_k^{}-q_k^*)}
= \lambda\tau N e^{\lambda (p_k^{}-c_{k})-1}.
\end{split}
}
Using Lambert function, we get:
\eq{
\begin{split}
&\lambda(p_k^{}-q^\star_k)=W(\lambda\tau N e^{\lambda (p_k^{}-c_{k})-1})\\
\Leftrightarrow&q_k^*=p_k^{}-\frac1\lambda W(\lambda\tau N e^{\lambda (p_k^{}-c_{k})-1}).
\end{split}
}
The proximity operator of $g^*/\tau$ thus reads
\eq{
\text{prox}_{g^*/\tau}(p) = p - \frac{1}{\lambda} W\left(  \tfrac\lbd\tau  N e^{\lambda (p-c) - 1}  \right),
}
hence
\eq{
\text{prox}_{\tau g}(p) = \frac{\tau}{\lbd} W\left(  \tfrac\lbd\tau  N e^{\lambda (\tfrac{p}\tau -c) - 1}  \right).
}


\end{proof}


\bibliographystyle{abbrv}
\bibliography{refs}

\end{document}